\documentclass{article}

% if you need to pass options to natbib, use, e.g.:
%     \PassOptionsToPackage{numbers, compress}{natbib}
% before loading neurips_2021

% ready for submission
%\usepackage[final]{neurips_2021}

% to compile a preprint version, e.g., for submission to arXiv, add add the
% [preprint] option:
     \usepackage[preprint,nonatbib]{neurips_2021}

% to compile a camera-ready version, add the [final] option, e.g.:
%     \usepackage[final]{neurips_2021}

% to avoid loading the natbib package, add option nonatbib:
%    \usepackage[nonatbib]{neurips_2021}

\usepackage[utf8]{inputenc} % allow utf-8 input
\usepackage[T1]{fontenc}    % use 8-bit T1 fonts
\usepackage{hyperref}       % hyperlinks
\usepackage{url}            % simple URL typesetting
\usepackage{booktabs}       % professional-quality tables
\usepackage{amsfonts}       % blackboard math symbols
\usepackage{nicefrac}       % compact symbols for 1/2, etc.
\usepackage{microtype}      % microtypography
\usepackage{xcolor}         % colors

\usepackage{cmap}
\usepackage[english]{babel}
\usepackage{amssymb,amsmath,amsthm}
\usepackage{amsthm}
\usepackage{bbm}
\usepackage{microtype}
\usepackage{xcolor}

% list management
\usepackage{enumitem}
\usepackage[olditem,oldenum]{paralist}
\usepackage{tikz}
\usetikzlibrary{shapes.geometric}
\usepackage{mathrsfs}
\makeatletter
\newcommand*\bigcdot{\mathpalette\bigcdot@{.6}}
\newcommand*\bigcdot@[2]{\mathbin{\vcenter{\hbox{\scalebox{#2}{$\m@th#1\bullet$}}}}}
\makeatother

\DeclareRobustCommand{\score}[2]{%
  \pgfmathsetmacro\pgfxa{#1 + 1}%
  \tikzstyle{scorestars}=[star, star points=5, star point ratio=2.25, draw, inner sep=1.3pt, anchor=outer point 3]%
  \begin{tikzpicture}[baseline]
    \foreach \i in {1, ..., #2} {
      \pgfmathparse{\i<=#1 ? "yellow" : "gray"}
      \edef\starcolor{\pgfmathresult}
      \draw (\i*1.75ex, 0) node[name=star\i, scorestars, fill=\starcolor]  {};
   }
  \end{tikzpicture}%
}

\usepackage{algorithm,algorithmic} % pseudocode support
\usepackage{subfigure}
\usepackage{booktabs}
% hyperref makes hyperlinks in the resulting PDF.
% If your build breaks (sometimes temporarily if a hyperlink spans a
% page)
% please comment out the following usepackage line and replace
% \usepackage{icml2021} with \usepackage[nohyperref]{icml2021} above.
\usepackage{hyperref}

% Attempt to make hyperref and algorithmic work together better:
%\newcommand{\theHalgorithm}{\arabic{algorithm}}

%\usepackage{natbib}
% Use the following line for the initial blind version submitted for
% review:
%\usepackage{icml2021}
% If accepted, instead use the following line for the camera-ready
% submission:
%\usepackage[accepted]{icml2021}

\usepackage{graphicx}
\graphicspath{ {./figure4paper/} }

\newcommand{\option}[1]{{\color[rgb]{.4,0,.8}#1}} %blue for working version: optional stuff
\newcommand{\authorcomment}[2]{{\color[rgb]{#1}#2}}
\newcommand{\NDY}[1]{\authorcomment{0.0,0.8,0.4}{[NdY: #1]}}

%%REMOVE COMMENTS FOR FINAL VERSIONS (though some spacing may be incorrect around comments)
%\renewcommand{\TODO}[1]{}
%\renewcommand{\todo}[1]{}
\renewcommand{\option}[1]{#1}
\renewcommand{\NDY}[1]{}

\newcommand{\augmentelargeur}[1]{
\addtolength{\evensidemargin}{-#1}
\addtolength{\oddsidemargin}{-#1}
\addtolength{\textwidth}{#1}
\addtolength{\textwidth}{#1}
}
\augmentelargeur{15mm}

\newtheorem{thm}{Theorem} %should be covered by ICML model class
\newtheorem{proposition}[thm]{Proposition}
\newtheorem{definition}[thm]{Definition}
\newtheorem{remark}[thm]{Remark}

\newtheorem*{notation*}{Notation}

\DeclareMathOperator*{\argmax}{arg\,max}

\newcommand{\abs}[1]{\left\lvert#1\right\rvert}
\newcommand{\norm}[1]{\left\lVert#1\right\rVert}

\newcommand{\1}{\mathbbm{1}}  %indicator function
\def\d{\operatorname{d}\!{}}  % for dx, dy

\def\R{{\mathbb{R}}}
\def\C{{\mathbb{C}}}
\newcommand{\deq}{\mathrel{\mathop{:}}=}  % For definitions x:=...
  % for right-hand-side defs ...=:x
\newcommand{\from}{\colon} % correct ':' in f\from X \to Y
\def\eps{\varepsilon}
\renewcommand{\epsilon}{\varepsilon}
\renewcommand{\phi}{\varphi}
\let\oldPr\Pr
\renewcommand{\Pr}{\oldPr\nolimits}
\DeclareMathOperator{\E}{\mathbb{E}}  % for expectations
\DeclareMathOperator{\Var}{Var}
\DeclareMathOperator{\Cov}{Cov}
\DeclareMathOperator{\Id}{Id}

\DeclareMathOperator{\diag}{diag}
\DeclareMathOperator{\rank}{rank}
\newcommand{\transp}[1]{#1^{\!\top}\!} %matrix transpose
\DeclareMathOperator{\Succ}{Succ}

\def\del{\operatorname{\delta}\hspace{-.4ex}{}} %for \delta M, \delta V,...

\newcommand{\lipnorm}[1]{\norm{#1}_\mathrm{Lip}}
\newcommand{\KRnorm}[1]{\norm{#1}_\mathrm{KR}}

\title{Learning One Representation to Optimize All Rewards}

% The \author macro works with any number of authors. There are two commands
% used to separate the names and addresses of multiple authors: \And and \AND.
%
% Using \And between authors leaves it to LaTeX to determine where to break the
% lines. Using \AND forces a line break at that point. So, if LaTeX puts 3 of 4
% authors names on the first line, and the last on the second line, try using
% \AND instead of \And before the third author name.

\author{
  Ahmed Touati\thanks{Work done during an internship at Facebook Artificial Intelligence Research Paris.} \\
  Mila, University of Montreal\\
  \texttt{ahmed.touati@umontreal.ca} \\
  % examples of more authors
  \And
  Yann Ollivier \\
  Facebook Artificial Intelligence Research\\
  Paris\\
  \texttt{yol@fb.com} \\
  % \AND
  % Coauthor \\
  % Affiliation \\
  % Address \\
  % \texttt{email} \\
  % \And
  % Coauthor \\
  % Affiliation \\
  % Address \\
  % \texttt{email} \\
  % \And
  % Coauthor \\
  % Affiliation \\
  % Address \\
  % \texttt{email} \\
}

\begin{document}

\maketitle

\begin{abstract}
We introduce the \emph{forward-backward} (FB) representation of the dynamics of a reward-free Markov
decision process. It  provides explicit near-optimal policies for any reward
specified a posteriori. During an
unsupervised phase, we
use reward-free interactions with the environment to learn two
representations via off-the-shelf deep learning methods and temporal
difference (TD) learning. In the test phase, a reward representation
is estimated either from reward observations or an explicit reward
description (e.g., a target state). The optimal policy for that
reward is directly obtained from these representations, with no planning.
We assume access to an exploration scheme or replay buffer for the first phase.

The corresponding unsupervised loss is well-principled: if training is perfect, the
policies obtained are provably optimal for any reward function.  With
imperfect training, the sub-optimality is proportional to the
unsupervised approximation error. The FB representation learns
long-range relationships between states and actions, via a predictive
occupancy map, without having to synthesize states as in model-based
approaches.

This is a step towards learning controllable agents in arbitrary
black-box stochastic environments. This approach compares well to
goal-oriented RL algorithms on discrete and continuous mazes, pixel-based
MsPacman, and the FetchReach virtual robot arm. We also illustrate how
the agent can immediately adapt to new tasks
beyond goal-oriented RL.~\footnote{Code: \url{https://github.com/ahmed-touati/controllable_agent}}
\end{abstract}

\section{Introduction}

We consider one kind of unsupervised
reinforcement learning problem: Given a Markov decision process (MDP) but no reward
information, is it possible to learn and store a compact object that, for
any reward function specified later, provides the optimal policy for that
reward, with a minimal amount of additional computation?  In a sense,
such an object would encode in a compact form the solutions of all possible planning problems in
the environment. This is a step towards building agents
that are fully controllable after first exploring their environment in an
unsupervised way.

Goal-oriented RL methods~\cite{andrychowicz2017hindsight, plappert2018multi} compute policies for a series of
rewards specified in advance (such as reaching a set of target states), but
cannot adapt in real time to new rewards, such as weighted combinations
of target states or dense rewards.

Learning a model of the world is another possibility, but it still
requires explicit planning for each new reward; moreover, synthesizing
accurate trajectories of states over long time ranges has proven
difficult~\cite{talvitie2017self, ke2018modeling}.

Instead, we exhibit an object that is both simpler to learn than a model of the world, and contains the information to recover near-optimal policies for any reward provided a posteriori, without a planning phase. 

\cite{borsa2018universal} learn optimal policies for all rewards that are
linear combinations of a finite number of feature functions provided in
advance by the user. %, thanks to successor features. 
This limits applications: e.g., goal-oriented tasks
would require one feature per goal state, thus using infinitely many
features in continuous spaces.  We reuse a policy parameterization from
\cite{borsa2018universal}, but introduce a novel representation with better
properties, based on state occupancy prediction instead of expected
featurizations.
%using successor states instead of successor features (see \emph{Related
%work}), together with 
We use theoretical advances on successor state learning from
\cite{successorstates}. We obtain the following.

\begin{compactenum}[\hspace{0pt}\textbullet]
    \setlength{\itemsep}{0pt}
\item We prove the existence of a learnable ``summary'' of a reward-free
discrete or continuous MDP, that provides an explicit formula for optimal
policies for any reward specified later. This takes the form of a pair of
representations $F\from S\times A\times Z\to Z$ and $B\from S\times A\to Z$
from state-actions into a representation space $Z\simeq \R^d$, with
policies $\pi_z(s)\deq \argmax_a \transp{F(s,a,z)}z$.
% inspired from \cite{borsa2018universal} \todo{phrasing? the policy
% formula is the same except $z$ is something else...}.
Once a reward is
specified, a value of $z$ is computed from reward values and $B$;
then $\pi_z$ is used.
Rewards may be specified either explicitly as a function, or as target
states, or by samples as in usual RL setups.

\item We provide a well-principled unsupervised loss for $F$ and
$B$. If FB training is perfect, then the policies are provably
optimal for all rewards (Theorem~\ref{thm:main}).  With imperfect
training, sub-optimality is proportional to the FB training error
(Theorems~\ref{thm:approx}--\ref{thm:pointwiseapprox}). \option{In finite spaces, perfect training is
possible with large enough dimension $d$
(Proposition~\ref{prop:existence}).}

Explicitly, $F$ and $B$ are trained so that $\transp{F(s,a,z)}B(s',a')$
approximates the long-term probability to reach $s'$ from $s$ if
following $\pi_z$. 
This is akin to a
model of the environment, without synthesizing state trajectories.

\item We provide a TD-like algorithm to train $F$ and $B$ for this unsupervised loss,
with function approximation, adapted from recent methods for
successor states \cite{successorstates}.  
No sparse rewards are used:
every transition reaches some state $s'$, so every step is
exploited.
As usual with TD, learning seeks
a fixed point but the loss itself is not observable.

\item We prove viability of the method on several environments from mazes
to pixel-based MsPacman and a virtual robotic arm.  For single-state
rewards (learning to reach arbitrary states), we provide quantitative
comparisons with
goal-oriented methods such as HER.  (Our method is not a substitute
for HER: in principle they could be combined, with HER improving replay
buffer management for our method.)
For more general rewards, which cannot be tackled a posteriori by trained goal-oriented models, we provide qualitative examples.

\item We also illustrate qualitatively the sub-optimalities
(long-range behavior is preserved but local blurring of rewards occurs)
and the representations learned.
\end{compactenum}

\section{Problem and Notation}
\label{sec:notation}

Let $\mathcal{M}=(S,A,P,\gamma)$ be
%we consider 
a reward-free Markov decision process
%$\mathcal{M}=(S,A,P,\gamma)$ 
with state space $S$ (discrete or continuous),
action space $A$ (discrete for simplicity, but this is not essential),
transition probabilities $P(s'|s,a)$ from state $s$ to $s'$ given action
$a$,
%random reward signal $r(s,a)$ after action $a$ in state $s$, 
and
discount factor $0 < \gamma < 1$ \cite{sutton2018reinforcement}.  If
$S$ is finite,
$P(s'|s,a)$ can be viewed as a matrix; in general, for each $(s,a)\in
S\times A$, $P(\d s'|s,a)$ is a probability measure on $s'\in S$.  The
notation $P(\d s'|s,a)$ covers all cases.
%\option{All functions are assumed to be measurable.}

Given $(s_0,a_0)\in S\times A$ and a policy
$\pi\from S\to \mathrm{Prob}(A)$, we denote
$\Pr(\cdot|s_0,a_0,\pi)$ and $\E[\cdot|s_0,a_0,\pi]$ the probabilities and expectations under state-action sequences
$(s_t,a_t)_{t \geq 0}$ starting with $(s_0,a_0)$ and following policy $\pi$
in the environment, defined by sampling $s_t\sim P(\d
s_t|s_{t-1},a_{t-1})$ and $a_t\sim \pi(s_t)$.

For any policy $\pi$ and state-action $(s_0,a_0)$, define
the \emph{successor measure} $M^\pi(s_0,a_0,\cdot)$ as the measure over
$S\times A$ representing the expected
discounted time spent in each set $X\subset S\times A$:
\begin{equation}
\label{eq:defM}
M^\pi(s_0,a_0,X)\deq \sum_{t\geq 0} \gamma^t \Pr\left((s_t,a_t)\in X \mid
s_0,\,a_0,\,\pi
\right)
\end{equation}
for each $X\subset S\times A$.  Viewing
$M$ as a measure deals with both discrete and continuous spaces.

Given a reward function $r\from S\times A\to \R$,
the $Q$-function of $\pi$ for $r$ is $Q_r^\pi(s_0,a_0)\deq \sum_{t\geq 0}
\gamma^t \E [r(s_t,a_t)|s_0,a_0,\pi]$.  We assume that rewards are bounded, so
that all $Q$-functions are well-defined. We state the
results for deterministic reward functions, but this is not
essential.
\option{We abuse notation and write greedy policies as $\pi(s)=\argmax_a Q(s,a)$ instead of
$\pi(s)\in \argmax_a Q(s,a)$. Ties may be broken any way.}

We consider the following informal problem: Given a reward-free
MDP $(S,A,P,\gamma)$, can we compute a convenient learnable object $E$ such that,
once a reward function $r\from S\times A\to \R$ is specified, we can
easily (with no planning) compute, from $E$ and $r$, a policy $\pi$ whose
performance
is
%such that $Q_r^\pi$
close to maximal?

\section{Encoding All Optimal Policies via the Forward-Backward Representation}
\label{sec:thm}

We first present forward-backward (FB) representations of a reward-free MDP as a way to summarize
all optimal policies via explicit formulas. The resulting learning procedure is described in
Section~\ref{sec:algo}.

\paragraph{Core idea.} The main algebraic idea is as follows. Assume, at
first,
that $S$ is finite. For a fixed
policy, the $Q$-function depends lineary on the reward:
namely, $Q^\pi_r(s,a)=\sum_{s',a'}
M^\pi(s,a,s',a')r(s',a')$ where $M^\pi(s,a,s',a')= \sum_{t\geq 0}
\gamma^t \Pr\left((s_t,a_t)=(s',a')|s,a,\pi\right)$.
%is the \emph{successor state matrix} TODO REF of policy $\pi$. 
This rewrites as $Q^\pi_r=M^{\pi}r$ viewing
everything as vectors and matrices indexed by state-actions.

Now let $(\pi_z)_{z\in \R^d}$ be any family of
policies parameterized by $z$. Assume that for each $z$, we can find
$d\times (S\times A)$-matrices $F_z$ and $B$ such that
$M^{\pi_z}=\transp{F_z}B$. Then $Q^{\pi_z}_r=\transp{F_z}Br$.
Specializing to $z_R\deq Br$, the $Q$-function of policy $\pi_{z_R}$ on
reward $r$ is $Q^{\pi_{z_R}}_r=\transp{F_{z_R}}\,z_R$. So far $\pi_z$ was unspecified; but
if we define $\pi_z(s)\deq \argmax_a
(\transp{F_z}\,z)_{sa}$ at
each state $s$, then by definition, $\pi_{z_R}$ is the greedy policy with respect to
$\transp{F_{z_R}}\,z_R$. At the same time, $\transp{F_{z_R}}\,z_R$ is the $Q$-function of
$\pi_{z_R}$ for reward $r$: thus,
$\pi_{z_R}$ is the greedy policy of its own $Q$-function, and is
therefore optimal for reward $r$.

Thus, if we manage to find $F$, $B$, and $\pi_z$ such that
$\pi_z=\argmax \transp{F_z}\,z$ and $\transp{F_z}B=M^{\pi_z}$ for all
$z\in \R^d$, then we obtain the optimal policy for any reward $r$, just by
computing $Br$ and applying policy $\pi_{Br}$.

This criterion on $(F,B,\pi_z)$ is
entirely unsupervised. Since $F$ and $B$ depend on $\pi_z$ but $\pi_z$ is defined via $F$, this
is a fixed point equation. 
An exact solution exists for $d$ 
large enough (Appendix,
Prop.~\ref{prop:existence}), while a smaller $d$ provides lower-rank
approximations $M^{\pi_z}\approx \transp{F_z}B$.
In Section~\ref{sec:algo} we present a well-grounded algorithm to learn such $F$, $B$, and $\pi_z$.

In short, we learn two representations $F$ and $B$ such that
$\transp{F(s_0,a_0,z)}B(s',a')$ is approximately the long-term
probability $M^{\pi_z}(s_0,a_0,s',a')$ to reach $(s',a')$ if starting at $(s_0,a_0)$ and following
policy $\pi_z$. 
Then all optimal policies can be computed from $F$ and $B$.
We think of $F$ as a representation of the future of a state, and $B$ as the ways to
reach a state (Appendix~\ref{sec:succpred}): if $\transp{F}B$ is large,
then the second state is reachable from the first.
This is akin to a
model of the environment, without synthesizing state trajectories.

% $F$ represents the future of a state
% under a certain policy. $B$ represents \option{the past of a state, or} the ways to
% reach that state (Appendix~\ref{sec:succpred}). If $\transp{F}B$ is large, then it is possible to
% reach the second state from the first.

% In short, if we can learn
% two representations $F$ and $B$ of state-actions such that $\transp{F}B$
% approximates the long-term transitions of the policies $\pi_z$,

\paragraph{General statement.} In continuous spaces with function approximation, $F_z$ and $B$ become functions $S\times A\to \R^d$
instead of matrices; since $F_z$ depends on $z$, $F$ itself is a function
$S\times A\times \R^d\to \R^d$. The sums over states will be replaced with expectations
under the data distribution $\rho$.%; in particular $z_R=\E_{(s,a)\sim \rho} [B(s,a)r(s,a)]$.

%Intuitively, $F$ and $B$ represent the future and past of a state, respectively. Indeed, $M^\pi(s,a,s',a')$

\begin{definition}[Forward-backward representation]
\label{def:fb}
Let $Z=\R^d$ be a representation space, and let $\rho$ be a measure on
$S\times A$. 
A pair of
functions $F\from S\times A\times Z\to Z$ and $B\from S\times A\to Z$,
together with a parametric family of policies $(\pi_z)_{z\in Z}$, is
called a \emph{forward-backward representation} of the MDP with respect
to $\rho$, if
the following conditions hold for any $z\in Z$ and $(s,a),(s_0,a_0)\in
S\times A$:
\begin{align}
\pi_z(s)=\argmax_a \transp{F(s,a,z)}z,\,
\qquad
M^{\pi_z}(s_0,a_0,\d s,\d a)=
\transp{F(s_0,a_0,z)}B(s,a)\rho(\d s,\d a)
\label{eq:FBdef}
\end{align}
where $M^\pi$ is the successor measure defined in \eqref{eq:defM}, and
the last equality is between measures.
\end{definition}

\begin{thm}[FB representations encode all optimal policies]
\label{thm:main}
Let $(F,B,(\pi_z))$ be a forward-backward representation of a reward-free MDP
with respect to some measure $\rho$.

Then, for any bounded reward function $r\from S\times A\to \R$, the following
holds. Set
\begin{equation}
\label{eq:zR}
z_R\deq \int_{s,a} r(s,a)B(s,a) \,\rho(\d s,\d a).
\end{equation}
assuming the integral exists.
Then $\pi_{z_R}$ is an optimal policy for reward $r$ in the MDP.
Moreover, the optimal $Q$-function $Q^\star$ for reward $r$ is
$Q^\star (s,a)=\transp{F(s,a,z_R)}z_R$.
%given by
%either of the two formulas
% \begin{align}
% Q^\star (s,a)&=\transp{F(s,a,z_R)}z_R.
% %\\&=
% %\sup_{z'\in Z} \transp{F(s,a,z')}z_R.
% \end{align}
\end{thm}

For instance, for a single reward located at state-action $(s,a)$, the
optimal policy is $\pi_{z_R}$ with $z_R=B(s,a)$. (In that case the factor $\rho(\d
s,\d a)$ does not matter because scaling the reward does not change the
optimal policy.)

We present in Section~\ref{sec:algo} an algorithm to learn FB
representations. The measure $\rho$ will be the distribution of state-actions
visited in a training set or under an exploration policy: then $z_R=\E_{(s,a)\sim
\rho} [r(s,a)B(s,a)]$ can be obtained
by sampling from visited states.

In finite spaces, exact FB representations exist,  provided the dimension $d$ is larger than
$\#S\times \#A$ (Appendix,
Prop.~\ref{prop:existence}).
In infinite spaces, arbitrarily good approximations can be obtained by
increasing $d$, corresponding to a rank-$d$ approximation of the
cumulated transition probabilities $M^\pi$.  Importantly,
the optimality guarantee extends to approximate $F$
and $B$, with optimality gap
proportional to $\transp{F}B-M^{\pi_z}/\rho$
(Appendix,
Theorems~\ref{thm:approx}--\ref{thm:pointwiseapprox} \option{with various
norms on $\transp{F}B-M^\pi/\rho$}).
For instance,
if, for some reward $r$, the error
$\abs{\transp{F(s_0,a_0,z_R)}B(s,a)-M^{\pi_{z_R}}(s_0,a_0,\d s,\d a)/\rho(\d
s,\d a)}$ is at
most $\eps$ on average over $(s,a)\sim \rho$ for every $(s_0,a_0)$, then
$\pi_{z_R}$ is $3\eps\norm{r}_\infty/(1-\gamma)$-optimal for $r$.
% (Appendix,
% Theorem~\ref{thm:approx}\option{, with additional results using various
% norms on $\transp{F}B-m^\pi$} and $r$).

These results justify using some norm over
$\abs{\transp{F}B-M^{\pi_z}/\rho}$,
averaged over $z\in \R^d$, as a
training loss for unsupervised reinforcement learning. (Below, we average
over $z\in \R^d$ from a 
fixed rescaled Gaussian. If prior information is
available on the rewards $r$, the corresponding distribution of $z_R$ may be used
instead.)
%Note that $\pi_z$ is defined via $F$, so the equality
%$\transp{F}B=m^{\pi_z}$ is a fixed point equation.

% \option{A similar statement holds with $\bar
% m(s,z,s',a')+\transp{F(s,a,z)}B(s',a')$ instead of
% $\transp{F(s,a,z)}B(s',a')$ to represent $m^{\pi_z}$ (Appendix,
% Theorem~\ref{thm:main2}). Here $\bar
% m$ is any function that does not depend on $a$. Since $\bar m$ has no
% rank restriction, the finite rank approximation only applies to the
% advantage function.} \NDY{move to appendix?}

% On the downside, every
% reward function uncorrelated to the components $B_1,\ldots,B_d$ of $B$ is
% treated as $0$, since $z_R=0$.
If $B$ is fixed in advance and only $F$ is
learned, the method has similar properties to successor features based
on $B$ (Appendix~\ref{sec:succpred}). But one may set a large $d$ and let
$B$ be learned: arguably, by
Theorem~\ref{thm:main}, the resulting features
``linearize'' optimal policies as much as possible.
The features learned in $F$ and
$B$
may have broader interest.% as the features that ``most linearize''
%optimal policies. \NDY{expand?}

% \paragraph{Sketch of proof.} We give an idea of the proof for finite
% state spaces; the general proof is in Appendix~\ref{appendix: proof}. For any reward function $R$ and any policy $\pi$, the
% $Q$-function $Q^\pi_R$ of $\pi$ on $R$ is equal to $M^{\pi}R$, where we view
% $M^\pi$ as a matrix and $R$ as a vector over state-actions: this is a
% consequence of general properties the successor measure \eqref{eq:defM}. If
% $M^{\pi_z}\approx \transp{F(z)}B$ (viewing both $F(z)$ and $B$ as $d\times
% (S\times A)$ matrices), then $Q^{\pi_z}_R\approx \transp{F(z)}BR=\transp{F(z)}z_R$
% with $z_R\deq BR$. Specializing to $z=z_R$, we obtain
% $Q^{\pi_{z_R}}_R\approx \transp{F(z_R)}z_R$. But by definition, $\pi_{z_R}$ is
% the argmax of $\transp{F(z_R)}z_R$. Therefore, $\pi_{z_R}$ is the argmax
% of its own $Q$-function for reward $R$, so it is optimal. The continuous
% case uses densities $m^\pi$ instead of $M^\pi$. The remaining question is to
% train $F$ and $B$ to approximate successor densities.

\section{Learning and Using Forward-Backward Representations}
\label{sec:algo}

Our algorithm starts with
an \emph{unsupervised learning phase}, where we
learn the representations $F$ and $B$ in a reward-free way, by
observing state transitions in the environment, generated from any
exploration scheme. Then, in a
\emph{reward
estimation phase}, we estimate a policy parameter $z_R=\E [r(s,a)B(s,a)]$
from some reward
observations, or directly set $z_R$ if the reward is known (e.g., set
$z_R=B(s,a)$ to reach
a known target $(s,a)$). In the \emph{exploitation phase}, we
directly use the policy $\pi_{z_R}(s)=\argmax_a \transp{F(s,a,z_R)}z_R$.

\paragraph{The unsupervised learning phase.}
No rewards are used in this phase, and no
family of tasks has to be specified manually.
$F$ and $B$ are trained off-policy from observed transitions in the
environment. The first condition of FB representations, $\pi_z(s)=\argmax_a
\transp{F(s,a,z)}z$, is just taken as the definition of $\pi_z$ given $F$. In turn, $F$ and
$B$ are trained so that the second condition \eqref{eq:FBdef},
$\transp{F(\cdot,z)}B=M^{\pi_z}/\rho$,
holds for every $z$. Here $\rho$ is the (unknown) distribution of
state-actions in the training data.
%: $\transp{F}(s_0,a_0,z)B(s',a')\approx m^{\pi_z}(s_0,a_0,s',a')$ for
%every $z$.
Training  is based on the Bellman equation
for the successor measure $M^\pi$,
\begin{equation}
M^\pi(s_0,a_0,\{(s',a')\})=
\1_{s_0=s',\,a_0=a'}\,+\gamma \E_{s_1\sim P(\d s_1|s_0,a_0)}
M^\pi(s_1,\pi(s_1),\{(s',a')\}).
\end{equation}

We leverage a well-principled
algorithm from \cite{successorstates} in the single-policy setting: it
learns the successor measure
of a policy $\pi$ without using
the sparse reward $\1_{s_0=s',\,a_0=a'}$ (which would vanish in
continuous spaces). Other successor measure 
algorithms could be used, such as
C-learning \cite{Clearning}.

The algorithm from \cite{successorstates} uses a parametric model $m_\theta^\pi(s_0,a_0,s',a')$ to
represent $M^\pi(s_0,a_0,\d s', \d a')\approx
m_\theta^\pi(s_0,a_0,s',a')\rho(\d s',\d a')$. It is not necessary to
know $\rho$, only to sample states from it.
Given an observed transition
$(s_0,a_0,s_1)$ from the training set, generate an action $a_1\sim
\pi(a_1|s_1)$, and
sample another
state-action $(s',a')$ from the training set, independently from
$(s_0,a_0,s_1)$. Then update the parameter $\theta$ by
$\theta\gets \theta+\eta \del \theta$ with learning rate $\eta$ and
\begin{equation}
\label{eq:mtd}
\del \theta\deq \partial_\theta m^\pi_\theta(s_0,a_0,s_0,a_0) + 
\partial_\theta m^\pi_\theta(s_0,a_0,s',a')
\,\times \left(
\gamma \,m^\pi_\theta(s_1,a_1,s',a')
-m^\pi_\theta(s_0,a_0,s',a')\right)
\end{equation}
This computes the density $m^\pi$ of $M^\pi$ with respect to the
distribution $\rho$ of state-actions in the training set. Namely,
the true successor state density $m^\pi=M^\pi/\rho$ is a fixed point of
\eqref{eq:mtd}
in expectation \cite[Theorem 6]{successorstates} \option{(and is the only fixed point in the tabular or
overparameterized case)}.
Variants exist, such as using a target network for
$m^\pi_\theta(s_1,a_1,s',a')$ on the right-hand side, as in DQN. 

Thus, we first
choose a parametric model $F_\theta,
B_\theta$ for the representations $F$ and $B$, and set
$m_\theta^{\pi_z}(s_0,a_0,s',a')\deq
\transp{F_\theta(s_0,a_0,z)}B_\theta(s',a')$. Then we iterate the update
\eqref{eq:mtd} over many state-actions and values of $z$.
This results in Algorithm~\ref{FB algo}.
At each step, a value of $z$ is picked at random, together with a batch
of transitions
$(s_0,a_0,s_1)$ and a batch of state-actions 
$(s',a')$ from the training set, with $(s',a')$ independent from $z$ and
$(s_0,a_0,s_1)$.
%\option{learn the successor state density $m^{\pi_z}$ using}
%$m_\theta^{\pi_z}(s_0,a_0,s',a')=\transp{F_\theta(s_0,a_0,z)}B_\theta(s',a')$
%(Algorithm~\ref{FB algo}).

For sampling $z$, we use a fixed distribution (rescaled Gaussians, see
Appendix~\ref{sec:setup}). Any number of values of $z$ may be sampled: this does not
use up training samples. We use a target network with soft updates
(Polyak averaging) as
in DDPG. For training we also replace the greedy policy
$\pi_z=\argmax_a \transp{F(s,a,z)}z$ with a regularized
version $\pi_z= \mathrm{softmax}(\transp{F(s,a,z)}z/\tau)$ with fixed
temperature $\tau$ (Appendix~\ref{sec:setup}). Since there is
unidentifiability between $F$ and $B$ (Appendix, Remark~\ref{rem:normB}),
we normalize $B$ via an auxiliary loss in Algorithm~\ref{FB algo}.

For exploration in this phase, we use the policies being learned: the
exploration policy chooses a random value of $z$ 
from some distribution (e.g., Gaussian), and
follows $\pi_z$ for some time (Appendix, 
Algorithm~\ref{FB algo}). However, the algorithm can also work from
an existing dataset of off-policy transitions.

\paragraph{The reward estimation phase.} Once rewards are available, we estimate a reward
representation (policy parameter) $z_R$ by weighing the representation $B$
by the reward:
\begin{equation}
\label{eq:zR_outline}
z_R\deq\E [r(s,a) B(s,a)]
\end{equation}
where the expectation must be computed over the same distribution $\rho$ of state-actions
$(s,a)$
used to learn $F$ and $B$ (see Appendix~\ref{sec:rhotest} for using a different
distribution).
Thus, if the reward is black-box as in standard RL
algorithms, then the exploration policy has to be run again for some
time, and $z_R$ is obtained by averaging
$r(s,a)B(s,a)$ over the states visited.
An approximate value for $z_R$ still provides an approximately
optimal policy (Appendix, Prop.~\ref{prop:approxr} and
Thm.~\ref{thm:approxzR}).

If the reward is known explicitly, this phase is unnecessary. For
instance, if the reward is to reach a target state-action $(s_0,a_0)$ while avoiding some
forbidden state-actions $(s_1,a_1),...,(s_k,a_k)$, one may directly set
\begin{equation}
z_R=B(s_0,a_0)-\lambda \sum B(s_i,a_i)
\end{equation}
where the constant $\lambda$ sets the negative reward for %visiting a
forbidden states
and adjusts for the unknown $\rho(\d s_i,\d a_i)$ factors in
\eqref{eq:zR}.
This can be used for goal-oriented RL.

If the reward is known algebraically as a function $r(s,a)$, then $z_R$ may be computed by averaging the function $r(s,a)B(s,a)$ over a
replay buffer from the unsupervised training phase. We may also use
a reward model $\hat r(s,a)$ of $r(s,a)$ trained on some reward
observations from any source.

% If it is not possible to run the
% exploration policy again,
% one may learn a model $\hat r(s,a)$ of $r(s,a)$ on some reward
% observations from any source,
% then estimate $z_R=\E
% [\hat r(s,a) B(s,a)]$ by averaging the model over a replay buffer from the
% unsupervised training phase.
% % As an advantage, the reward model may now be
% % learned on states following any distribution, then $z$ 
% % computed on the training set distribution.
%In this case, reward
%identification may be done in parallel to exploitation.

\paragraph{The exploitation phase.} Once the reward representation $z_R$
has been
estimated, the $Q$-function is estimated as
\begin{equation}
Q(s,a)=
\transp{F(s,a,z_R)}\,z_R.
\end{equation}
% or optionally \todo{check if needed} $Q(s,a)= \sup_{1\leq i\leq k}
% \transp{F(s,a,z_i)}\,z_R$ if using generalized policy improvement over a
% fixed set $(z_1,\ldots,z_k)$.
The corresponding policy $\pi_{z_R}(s)=\argmax_a
Q(s,a)$ is used for exploitation.

Fine-tuning was not needed in our experiments, but it is possible to fine-tune the
$Q$-function using actual rewards,
by setting $Q(s,a)=
\transp{F(s,a,z_R)}z_R + q_\theta(s,a)$ where the fine-tuning model
$q_\theta$ is initialized to $0$
and learned via any standard $Q$-learning method.

\paragraph{Incorporating prior information on rewards in $B$.} Trying to plan in
advance for all possible rewards in an arbitrary environment may be too
generic and problem-agnostic, and become difficult in large environments,
requiring long exploration and a large $d$ to accommodate all rewards.
% Prior information can be incorporated in several ways into the $FB$
% method,
% by focussing on certain types of rewards or parts of the space.
%
% First, $F$ and $B$ may be trained only on values of $z_R$
% corresponding to 
% a category of rewards specified in advance.
% For instance, training only on $z_R=B(s)$ with $s$ in some set $S'\subset
% S$ learns the optimal
% policies for reaching all states $s\in S'$. Any
% prior (e.g., Bayesian) on rewards may be incorporated this way. In our
% experiments, $z$ was just sampled from a rescaled Gaussian.
In practice, we are often interested in rewards depending, not on the full
state, but only on a part or some features of the state (e.g., a few
components of the state, such as the position of an agent, or its
neighbordhood, rather than the full environment).

If this is known in
advance, the representation $B$ can be trained on that part of the state
only, with the same theoretical guarantees (Appendix,
Theorem~\ref{thm:main2}). $F$ still needs to use the full state
as input.  This way, the FB model of the transition probabilities
\eqref{eq:defM} only has to learn the future probabilities of the part of
interest in the future states, based on the full initial state
$(s_0,a_0)$.
%does not have to learn
%how often every $(s',a')$ is reached, only the part of interest in
%$(s',a')$.
%See examples in
%Section \todo{}
% 
% one may use smaller goal spaces instead of $(s,a)$ for
% the reward. Namely,
% %\paragraph{Using goal spaces for rewards not depending on the full state.}
% if we know that the reward only depends on part of the state (such
% as the state of an agent as opposed to the full state of the environment),
% then we may learn $B$ only on that part, and the same theorem holds
% (Appendix, Theorem~\ref{thm:main2}).
% \option{As a simple example, the action $a$ may be omitted from both $B$ and the
% reward $r$.}
Explicitly,
if $\phi\from S\times A
\to G$ is a feature map to some features $g=\phi(s,a)$, and if we know
that the reward will be a function $R(g)$, then Theorem~\ref{thm:main}
still holds
with $B(g)$ everywhere instead of $B(s,a)$, and with the successor
measure $M^\pi(s_0,a_0,\d g)$ instead of $M^\pi(s_0,a_0,\d s',\d a')$ (Appendix,
Theorem~\ref{thm:main2}). \option{Learning %this $m^\pi$
is done
by replacing $\partial_\theta m_\theta^\pi(s_0,a_0,s_0,a_0)$ with
$\partial_\theta m_\theta^\pi(s_0,a_0,\phi(s_0,a_0))$ in the first term in
\eqref{eq:mtd} \cite{successorstates}.}
Rewards can be arbitrary functions of $g$, so this is more general than
\cite{borsa2018universal} which only considers rewards linear in $g$.
For instance, in MsPacman below, we let $g$ be the 2D position $(x,y)$ of the agent,
so we can optimize any reward function that depends on this position.

% Third, if we are interested only in part of the space, focusing the
% exploration policy on that part will provide representations $F$ and $B$ that
% minimize the loss over that part of the space. This could lead to optimal
% policies for rewards ``around a typical set of behaviors'', presumably a
% more realistic goal than fully agnostic unsupervised training in complex
% environments.

%\paragraph{Discussion.}

\paragraph{Limitations.} 
First, this method does not solve exploration: it assumes access
to a good exploration strategy. (Here we used the policies $\pi_z$ with
random values of $z$, corresponding to random rewards.)

Next, this task-agnostic approach is relevant if the
reward is not known in advance, but may not bring the best performance on
a particular reward. 
Mitigation strategies include: increasing $d$; using prior information on
rewards by including relevant variables into $B$, as
discussed above; and fine-tuning the $Q$-function at test time based on
the initial $\transp{F}B$ estimate.

As reward functions are represented by a
$d$-dimensional vector $z_R=\E[r.B]$, some information about the reward
is necessarily lost.
Any reward uncorrelated to $B$ is treated as $0$. 
The dimension $d$ controls how many types of rewards can be optimized
well.
A priori, a large $d$ may be required.
Still, in the experiments, $d\approx 100$ manages
navigation in a pixel-based environment with a huge state space.
Appendix~\ref{sec:dimension} argues theoretically that
$d=2n$ is enough for navigation on an $n$-dimensional grid. 
The algorithm is linear
in $d$, so $d$ can be taken as large as the neural network models can
handle.

% Even a relatively small $d$
% provides useful behaviors: in the experiments, $d\approx 100$ manages
% navigation in a pixel-based environment with a huge state space.
% Appendix~\ref{sec:dimension} argues theoretically that $d\approx n$ is
% enough for navigation in dimension $n$. 
% The larger the representation dimension $d$, the better
% the model can be, providing a rank-$d$ approximation of the pairwise
% cumulated transition probabilities in \eqref{eq:FBprobs}.  Empirically,
% solutions with finite $d$ appear to be useful even with continuous
% states.  \todo{cf examples where $d\approx$dim}

% In practice, for
% small $d$ we notice
% some
% blurring of rewards between nearby states (Fig.~\ref{figure: grid heatmap
% }), for reasons discussed in
% Section~\ref{sec:algo}.

We expect this method to have an implicit bias for long-range
behavior (spatially smooth rewards), while
local details of the reward function may be blurred. Indeed,
$\transp{F}B$ is optimized to approximate the successor measure
$M^\pi=\sum_t \gamma^t P_\pi^t$ with $P^t_\pi$ the $t$-step transition kernel
for each policy $\pi$. The rank-$d$ approximation will favor large
eigenvectors of $P_\pi$, i.e., small eigenvectors of the Markov chain
Laplacian $\Id-\gamma P_\pi$.  These loosely correspond to long-range
(low-frequency) behavior \cite{mahadevan2007proto}:
presumably, $F$ and $B$ will learn spatially smooth rewards first.
Indeed, experimentally, a small $d$ leads to spatial blurring of rewards
and $Q$-functions (Fig.~\ref{figure: grid heatmap }).
Arguably, without any prior information this is
a reasonable prior. \cite{stachenfeld2017hippocampus} have argued for the
cognitive relevance of
low-dimensional approximations of successor representations.

Variance is a potential issue in larger environments, although this did
not arise in our experiments. Learning $M^\pi$ requires sampling a
state-action $(s_0,a_0)$ and an independent state-action $(s',a')$. In
large spaces, most state-action pairs will be unrelated. A possible
mitigation is to combine FB with strategies such as Hindsight Experience Replay
\cite{andrychowicz2017hindsight}
to select goals related to the current state-action.
The following may help a lot:
the update of $F$ and $B$ decouples as an expectation over
$(s_0,a_0)$, times an expectation over $(s',a')$. Thus, by estimating
these expectations by a moving average over a dataset, it is easy to have
many pairs $(s_0,a_0)$ interact with many $(s',a')$. The cost is handling
full $d\times d$ matrices.  This will be explored in future work.

\section{Experiments}
\label{sec:exp}

We first consider the task of reaching arbitrary goal states. For this,
we can make quantitative comparisons to existing goal-oriented baselines.
Next, we illustrate qualitatively some tasks that cannot be tackled a
posteriori by goal-oriented methods, such as introducing forbidden
states. Finally, we illustrate some of the representations learned.

\subsection{Environments and Experimental Setup}
We run our experiments on a selection of environments that are diverse in term of state space dimensionality, stochasticity and dynamics.
\begin{compactenum}[\hspace{0pt}\textbullet]
    \setlength{\itemsep}{0pt}
    \item Discrete Maze is the classical gridworld with four rooms. States are represented by one-hot unit vectors.
    \item Continuous Maze is a two dimensional environment with impassable walls. States are represented by their Cartesian coordinates $(x,y) \in [0,1]^2$. The execution of one of the actions moves
the agent in the desired direction, but with normal random noise added to the position of the agent.
\item FetchReach is a variant of the simulated robotic arm environment
from~\cite{plappert2018multi} using discrete actions instead of continuous actions.  States are 10-dimensional vectors consisting of positions and velocities of robot joints.
\item Ms.\ Pacman is a variant of the Atari 2600 game Ms.\ Pacman, where an episode ends when the agent is captured by a monster~\cite{rauber2018hindsight}. 
States are obtained by processing the raw visual input directly from the screen. Frames
are preprocessed by cropping, conversion to grayscale and downsampling to $84\times84$ pixels. A state $s_t$ is the concatenation of $(x_{t-12}, x_{t-8}, x_{t-4}, x_t)$ frames, i.e. an $84\times84\times4$ tensor.  
An action
repeat of 12 is used. As Ms.\ Pacman is not originally a
multi-goal domain, we define the goals as the 148 reachable coordinates
$(x,y)$ on the screen; these can be reached only by learning to avoid monsters.
\end{compactenum}
For all environments, we run algorithms for 800 epochs, with three different random seeds. Each epoch
consists of 25 cycles where we interleave between gathering some amount
of transitions, to add to the replay buffer, and performing 40 steps of
stochastic gradient descent on the model parameters. To collect
transitions, we generate episodes using some behavior policy. For both
mazes, we use a uniform policy while for FetchReach and Ms.\ Pacman, we
use an $\epsilon$-greedy policy with respect to the current approximation $F(s, a, z)^\top z$ for a sampled $z$. At evaluation time, $\eps$-greedy policies are also used, with a smaller
$\eps$. More details are given in Appendix~\ref{sec:setup}.

\subsection{Goal-Oriented Setting: Quantitative Comparisons}

We investigate the %generalization capabilities of the
FB representation
over goal-reaching tasks and compare it to goal-oriented baselines:
DQN\footnote{Here DQN is short for goal-oriented DQN, $Q(s, a,
g)$.}, and DQN with HER when needed. We define sparse reward
functions. For Discrete Maze, the reward function is equal to one when
the agent's state is equal exactly to the goal state. For Discrete Maze,
we measured the quality of the obtained policy to be the ratio between
the true expected discounted reward of the policy for its goal  and the
true
optimal value function, on average over all states.
For the other environments, the reward function is equal to one when the
distance of the agent's position and the goal position is below some
threshold, and zero otherwise. We assess policies by computing
the average success rate, i.e the average number of times the agent
successfully reaches its goal. 

%\begin{minipage}{\linewidth}
%\centering
%\begin{minipage}{0.45\linewidth}
%\begin{figure}[H]
%    \includegraphics[width=0.48\textwidth]{}
%     \includegraphics[width=0.48\textwidth]{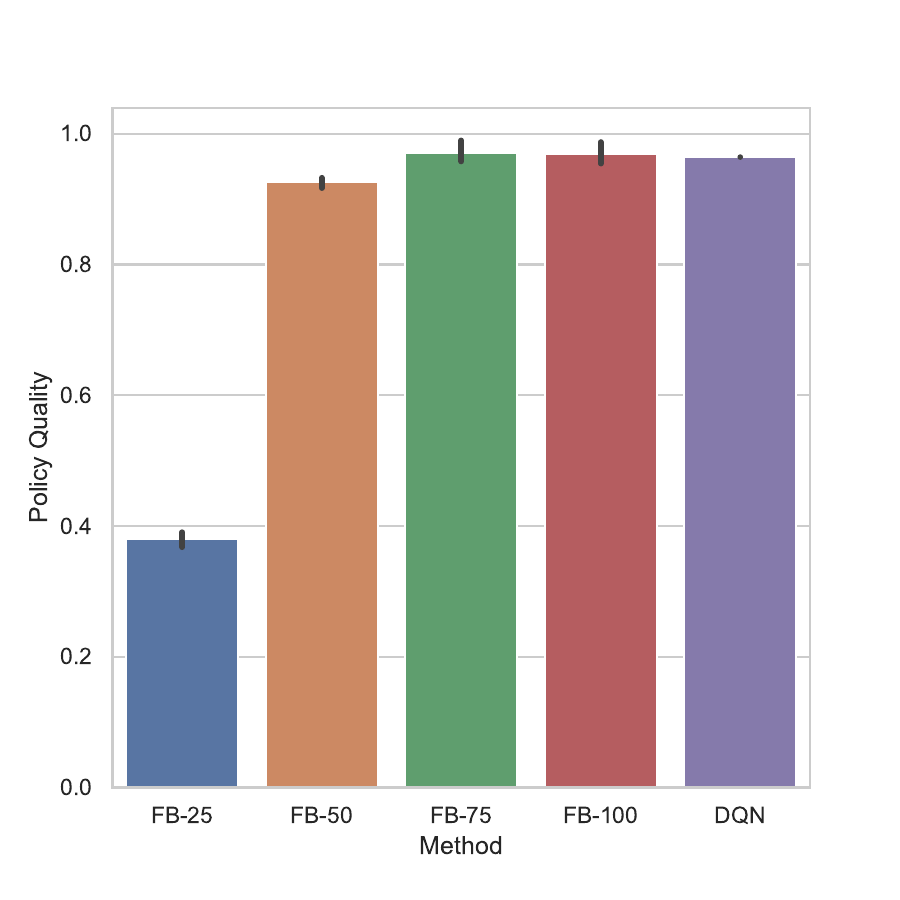} 
%  
%    \caption{\label{figure: perf in discrete maze} \small Comparative performance of FB for different dimensions and DQN in the discrete maze. \textbf{Left}: the policy quality averaged over 20 randomly selected goals as function of the the first 200 training epochs. \textbf{Right}: the policy quality averaged over the goal space after 800 training epochs.}    
%\end{figure}
%\end{minipage}
%\hspace{0.05\linewidth}

\begin{minipage}{\linewidth}
\centering
\begin{minipage}{0.45\linewidth}
\begin{figure}[H]
    \includegraphics[width=0.48\textwidth]{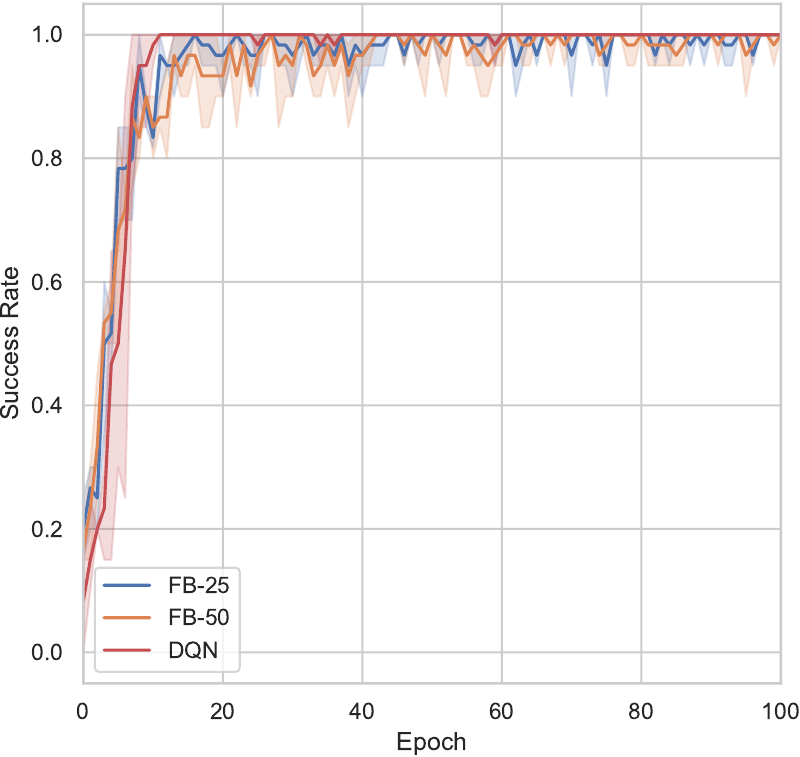}
     \includegraphics[width=0.48\textwidth]{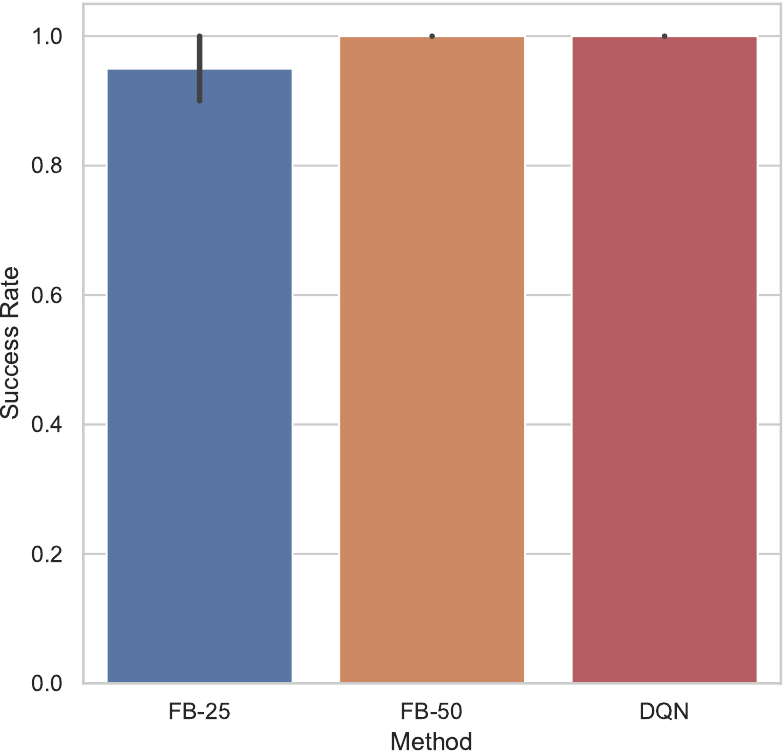} 
  
    \caption{\label{figure: perf in fetchreach} \small Comparative performance of FB for different dimensions and DQN in FetchReach. \textbf{Left}: success rate averaged over 20 randomly selected goals as function of the first 100 training epochs. \textbf{Right}: success rate averaged over 20 random goals after 800 training epochs.}    
\end{figure}
\end{minipage}
\hspace{0.05\linewidth}
\begin{minipage}{0.45\linewidth}
\centering
\begin{figure}[H]
    \includegraphics[width=0.48\textwidth]{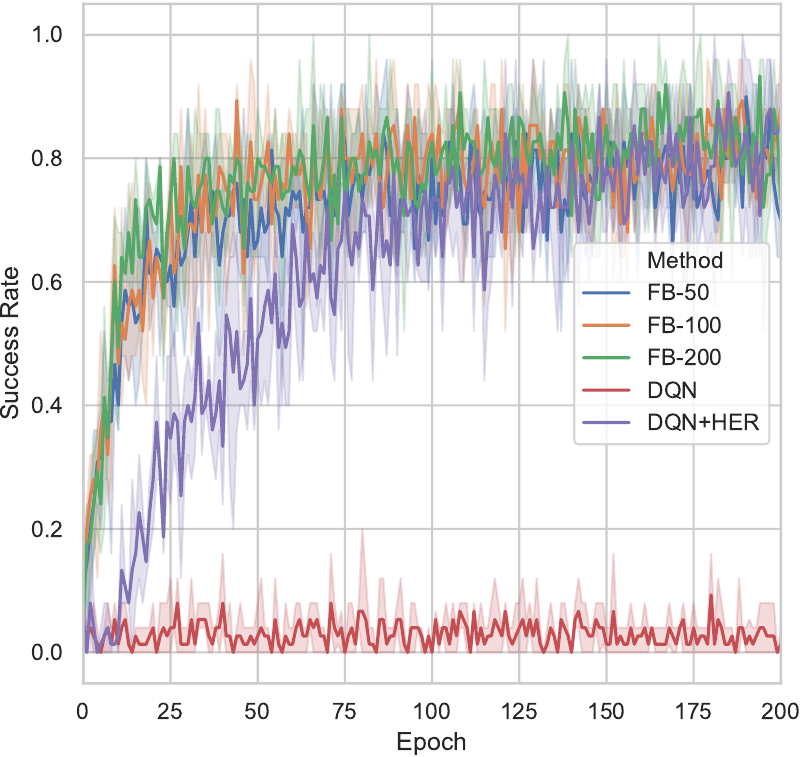} 
     \includegraphics[width=0.48\textwidth]{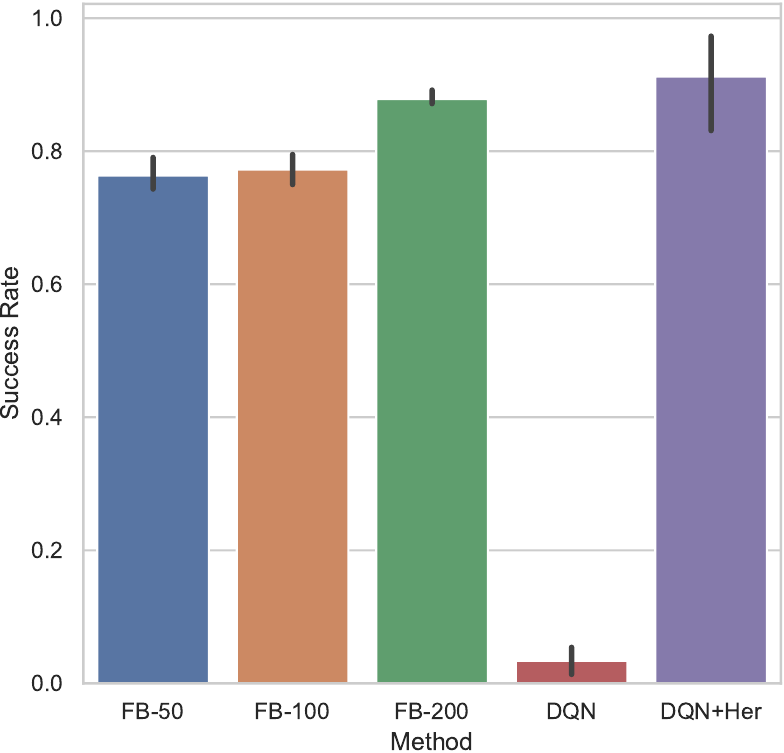} 
  
    \caption{ \label{figure: perf in pacman} \small Comparative performance of FB for different dimensions and DQN in Ms.\ Pacman. \textbf{Left}: success rate averaged over 20 randomly selected goals as function of the first 200 training epochs. \textbf{Right}: success rate averaged over the goal space after 800 training epochs.}
\end{figure}
\end{minipage}
\end{minipage}

%\begin{figure}[t] 
%    \centering
%    \includegraphics[width=0.43\textwidth]{grid/learning_curve_small.pdf}
%     \includegraphics[width=0.43\textwidth]{grid/final_score.pdf} 
%  
%    \caption{\label{figure: perf in discrete maze} \small Comparative performance of FB for different dimensions and DQN in the discrete maze. \textbf{Left}: the policy quality averaged over 20 randomly selected goals as function of the the first 200 training epochs. \textbf{Right}: the policy quality averaged over the goal space after 800 training epochs.}
%\end{figure}
%
%\begin{figure}[t]
%    \centering 
%    \includegraphics[width=0.43\textwidth]{} 
%     \includegraphics[width=0.43\textwidth]{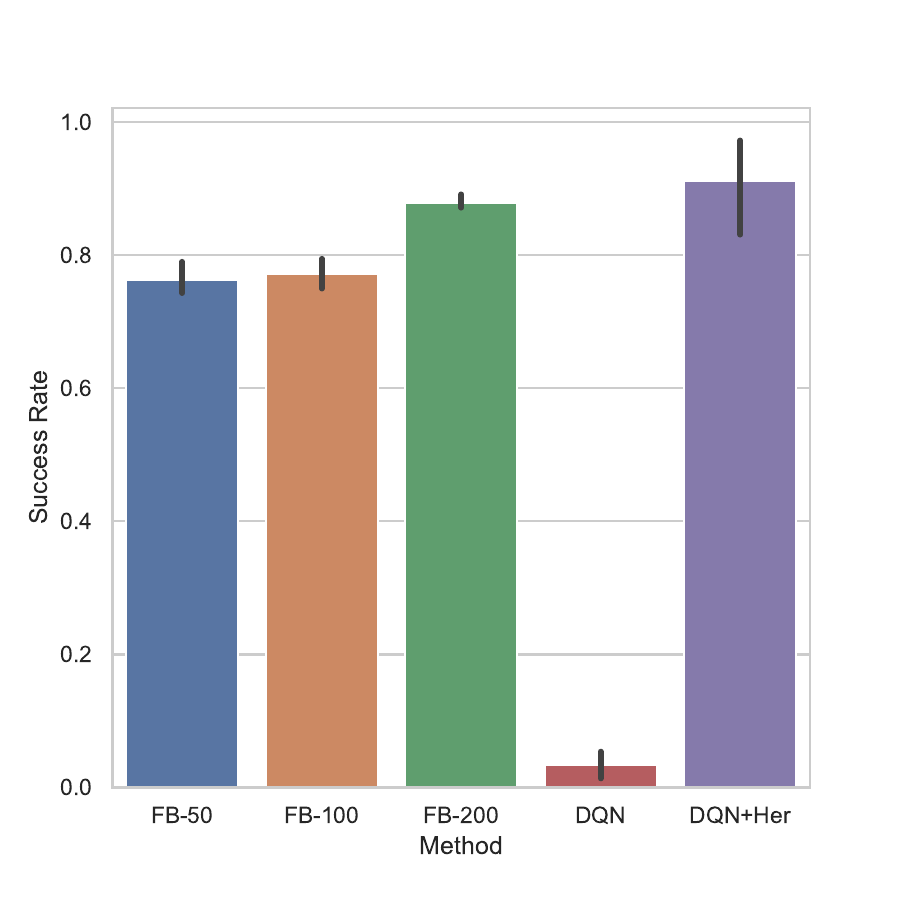} 
%  
%    \caption{ \label{figure: perf in pacman} \small Comparative performance of FB for different dimensions and DQN in Ms.\ Pacman. \textbf{Left}: the success rate averaged over 20 randomly selected goals as function of the first 200 training epochs. \textbf{Right}: the success rate averaged over the goal space after 800 training epochs.}
%\end{figure}

%\begin{figure}[t]
%    \centering 
%    \includegraphics[width=0.35\textwidth]{} 
%     \includegraphics[width=0.35\textwidth]{} 
%  
%    \caption{ \label{figure: grid heatmap } \small Heatmap of $\max_{a}
%    F(s, a, z_R)^\top z_R$ for $z_R = B(\text{\score{1}{1}})$
%    \textbf{Left}: $d = 25$. \textbf{Right}: $d = 75$.}
%\end{figure}

%\todo{explain that DQN means goal-oriented DQN, change to "GO-DQN"?}

Figs.~\ref{figure: perf in fetchreach} and~\ref{figure: perf in
pacman} show the comparative performance of FB for different dimensions
$d$, and DQN  respectively in  FetchReach and Ms.\
Pacman (similar results in Discrete and Continuous Mazes are provided in
Appendix~\ref{sec:setup}). In Ms.\ Pacman, DQN totally fails to learn and
we had to add HER to make it work. The performance of FB consistently increases with the
dimension $d$ and the best dimension matches the performance
of the goal-oriented baseline. 

In Discrete Maze, we observe a drop of performance for $d=25$
(Appendix~\ref{sec:setup}, Fig.~\ref{fig:discretemazeresults}):
this is due to the spatial smoothing induced by the small rank
approximation and the reward being nonzero only if the agent
is exactly at the goal. This spatial blurring is clear on
heatmaps for $d=25$ vs $d=75$
(Fig.~\ref{figure: grid heatmap }). With $d=25$ the agent often stops
right next to its goal.

To evaluate the sample efficiency of FB, after each epoch, we
evaluate the agent on 20 randomly selected goals. Learning curves are
reported in Figs.~\ref{figure: perf in fetchreach} and~\ref{figure:
perf in pacman} (left). In all environments, we observe
no loss in sample efficiency compared to the goal-oriented baseline. In
Ms.\ Pacman, FB even learns faster than DQN+HER.

\subsection{More Complex Rewards: Qualitative Results}

We now investigate FB's ability to generalize to new tasks that cannot be solved
by an already trained goal-oriented model: reaching a goal with forbidden
states imposed a posteriori,
reaching the nearest of two goals, and choosing between a small, close
reward and a large, distant one.

First, for the task of reaching a target position $g_0$
\score{1}{1} while avoiding some forbidden positions $g_1, \ldots g_k$
\tikz\draw[red,fill=red] (0,0) circle (.7ex);, we  set $z_R =
B(g_1) - \lambda \sum_{i=1}^k B(g_i)$ and run the corresponding
$\eps$-greedy policy defined by $F(s, a, z_R)^\top z_R$.
Fig.~\ref{figure: neg reward} shows the resulting trajectories, which
succeed at solving the task for the different domains. In Ms.\ Pacman,
the path is suboptimal (though successful) due to the sudden
appearance of a monster along the optimal path. (We  only plot the
initial frame; see the full series of frames along the trajectory in
Appendix~\ref{sec:setup}, Fig.~\ref{fig:fullpacmanframes}.) Fig.~\ref{figure: mix reward in continuous maze} (left)
provides a contour plot of $\max_{a \in A}F(s, a, z_R)^\top z_R$ for the
continuous maze and shows the landscape shape around the forbidden regions.

Next, we consider the task of reaching the closest target among two
equally rewarding positions $g_0$ and $g_1$, by setting $z_R = B(g_0) +
B(g_1)$. The optimal $Q$-function is
\emph{not} a linear combination of the $Q$-functions for $g_0$ and $g_1$.
Fig.~\ref{figure: equal
reward} shows successful trajectories generated by the policy
$\pi_{z_R}$.
On the contour
plot of $\max_{a \in A}F(s, a, z_R)^\top z_R$ in Fig.~\ref{figure: mix
reward in continuous maze} (right), the two rewarding positions appear as
basins of attraction.  Similar results for a third task are shown in
Appendix~\ref{sec:setup}: introducing a ``distracting'' small reward next to the initial
position of the agent, with a larger reward further away.

%\vspace{-7mm}

\begin{minipage}{\linewidth}
\centering
%\vspace{-1cm}
\begin{minipage}{0.45\linewidth}
\begin{figure}[H]
	\centering
    \includegraphics[width=0.45\textwidth]{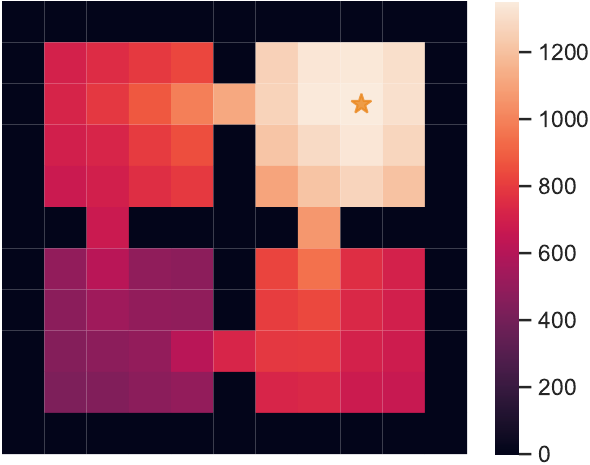} 
     \includegraphics[width=0.45\textwidth]{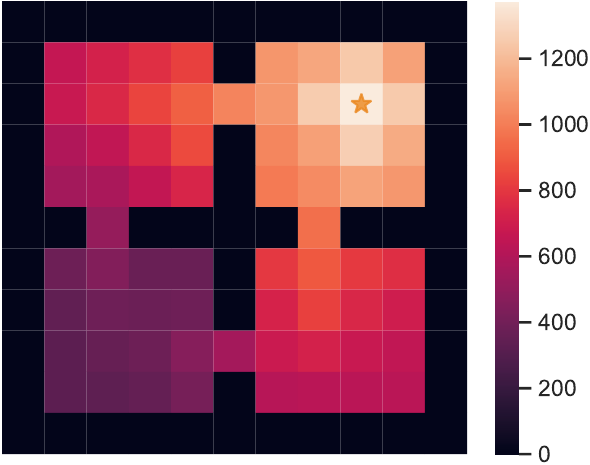} 
     
    \caption{ \label{figure: grid heatmap } \small Heatmap of $\max_{a}
    F(s, a, z_R)^\top z_R$ for $z_R = B(\text{\score{1}{1}})$
    \textbf{Left}: $d = 25$. \textbf{Right}: $d = 75$.}
\end{figure}

\end{minipage}
\hspace{0.05\linewidth}
\begin{minipage}{0.45\linewidth}
\begin{figure}[H]  
\centering   
     \includegraphics[width=0.45\textwidth, trim = {1cm 2cm 1cm 3cm}]{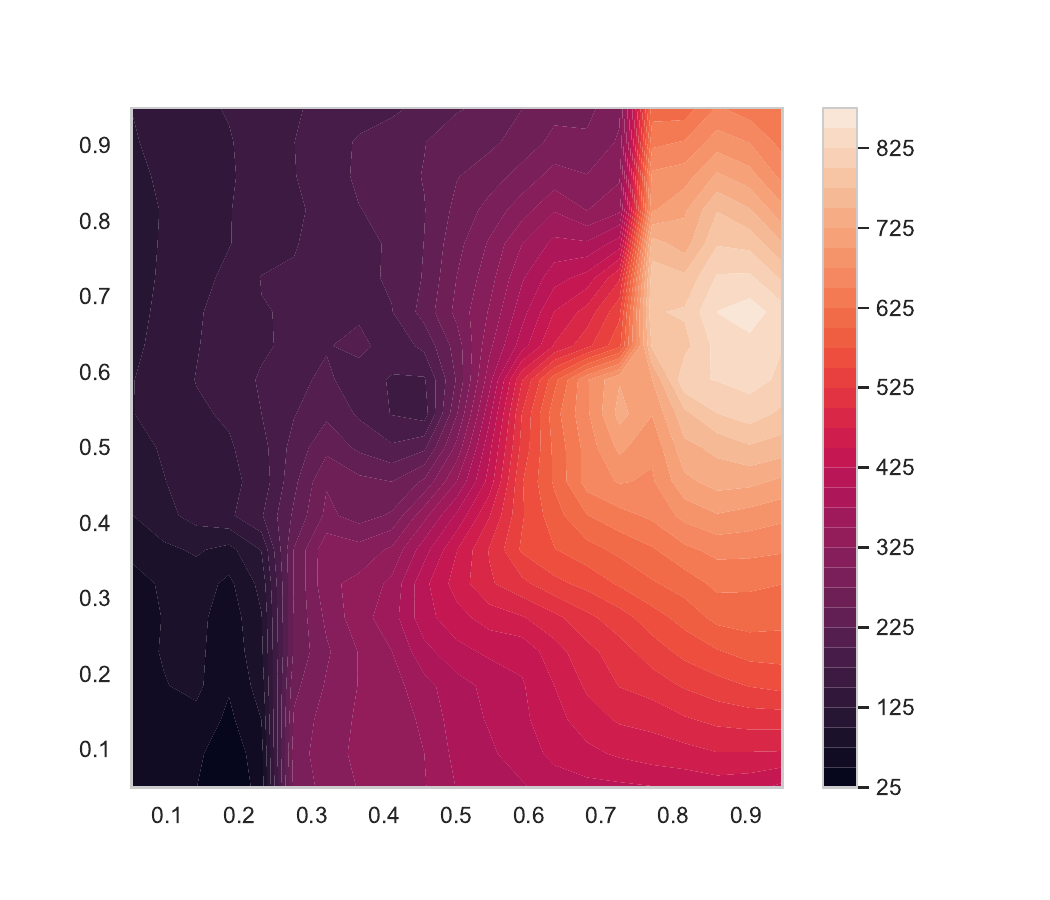} 
    \includegraphics[width=0.45\textwidth, trim = {1cm 2cm 1cm 3cm}]{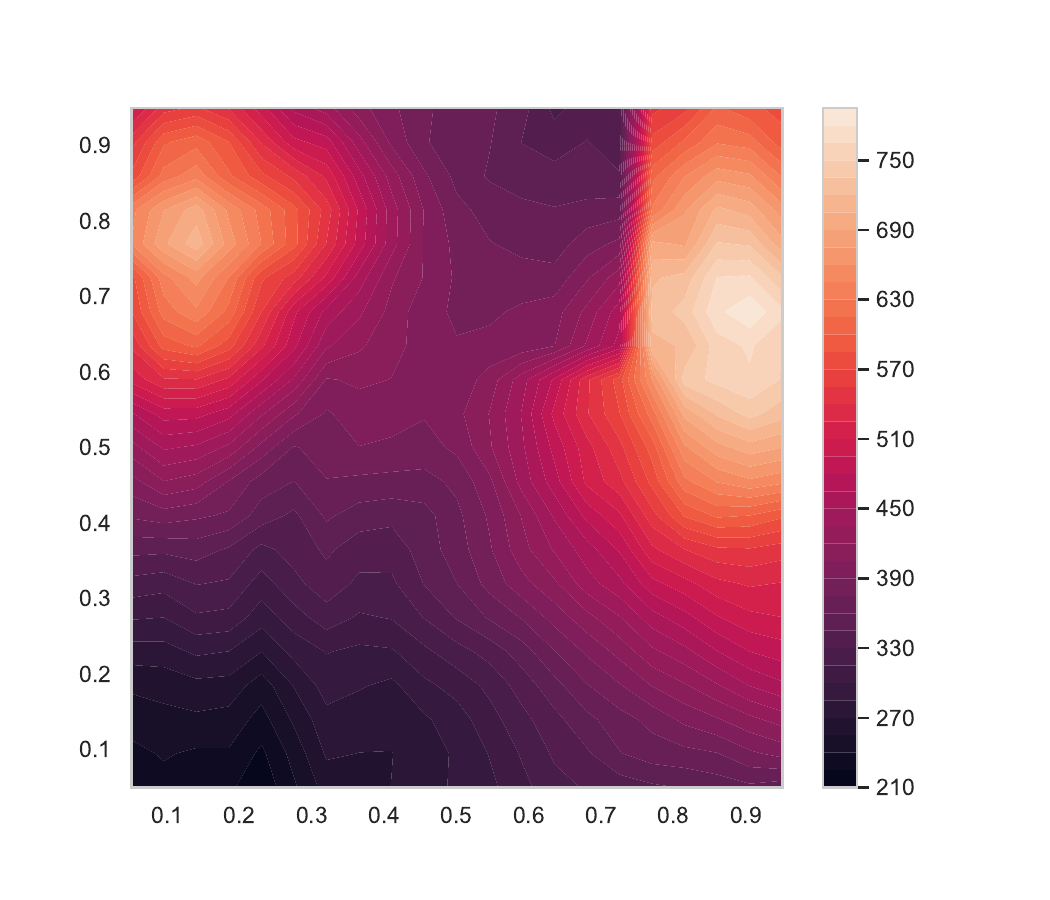}
    \caption{\label{figure: mix reward in continuous maze} \small Contour
    plot of  $\max_{a \in A}F(s, a, z_R)^\top z_R$ in Continuous Maze.
    \textbf{Left}: for the task of reaching a target while avoiding a
    forbidden region, \textbf{Right}: for two equally rewarding targets.}
\end{figure}
\end{minipage}
\end{minipage}

%\vspace{-5mm}

\begin{minipage}{\linewidth}
\centering
\begin{minipage}{0.45\linewidth}
\begin{figure}[H] 
\centering
    \includegraphics[width=0.30\textwidth,  trim = {1cm 1cm 1cm 4cm}]{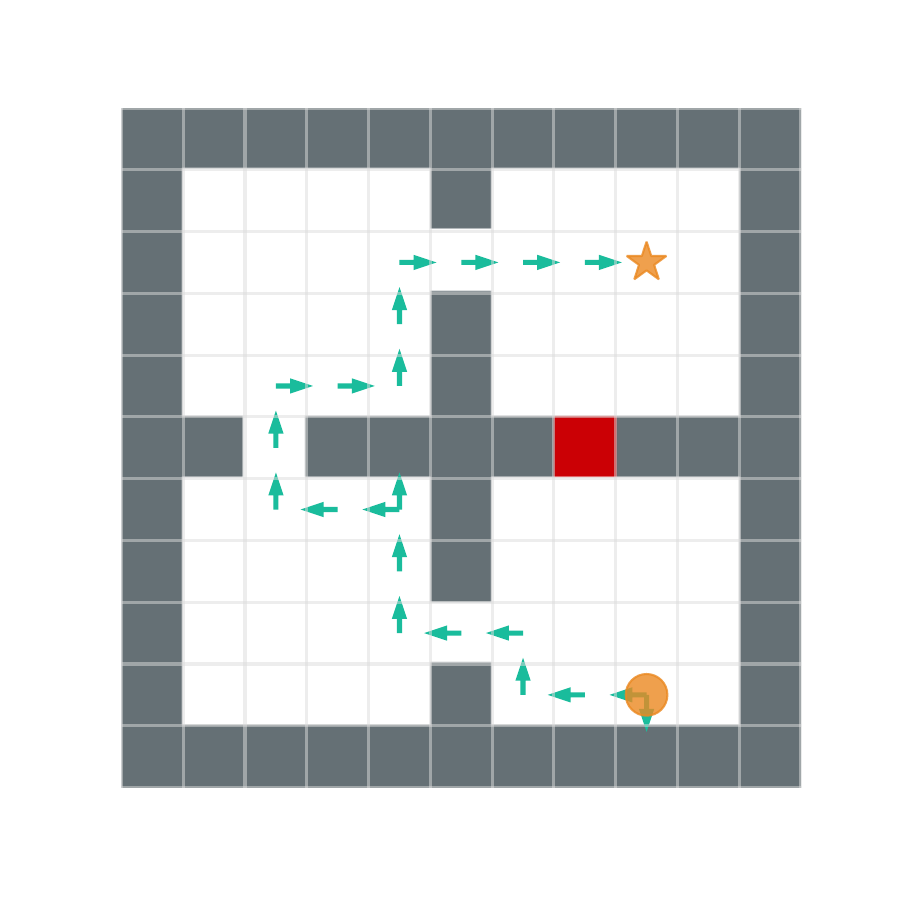}
     \includegraphics[width=0.30\textwidth, trim = {1cm 1cm 1cm 4cm}]{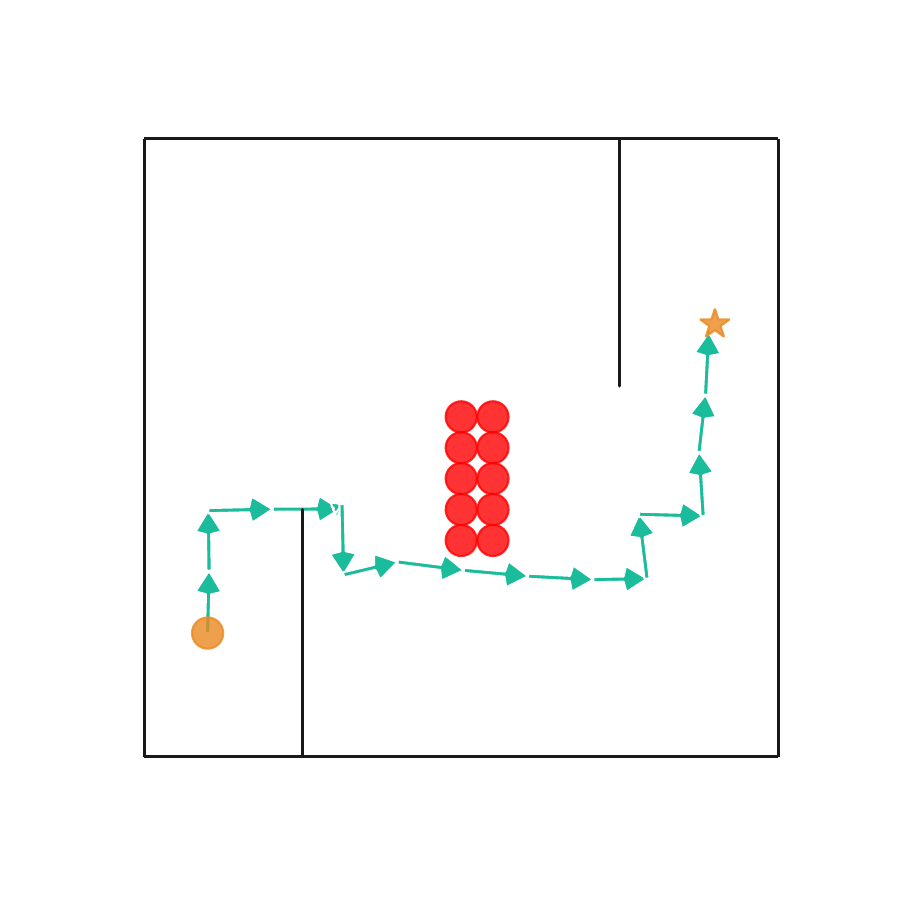}
     \includegraphics[width=0.30\textwidth,  trim = {1cm 1cm 1cm 4cm}]{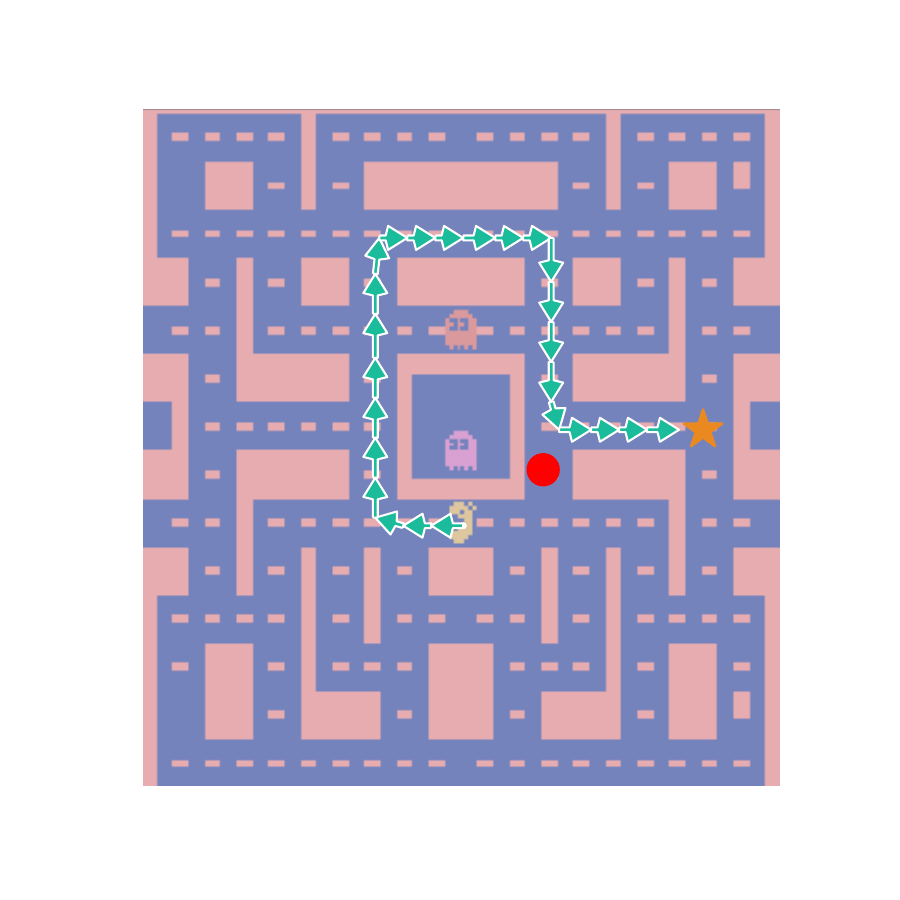} 
    \caption{\label{figure: neg reward} \small Trajectories generated by
    the
    $\transp{F}B$ policies for the task of reaching a target position
    (star shape \score{1}{1} while avoiding forbidden positions
    (red shape \protect\tikz\protect\draw[red,fill=red] (0,0) circle (.7ex);)
    }
\end{figure}
\end{minipage}
\hspace{0.05\linewidth}
\begin{minipage}{0.45\linewidth}
\begin{figure}[H] 
\centering
    \includegraphics[width=0.30\textwidth,  trim = {1cm 1cm 1cm 4cm}]{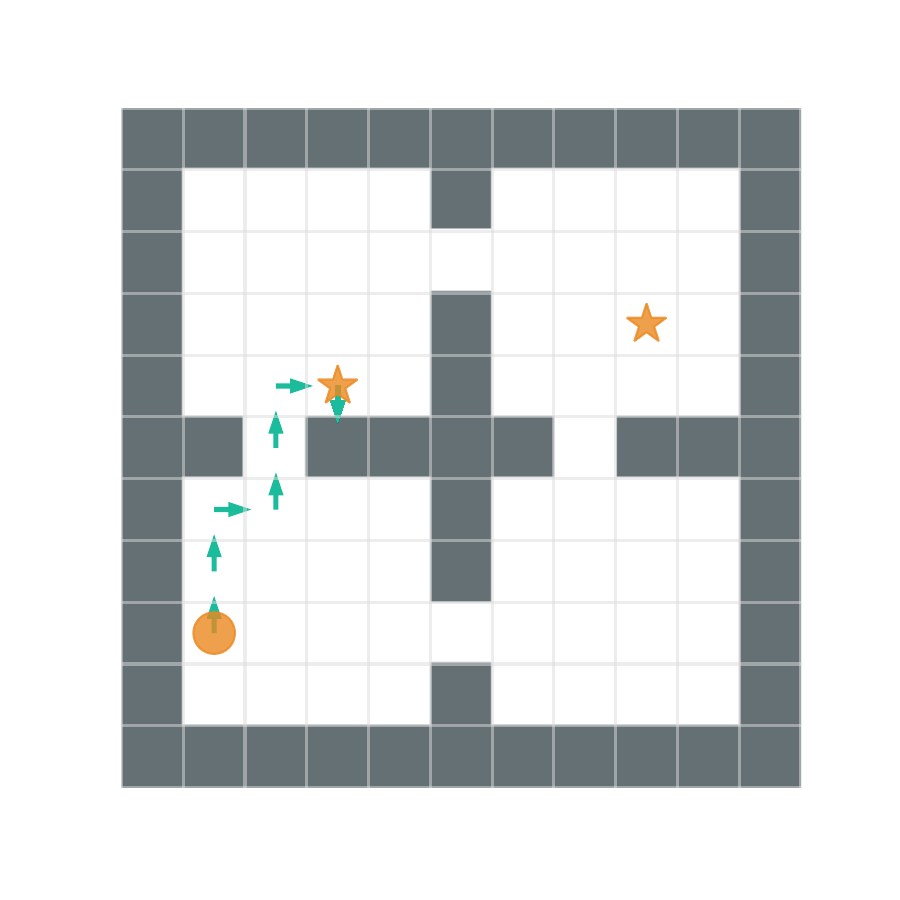}
     \includegraphics[width=0.30\textwidth, trim = {1cm 1cm 1cm 4cm}]{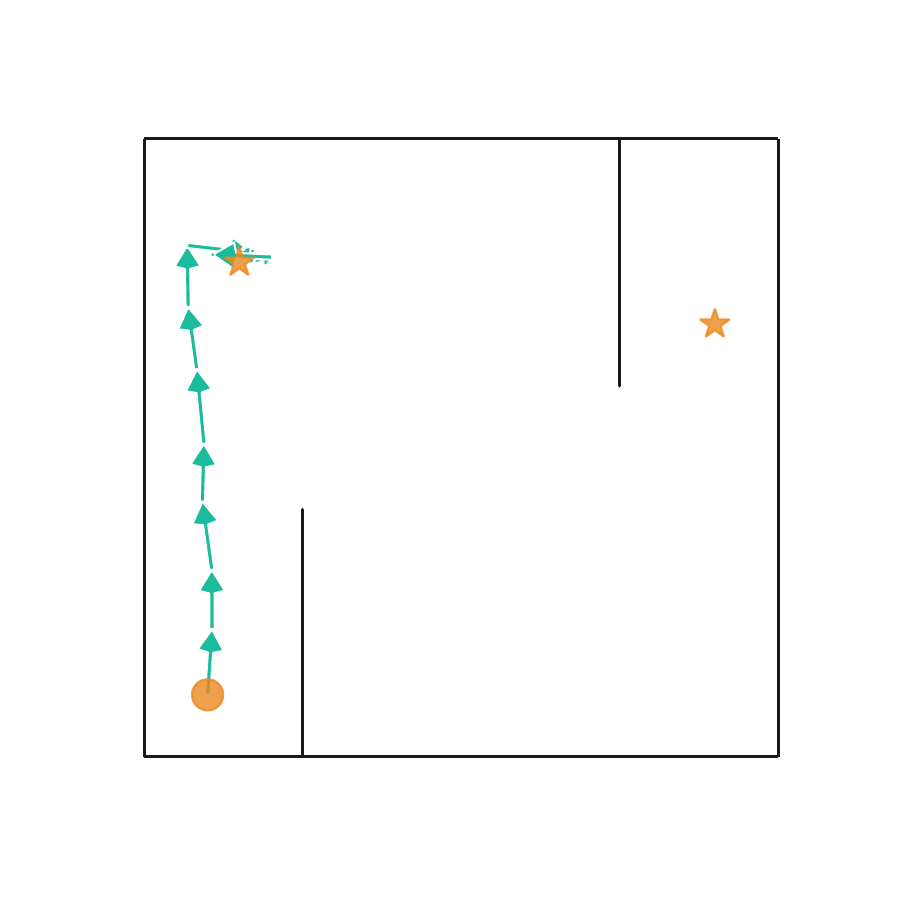}
     \includegraphics[width=0.30\textwidth,  trim = {1cm 1cm 1cm 4cm}]{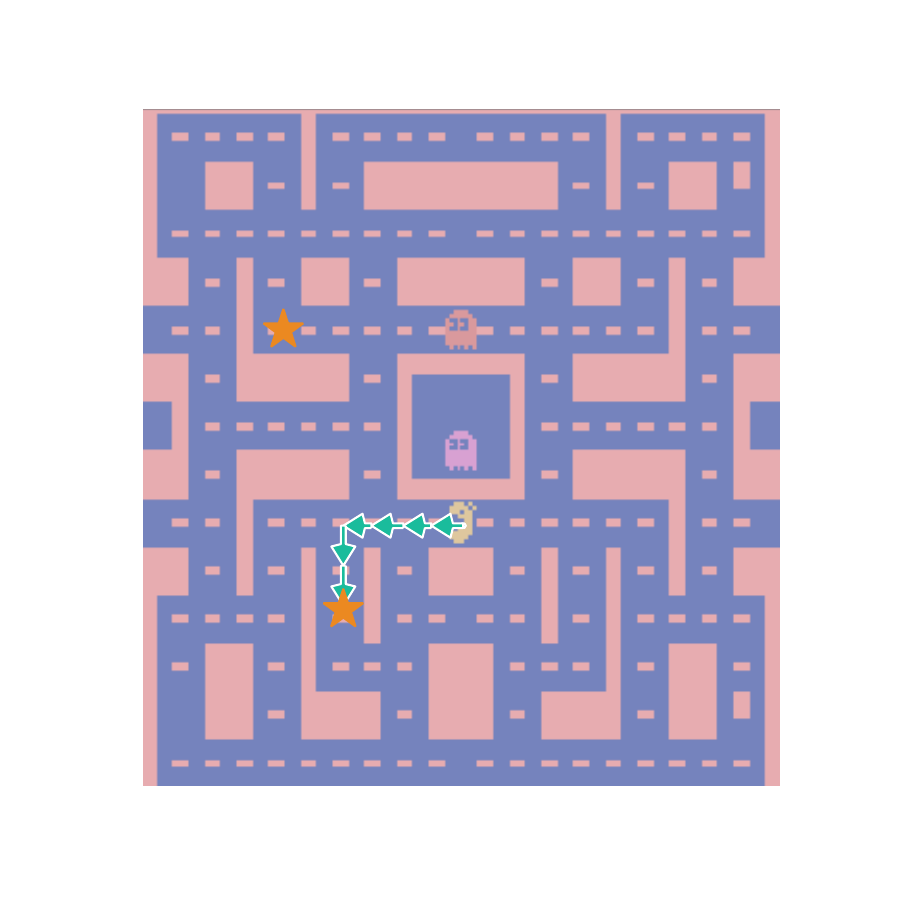} 
    \caption{\label{figure: equal reward} \small Trajectories generated
    by the $\transp{F}B$ policies for the task of reaching the closest among
    two equally rewarding positions (star shapes \score{1}{1}).
    (Optimal
    $Q$-values are not linear over such mixtures.)}
\end{figure}

\end{minipage}
\end{minipage}

\subsection{Embedding Visualizations}

\begin{figure}[t]
  \centering
  \includegraphics[width=0.26\textwidth]{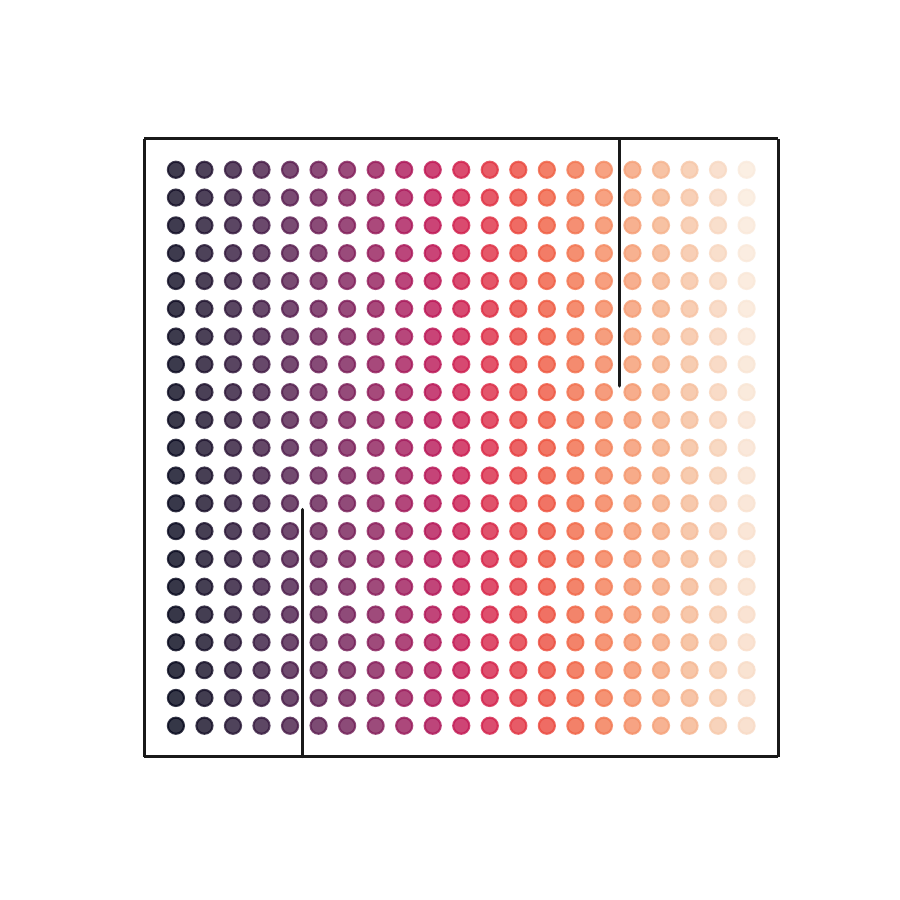}
    \includegraphics[width=0.25\textwidth,  trim = {1.5cm 1.5cm 1.5cm 1.5cm}]{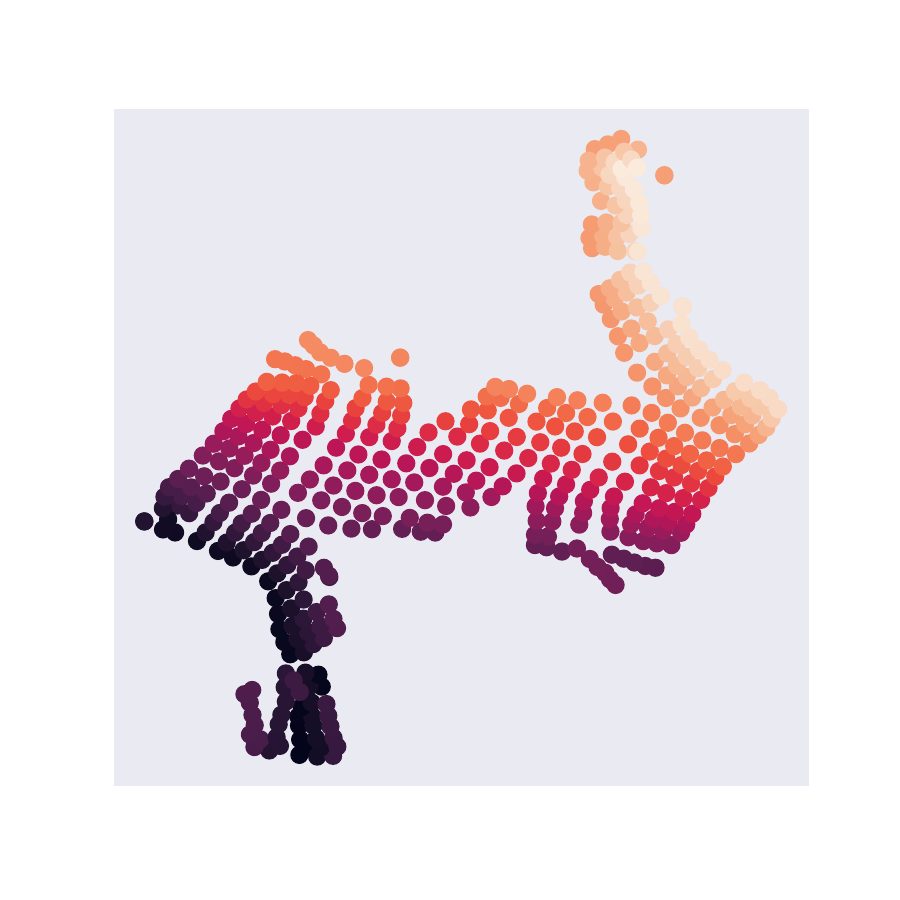}
     \includegraphics[width=0.25\textwidth,  trim = {1.5cm 1.5cm 1.5cm 1.5cm}]{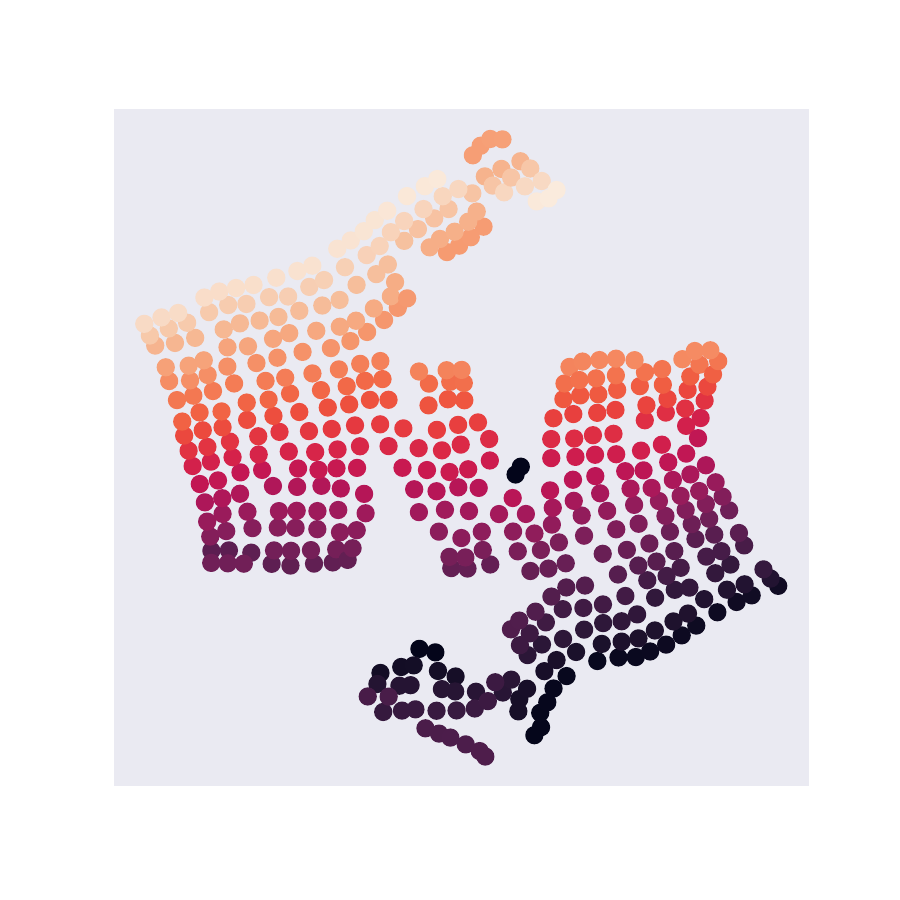}
    \caption{\label{figure: t-sne} Visualization of FB embedding vectors
    on Continuous Maze after projecting them in two-dimensional space
    with t-SNE. %~\cite{van2008visualizing}. 
    \textbf{Left}: the states to be mapped. \textbf{Middle}: the $F$
    embedding. \textbf{Right}: the $B$ embedding. The walls appear
    as large dents; the smaller dents correspond to the number of steps
    needed to get past a wall.}
\end{figure}

We visualize the learned FB state embeddings for Continuous Maze by
projecting them into 2-dimensional space using
t-SNE~\cite{van2008visualizing} in Fig.~\ref{figure: t-sne}. For the
forward embeddings, we set $z=0$ corresponding to the uniform policy. We
can see that FB partitions states according to the topology induced by
the dynamics: states on opposite sides of walls are separated in the
representation space and states on the same side lie together.
Appendix~\ref{sec:setup}
includes
embedding visualizations for different $z$ and for Discrete Maze
and  Ms.\ Pacman.

%\begin{figure}[t]
%    \centering
%    \includegraphics[width=0.5\textwidth]{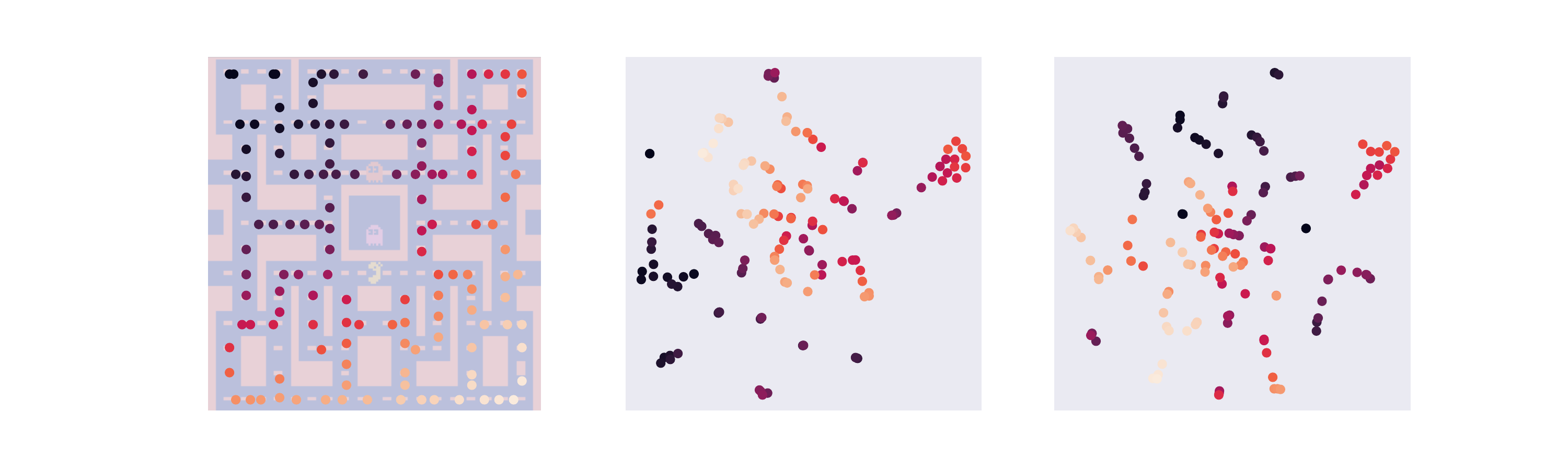} \hspace{10pt}
%    \caption{Pacman: forward (middle) and backward (right) Embedding visualization.}
%\end{figure}

\section{Related work}
% This work combines ideas from
% \cite{borsa2018universal}, theoretical advances on successor state
% learning
% from \cite{successorstates}, and a novel representation for the dynamics
% of a reward-free MDP.\todo{this is now repetitive of the intro... maybe
% just remove this paragraph, but the next has to be kept, I think} 
\cite{borsa2018universal} learn optimal policies for rewards that
are linear combinations of a finite number of feature
functions provided in advance by the user. 
\NDY{a bit repetitive of intro}
%Let us describe similarities and differences.
%
%First, the 
This approach %The approach of \cite{borsa2018universal}
cannot tackle generic
rewards or
goal-oriented RL: this would require introducing one feature per possible
goal state, requiring infinitely many features in continuous
spaces.

Our approach does not require user-provided features describing the
future tasks, thanks to using successor \emph{states}
\cite{successorstates} where
\cite{borsa2018universal} use successor \emph{features}. Schematically,
and omitting actions, successor features start with user-provided
features $\phi$, then learn $\psi$ such that $\psi(s_0)=\sum_{t\geq 0}
\gamma^t \E [\phi(s_t)\mid s_0]$.
This limits applicability to rewards that are
linear combinations of $\phi$.  Here we use successor \emph{state}
probabilities,
namely, we learn two representations $F$ and $B$ such that
$\transp{F(s_0)}B(s')=\sum_{t\geq 0} \gamma^t \Pr(s_t=s'\mid s_0)$.  This does
not require any user-provided input.

\option{Thus we learn two representations instead of one. The learned
backward representation $B$ is absent from \cite{borsa2018universal}. $B$
plays a different role than the user-provided features $\phi$ of
\cite{borsa2018universal}: if the reward is known a priori to depend only on some
features $\phi$, we learn $B$ on top of $\phi$, which
represents all rewards that depend linearly or nonlineary on $\phi$. Up
to a change of variables,
\cite{borsa2018universal} is recovered by setting $B=\Id$
on top of $\phi$, or $B=\phi$ and
$\phi=\Id$, and then only training $F$.}
% learning of $F$ given $B$ is not learning of
% $\psi$ given $\phi$. The features $\phi$ of
% \cite{borsa2018universal} are split between our learned $B$ and the
% functions $\phi$ we can use if the reward is known to depend only on some
% features of
% the state.}
% Setting our $B$ to their $\phi$ with our $\phi=Id$ learns exactly the same (all
%rewards that are linear functions of $\phi$), while
%setting our $\phi$ to their $\phi$ and letting $B$ free can learn all rewards that can be
%expressed as a (linear or nonlinear) function of $\phi$. Thus, our
%extension is conservative.

We use a similar parameterization of policies by $\transp{F(s,a,z)}z$ as
in \cite{borsa2018universal}, for similar reasons, although $z$ 
encodes a different object.
% Optionally, we use a
% similar generalized policy improvement method. 

Successor representations where first defined in
\cite{dayan1993improving} for finite spaces\option{, corresponding to an older
object from Markov chains, the
fundamental matrix \cite{kemenysnell1960,
bremaud1999markov,grinstead1997introduction}}. 
\option{\cite{stachenfeld2017hippocampus} argue for their relevance for cognitive
science.}
For successor
representations in continuous spaces, a
finite number of features $\phi$ are specified first; this can be used
for generalization within a family of tasks, e.g.,
\cite{barreto2017successor,zhang2017deep,grimm2019disentangled,hansen2019fast}. 
\cite{successorstates} moves from successor features to successor states
by providing pointwise occupancy map estimates even in continuous spaces, without using
the sparse reward $\1_{s_t=s'}$.
We borrow a successor state learning algorithm from \cite{successorstates}.
\cite{successorstates}
also introduced simpler versions of $F$ and $B$ for a single, fixed
policy; \cite{successorstates} does not consider the every-optimal-policy
setting.
%which introduces efficient algorithms to learn successor states of a
%fixed policy, avoiding the use of the sparse reward $\1_{s_t=s'}$, in
%discrete or continuous spaces, as well as simpler objects $F$ and $B$ for
%a single policy. %, used for
%policy evaluation rather than optimal policies.

There is a long literature on goal-oriented RL. For instance,
\cite{pmlr-v37-schaul15} learn goal-dependent value functions\option{,
regularized via an
explicit matrix factorization}. Goal-dependent value functions have been investigated in earlier works such as~\cite{foster2002structure} and~\cite{sutton2011horde}. Hindsight experience replay (HER)~\cite{andrychowicz2017hindsight} improves the sample efficiency of multiple goal learning with sparse rewards.
%\todo{others, old papers, Horde etc, goal-oriented DQN, HER...}
A family
of rewards has to be specified beforehand, such as reaching arbitrary
target states.  Specifying rewards a posteriori is not possible:
for instance, learning to reach target states does not extend to reaching
the nearest among several goals, reaching a goal while avoiding forbidden
states, or maximizing any dense reward.

Hierarchical methods such as options \cite{sutton1999between} can be used
for multi-task RL problems. However, policy learning on
top of the options  is still needed after the task is known.

For finite state spaces, \cite{jin2020reward} use reward-free
interactions to build a training set that
summarizes a finite environment, in the sense that
any optimal policies later computed on this training set instead of
the true environment are provably $\eps$-optimal, for any reward.  They
prove tight bounds on the necessary set size. Policy learning still
has to be done afterwards for each reward. %\option{This approach is complementary to ours.}
%\option{Our guarantees are weaker because we rely on deep learning;
% however, if the deep learning function approximation is successful,
% then our learned policies are guaranteed to be optimal.}

% \NDY{Probably not transfer etc, I don't know which papers.
% No guarantees. No original learning tasks necessary in our approach, so
% it's quite a different problem, so maybe not say anything, or just "it's
% a different problem because we don't have an initial task".}

\NDY{Mahadevan 2007 proto value functions + subsequent work? maybe not.
builds a basis for value functions by a small rank approx of the
Laplacian of a discretized MDP, similar to FB, but
still needs to learn value function afterwards via least squares policy
iteration, so it's different}

\paragraph{Acknowledgments.} The authors would like to thank Léonard
Blier, Diana Borsa, Alessandro Lazaric, Rémi Munos, Tom Schaul, Corentin
Tallec, Nicolas Usunier, and the anonymous reviewers for numerous
comments, technical questions, references, and invaluable suggestions for
presentation that led to an improved text.

\section{Conclusion}

The FB representation is a learnable mathematical object that
``summarizes'' a reward-free MDP. It
provides near-optimal policies for any reward specified a
posteriori, without planning. It is learned from black-box reward-free
interactions with the environment. In practice, this unsupervised method performs comparably to
goal-oriented methods for reaching arbitrary goals, but is also able to
tackle more complex rewards in real time.
\option{The representations learned encode the MDP dynamics and may have
broader interest.}
\NDY{improve?}

\bibliographystyle{alpha}
\bibliography{biblio}

\newcommand{\etalchar}[1]{$^{#1}$}
\begin{thebibliography}{BDM{\etalchar{+}}17}

\bibitem[ACR{\etalchar{+}}17]{andrychowicz2017hindsight}
Marcin Andrychowicz, Dwight Crow, Alex Ray, Jonas Schneider, Rachel Fong, Peter
  Welinder, Bob McGrew, Josh Tobin, Pieter Abbeel, and Wojciech Zaremba.
\newblock Hindsight experience replay.
\newblock In {\em NIPS}, 2017.

\bibitem[ARO{\etalchar{+}}19]{anand2019unsupervised}
Ankesh Anand, Evan Racah, Sherjil Ozair, Yoshua Bengio, Marc-Alexandre
  C{\^o}t{\'e}, and R~Devon Hjelm.
\newblock Unsupervised state representation learning in atari.
\newblock {\em arXiv preprint arXiv:1906.08226}, 2019.

\bibitem[BBQ{\etalchar{+}}18]{borsa2018universal}
Diana Borsa, Andr{\'e} Barreto, John Quan, Daniel Mankowitz, R{\'e}mi Munos,
  Hado van Hasselt, David Silver, and Tom Schaul.
\newblock Universal successor features approximators.
\newblock {\em arXiv preprint arXiv:1812.07626}, 2018.

\bibitem[BDM{\etalchar{+}}17]{barreto2017successor}
Andr{\'e} Barreto, Will Dabney, R{\'e}mi Munos, Jonathan~J Hunt, Tom Schaul,
  David Silver, and Hado~P van Hasselt.
\newblock Successor features for transfer in reinforcement learning.
\newblock In {\em NIPS}, 2017.

\bibitem[BNVB13]{bellemare2013arcade}
Marc~G Bellemare, Yavar Naddaf, Joel Veness, and Michael Bowling.
\newblock The arcade learning environment: An evaluation platform for general
  agents.
\newblock {\em Journal of Artificial Intelligence Research}, 47:253--279, 2013.

\bibitem[Bog07]{bogachev2007measure}
Vladimir~I Bogachev.
\newblock {\em Measure theory}.
\newblock Springer, 2007.

\bibitem[Br{\'e}99]{bremaud1999markov}
Pierre Br{\'e}maud.
\newblock {\em Markov chains: {G}ibbs fields, {M}onte {C}arlo simulation, and
  queues}, volume~31.
\newblock 1999.

\bibitem[BTO21]{successorstates}
L{\'e}onard Blier, Corentin Tallec, and Yann Ollivier.
\newblock Learning successor states and goal-dependent values: A mathematical
  viewpoint.
\newblock {\em arXiv preprint arXiv:2101.07123}, 2021.

\bibitem[Day93]{dayan1993improving}
Peter Dayan.
\newblock Improving generalization for temporal difference learning: The
  successor representation.
\newblock {\em Neural Computation}, 5(4):613--624, 1993.

\bibitem[ESL21]{Clearning}
Benjamin Eysenbach, Ruslan Salakhutdinov, and Sergey Levine.
\newblock C-learning: Learning to achieve goals via recursive classification.
\newblock In {\em International Conference on Learning Representations}, 2021.

\bibitem[FD02]{foster2002structure}
David Foster and Peter Dayan.
\newblock Structure in the space of value functions.
\newblock {\em Machine Learning}, 49(2):325--346, 2002.

\bibitem[GHB{\etalchar{+}}19]{grimm2019disentangled}
Christopher Grimm, Irina Higgins, Andre Barreto, Denis Teplyashin, Markus
  Wulfmeier, Tim Hertweck, Raia Hadsell, and Satinder Singh.
\newblock Disentangled cumulants help successor representations transfer to new
  tasks.
\newblock {\em arXiv preprint arXiv:1911.10866}, 2019.

\bibitem[GS97]{grinstead1997introduction}
Charles~Miller Grinstead and James~Laurie Snell.
\newblock {\em Introduction to probability}.
\newblock American Mathematical Soc., 1997.

\bibitem[HDB{\etalchar{+}}19]{hansen2019fast}
Steven Hansen, Will Dabney, Andre Barreto, Tom Van~de Wiele, David
  Warde-Farley, and Volodymyr Mnih.
\newblock Fast task inference with variational intrinsic successor features.
\newblock {\em arXiv preprint arXiv:1906.05030}, 2019.

\bibitem[JKSY20]{jin2020reward}
Chi Jin, Akshay Krishnamurthy, Max Simchowitz, and Tiancheng Yu.
\newblock Reward-free exploration for reinforcement learning.
\newblock In {\em International Conference on Machine Learning}, pages
  4870--4879. PMLR, 2020.

\bibitem[KS60]{kemenysnell1960}
J.~G. Kemeny and J.~L. Snell.
\newblock {\em Finite {M}arkov Chains}.
\newblock Van Nostrand, New York, 1960.

\bibitem[KST{\etalchar{+}}18]{ke2018modeling}
Nan~Rosemary Ke, Amanpreet Singh, Ahmed Touati, Anirudh Goyal, Yoshua Bengio,
  Devi Parikh, and Dhruv Batra.
\newblock Modeling the long term future in model-based reinforcement learning.
\newblock In {\em International Conference on Learning Representations}, 2018.

\bibitem[MM07]{mahadevan2007proto}
Sridhar Mahadevan and Mauro Maggioni.
\newblock Proto-value functions: A {L}aplacian framework for learning
  representation and control in {M}arkov decision processes.
\newblock {\em Journal of Machine Learning Research}, 8(10), 2007.

\bibitem[PAR{\etalchar{+}}18]{plappert2018multi}
Matthias Plappert, Marcin Andrychowicz, Alex Ray, Bob McGrew, Bowen Baker,
  Glenn Powell, Jonas Schneider, Josh Tobin, Maciek Chociej, Peter Welinder,
  et~al.
\newblock Multi-goal reinforcement learning: Challenging robotics environments
  and request for research.
\newblock {\em arXiv preprint arXiv:1802.09464}, 2018.

\bibitem[RUMS18]{rauber2018hindsight}
Paulo Rauber, Avinash Ummadisingu, Filipe Mutz, and J{\"u}rgen Schmidhuber.
\newblock Hindsight policy gradients.
\newblock In {\em International Conference on Learning Representations}, 2018.

\bibitem[SB18]{sutton2018reinforcement}
Richard~S Sutton and Andrew~G Barto.
\newblock {\em Reinforcement learning: An introduction}.
\newblock MIT press, 2018.
\newblock 2nd edition.

\bibitem[SBG17]{stachenfeld2017hippocampus}
Kimberly~L Stachenfeld, Matthew~M Botvinick, and Samuel~J Gershman.
\newblock The hippocampus as a predictive map.
\newblock {\em Nature neuroscience}, 20(11):1643, 2017.

\bibitem[SHGS15]{pmlr-v37-schaul15}
Tom Schaul, Daniel Horgan, Karol Gregor, and David Silver.
\newblock Universal value function approximators.
\newblock In Francis Bach and David Blei, editors, {\em Proceedings of the 32nd
  International Conference on Machine Learning}, volume~37 of {\em Proceedings
  of Machine Learning Research}, pages 1312--1320, Lille, France, 07--09 Jul
  2015. PMLR.

\bibitem[SMD{\etalchar{+}}11]{sutton2011horde}
Richard~S Sutton, Joseph Modayil, Michael Delp, Thomas Degris, Patrick~M
  Pilarski, Adam White, and Doina Precup.
\newblock Horde: A scalable real-time architecture for learning knowledge from
  unsupervised sensorimotor interaction.
\newblock In {\em The 10th International Conference on Autonomous Agents and
  Multiagent Systems-Volume 2}, pages 761--768, 2011.

\bibitem[SPS99]{sutton1999between}
Richard~S Sutton, Doina Precup, and Satinder Singh.
\newblock Between mdps and semi-mdps: A framework for temporal abstraction in
  reinforcement learning.
\newblock {\em Artificial intelligence}, 112(1-2):181--211, 1999.

\bibitem[Tal17]{talvitie2017self}
Erik Talvitie.
\newblock Self-correcting models for model-based reinforcement learning.
\newblock In {\em Proceedings of the AAAI Conference on Artificial
  Intelligence}, volume~31, 2017.

\bibitem[VdMH08]{van2008visualizing}
Laurens Van~der Maaten and Geoffrey Hinton.
\newblock Visualizing data using t-sne.
\newblock {\em Journal of machine learning research}, 9(11), 2008.

\bibitem[ZSBB17]{zhang2017deep}
Jingwei Zhang, Jost~Tobias Springenberg, Joschka Boedecker, and Wolfram
  Burgard.
\newblock Deep reinforcement learning with successor features for navigation
  across similar environments.
\newblock In {\em 2017 IEEE/RSJ International Conference on Intelligent Robots
  and Systems (IROS)}, pages 2371--2378. IEEE, 2017.

\end{thebibliography}

%%%%%%%%%%%%%%%%%%%%%%%%%%%%%%%%%%%%%%%%%%%%%%%%%%%%%%%%%%%%

%%%%%%%%%%%%%%%%%%%%%%%%%%%%%%%%%%%%%%%%%%%%%%%%%%%%%%%%%%%%
\newpage

\section*{Checklist}

%%% BEGIN INSTRUCTIONS %%%
%The checklist follows the references.  Please
%read the checklist guidelines carefully for information on how to answer these
%questions.  For each question, change the default \answerTODO{} to \answerYes{},
%\answerNo{}, or \answerNA{}.  You are strongly encouraged to include a {\bf
%justification to your answer}, either by referencing the appropriate section of
%your paper or providing a brief inline description.  For example:
%\begin{itemize}
%  \item Did you include the license to the code and datasets? \answerYes{See Section~\ref{gen_inst}.}
%  \item Did you include the license to the code and datasets? \answerNo{The code and the data are proprietary.}
%  \item Did you include the license to the code and datasets? \answerNA{}
%\end{itemize}
%Please do not modify the questions and only use the provided macros for your
%answers.  Note that the Checklist section does not count towards the page
%limit.  In your paper, please delete this instructions block and only keep the
%Checklist section heading above along with the questions/answers below.
%%% END INSTRUCTIONS %%%

\begin{enumerate}

\item For all authors...
\begin{enumerate}
  \item Do the main claims made in the abstract and introduction accurately reflect the paper's contributions and scope?
    \answerYes{}
  \item Did you describe the limitations of your work?
    \answerYes{}, Section~\ref{sec:algo}
  \item Did you discuss any potential negative societal impacts of your work?
    \answerNo{This is a theoretical work on fully generic methods for
    reinforcement learning. The potential impact is the same as that of reinforcement
    learning in general.}
  \item Have you read the ethics review guidelines and ensured that your paper conforms to them?
    \answerYes{}
\end{enumerate}

\item If you are including theoretical results...
\begin{enumerate}
  \item Did you state the full set of assumptions of all theoretical results?
    \answerYes{}
	\item Did you include complete proofs of all theoretical results?
    \answerYes{}, Appendix~\ref{appendix: proof}.
\end{enumerate}

\item If you ran experiments...
\begin{enumerate}
  \item Did you include the code, data, and instructions needed to reproduce the main experimental results (either in the supplemental material or as a URL)?
    \answerYes{}  in the supplemental material.
  \item Did you specify all the training details (e.g., data splits, hyperparameters, how they were chosen)?
    \answerYes{}  Appendix~\ref{sec:setup}
	\item Did you report error bars (e.g., with respect to the random seed after running experiments multiple times)?
    \answerYes{}
	\item Did you include the total amount of compute and the type of resources used (e.g., type of GPUs, internal cluster, or cloud provider)?
    \answerNo{}
\end{enumerate}

\item If you are using existing assets (e.g., code, data, models) or curating/releasing new assets...
\begin{enumerate}
  \item If your work uses existing assets, did you cite the creators?
    \answerYes{} Appendix~\ref{sec:env}
  \item Did you mention the license of the assets?
    \answerNo{}
  \item Did you include any new assets either in the supplemental material or as a URL?
    \answerNo{}
  \item Did you discuss whether and how consent was obtained from people whose data you're using/curating?
    \answerNA{}
  \item Did you discuss whether the data you are using/curating contains personally identifiable information or offensive content?
    \answerNA{}
\end{enumerate}

\item If you used crowdsourcing or conducted research with human subjects...
\begin{enumerate}
  \item Did you include the full text of instructions given to participants and screenshots, if applicable?
    \answerNA{}
  \item Did you describe any potential participant risks, with links to Institutional Review Board (IRB) approvals, if applicable?
    \answerNA{}
  \item Did you include the estimated hourly wage paid to participants and the total amount spent on participant compensation?
    \answerNA{}
\end{enumerate}

\end{enumerate}

%%%%%%%%%%%%%%%%%%%%%%%%%%%%%%%%%%%%%%%%%%%%%%%%%%%%%%%%%%%%

\newpage

\appendix

\section*{Outline of the Supplementary Material}

The Appendix is organized as follows.
\begin{itemize}
\item Appendix~\ref{appendix: algo} presents the pseudo-code of the unsupervised phase of FB algorithm.
\item Appendix~\ref{appendix: extended results} provides extended theoretical  results on approximate solutions and general goals:
		\begin{itemize}
			\item Section~\ref{sec: FB with goal} formalizes the forward-backward representation with a goal or feature space.
			\item Section~\ref{sec:dimension} establishes the
			existence  of exact FB representations in finite
			spaces, and discusses the influence of the
			dimension $d$.
			\item Section~\ref{sec:approx} shows how approximate solutions provide approximately optimal policies.
			\item Section~\ref{sec:succpred} shows how $F$
			and $B$ are successor and predecessor features of
			each other, and how the policies are optimal for
			rewards linearly spanned by $B$.
			\item Section~\ref{sec:rhotest} explains how to
			estimate $z_R$ at test time from a state
			distribution different from the training
			distribution.
			\item Section~\ref{sec:diracs} presents a note of the measure $M^\pi$ and its density $m^\pi$.
		\end{itemize}
		
\item Appendix~\ref{appendix: proof} provides proofs of all theoretical results above.
\item Appendix~\ref{sec:setup} provides additional information about our experiments:
		\begin{itemize}
			\item Section~\ref{sec:env} describes the environments.
			\item Section~\ref{sec:archi} describes the different architectures used for FB as well as the goal-oriented DQN.
			\item Section~\ref{sec:implem} provides implementation details and hyperparameters.
			\item Section~\ref{sec:addit_results} provides additional experimental results. 
		\end{itemize}
\end{itemize}

\newpage
\section{Algorithm}
\label{appendix: algo}

The unsupervised phase of the FB algorithm is described in
Algorithm~\ref{FB algo}.

The loss function for the Bellman equation on $F$ and $B$ appears on line
19, while the auxiliary loss function for orthonormalization of $B$ appears on
line 21.

\begin{algorithm}[h!]
\caption{FB algorithm: Unsupervised Phase}\label{FB algo}
\begin{algorithmic}[1]
\STATE \textbf{Inputs:} replay buffer $\mathcal{D}$ , Polyak coefficient $\alpha$  , $\nu$ a probability distribution over $\mathbb{R}^d$, randomly initialized networks $F_\theta$  and $B_\omega$, learning rate $\eta$, mini-batch size $b$, number of episodes $E$, number of gradient updates $N$, temperature $\tau$ and regularization coefficient $\lambda$.

\FOR{$m = 1, \ldots$}
\STATE {\bf  \textcolor{gray!50!blue}{/* \texttt{Collect $E$ episodes}}}
\FOR{episode $e=1, \ldots E$}
\STATE Sample $z \sim \nu$
\STATE Observe an initial state $s_0$
\FOR{$t =  1, \ldots $}
\STATE Select an action $a_t$ according to some behaviour policy (e.g the $\epsilon$-greedy with respect to  $F_\theta(s_t, a, z)^\top z$ ) 
\STATE Observe next state $s_{t+1}$
\STATE Store transition $(s_t, a_t, s_{t+1})$ in the replay buffer $\mathcal{D}$
\ENDFOR
\ENDFOR
\STATE {\bf  \textcolor{gray!50!blue}{/* \texttt{Perform $N$ stochastic gradient descent updates}}}
\FOR{ $ n= 1 \ldots N$}
\STATE Sample a mini-batch of transitions $\{(s_i, a_i, s_{i+1})\}_{i \in I} \subset \mathcal{D}$ of size $|I| = b$.
\STATE Sample a mini-batch of target state-action pairs  $\{(s'_i, a'_i)\}_{i \in I} \subset \mathcal{D}$ of size $|I| = b$.
\STATE Sample a mini-batch of $\{z_i \}_{i \in I} \sim \nu$ of size $|I| = b$.
\STATE Set $\pi_{z_i}( \cdot \mid s_{i+1} ) = \texttt{softmax} (F_{\theta^{-}}(s_{i+1}, \cdot, z_i)^\top z_i / \tau )$

\STATE $ \mathscr{L} (\theta, \omega) = $\\ $\frac{1}{2 b^2} \sum_{i, j \in I^2} \left( F_\theta(s_i, a_i,z_i)^\top B_\omega(s'_j, a'_j)  - \gamma \sum_{a \in A} \pi_{z_i} (a \mid s_{i+1}) \cdot F_{\theta^{-}}(s_{i+1}, a ,z_i)^\top B_{\omega^{-}}(s'_j, a'_j)  \right)^2 - \frac{1}{b} \sum_{i \in I} F_\theta(s_i, a_i,z_i)^\top  B_\omega(s_i, a_i)  $

\STATE {\bf  \textcolor{gray!50!blue}{/* \texttt{Compute orthonormality regularization loss}}}
\STATE $ \mathscr{L}_{\texttt{reg}} (\omega) = \frac{1}{b^2} \sum_{i, j \in I^2} B_{\omega}(s_i, a_i)^\top \texttt{stop-gradient}(B_{\omega}(s'_j, a'_j) ) \cdot \texttt{stop-gradient}  (B_{\omega}(s_i, a_i)^\top B_{\omega}(s'_j, a'_j)) -  \frac{1}{b} \sum_{i \in I}
B_{\omega}(s_i, a_i)^\top \texttt{stop-gradient}(B_{\omega}(s_i, a_i) )$
\STATE Update $\theta \leftarrow \theta - \eta \nabla_{\theta}  \mathscr{L}(\theta, \omega)$ and $\omega \leftarrow \omega - \eta \nabla_{\omega}  (\mathscr{L}(\theta, \omega) + \lambda \cdot  \mathscr{L}_{\texttt{reg}}(\omega))$
\ENDFOR

\STATE {\bf  \textcolor{gray!50!blue}{/* \texttt{Update target network parameters}}}
\STATE  $\theta^{-} \leftarrow \alpha \theta^{-} + (1-\alpha) \theta$ 
\STATE  $\omega^{-} \leftarrow \alpha \omega^{-} + (1-\alpha) \omega$
\ENDFOR

\end{algorithmic}
\end{algorithm}

\newpage
\section{Extended Results: Approximate Solutions and General Goals}
\label{appendix: extended results}

\begin{notation*}
In general, we denote by $M^\pi$ the successor measure of policy $\pi$ as
defined in \eqref{eq:defM}, and by $m^\pi$ its density, if it exists,
with respect to a reference measure $\rho$. Namely,
\begin{equation}
\label{eq:defm}
M^\pi(s_0,a_0,\d s,\d a)=m^\pi(s_0,a_0,s,a)\rho(\d s,\d a).
\end{equation}
Thus, the defining property of
forward-backward representations (Definition~\ref{def:fb}) is
$\transp{F(s_0,a_0,z)}B(s,a)=m^{\pi_z}(s_0,a_0,s,a)$.

We use the same convention for parametric models, with $M^\pi_\theta$ a
measure and $m^\pi_\theta$ its density.
The reference measure $\rho$ is fixed and may be unknown (typically
$\rho$ is the distribution of
state-actions in a training set or under an exploration policy).
\end{notation*}

\subsection{The Forward-Backward Representation With a Goal or Feature Space}
\label{sec: FB with goal}

Here we state a generalization of Theorem~\ref{thm:main} covering some
extensions mentioned in the text.

First, this covers rewards known to 
only 
depend on certain features $g=\phi(s,a)$ of the state-action $(s,a)$,
where $\phi$ is a known function
with values in some goal state $G$ (for instance, rewards depending
only on some components of the state). Then it is enough to compute $B$
as a function of the goal $g$.  Theorem~\ref{thm:main} corresponds to
$\phi=\Id$.  This is useful to introduce prior information when
available, resulting in a smaller model $(F,B)$.

\option{ This also recovers successor features as in
\cite{borsa2018universal}, defined by user-provided features $\phi$.
Indeed, fixing $B$ to $\Id$ and setting our $\phi$ to the $\phi$ of
\cite{borsa2018universal} (or fixing $B$ to their $\phi$ and our $\phi$
to $\Id$) will represent the same set of rewards and policies as in
\cite{borsa2018universal}, namely, optimal policies for rewards linear in
$\phi$
(although with a slightly different learning
algorithm and up to a linear change of variables for $F$ and $z$ given
by the covariance of $\phi$, see Appendix~\ref{sec:succpred}).
More generally, keeping the same
$\phi$ but letting $B$ free (with larger $d$) can provide optimal
policies for rewards that are arbitrary functions of $\phi$, linear or
not.  }

For this, we extend successor state measures to values in goal
spaces, representing the discounted time spent at each goal by the policy. Namely, 
given a policy $\pi$, let $M^\pi$ be the the successor state measure of
$\pi$ over goals $g$:
\begin{equation}
\label{eq:Mgoal}
M^\pi(s,a,\d g)\deq \sum_{t\geq 0} \gamma^t \Pr\left(
\phi(s_t,a_t)\in \d g \mid s_0=s,\,a_0=a,\,\pi
\right)
\end{equation}
for each state-action $(s,a)$ and each measurable set $\d g\subset G$.
%Here the probability is over trajectories $(s_t,a_t)_{t\geq 0}$ following
%policy $\pi$ in the MDP.
This will be the object approximated by
$\transp{F(s,a,z)}B(g)$.

Second, we use a more general model of
successor states: instead of $m\approx \transp{F}B$ we use
$m\approx 
\transp{F}B+\bar m$ where $\bar m$ does not depend on the action, so that the
$\transp{F}B$ part only computes advantages. This lifts the
constraint that the model of $m$ has rank at most $d$, because there is
no restriction on the rank on $\bar m$: the rank restriction only applies
to the advantage function.

Third, we state a form of policy improvement for the FB representation.
Namely, the $Q$-function $\transp{F(s,a,z_R)}z_R$ for a given reward can
be computed as a supremum over all values of $z$.

For simplicity we state the result with deterministic rewards, but this
extends to stochastic rewards, because the expectation $z_R$ will be the
same.

\begin{definition}[Extended forward-backward representation of an MDP]
\label{def:fb2}
Consider an MDP with state space $S$ and action space
$A$.
Let $\phi\from S\times A\to G$ be a function from state-actions to some
goal space $G=\R^k$.

Let $Z=\R^d$ be some representation space. Let
\begin{equation}
F\from S\times A\times Z\to Z,\qquad B\from G\to Z,\qquad
\bar m\from S\times Z\times G\to \R
\end{equation}
be three functions. For each $z\in Z$, define the policy
\begin{equation}
\pi_z (a|s) \deq \argmax_a \transp{F(s,a,z)}z.
\end{equation}

Let $\rho$ be any measure over the goal space $G$.

We say that $F$, $B$, and $\bar m$ are an \emph{extended forward-backward
representation} of the MDP with respect to $\rho$, if the following
holds:
for any $z\in Z$, any state-actions $(s,a)$, and any goal
$g\in G$, one has
\begin{equation}
\label{eq:extendedFB}
M^{\pi_z}(s,a,\d g)=\left(\transp{F(s,a,z)}B(g)+\bar m(s,z,g)\right)\rho(\d g)
\end{equation}
where $M^{\pi_z}$ is the successor state measure \eqref{eq:Mgoal} of
policy $\pi_z$.
\end{definition}

\begin{thm}[Forward-backward representation of an MDP, with features as
goals]
\label{thm:main2}
Consider an MDP with state space $S$ and action space
$A$.
Let $\phi\from S\times A\to G$ be a function from state-actions to some
goal space $G=\R^k$.

Let $F$, $B$, and $\bar m$ be an extended forward-backward representation
of the MDP with respect to some measure $\rho$ over $G$.

Then the following holds.
Let $R\from S\times A \to \R$ be any bounded reward function, and assume that
this reward function depends only on $g=\phi(s,a)$, namely, that there exists a
function $r\from G\to \R$ such that $R(s,a)=r(\phi(s,a))$.
Set
\begin{equation}
\label{eq:zR2}
%z_R\deq \E_{g\sim \rho} \left[r(g)B(g)\right].
z_R\deq \int_{g\in G} r(g)B(g) \,\rho(\d g)
\end{equation}
assuming the integral exists.

Then:
\begin{enumerate}
\item $\pi_{z_R}$ is an optimal policy for reward $R$ in the MDP.

\item For any $z\in Z$, the $Q$-function of policy $\pi_z$ for the reward
$R$ is equal to
\begin{equation}
\label{eq:Qfunc_general}
Q^{\pi_z}(s,a)=\transp{F(s,a,z)}z_R+\bar V^{z}(s)
\end{equation}
and the optimal $Q$-function $Q^\star_R$ is obtained when $z=z_R$:
\begin{equation}
Q^\star_R(s,a)=\transp{F(s,a,z_R)}z_R+\bar V^{z_R}(s).
\end{equation}

Here
\begin{equation}
\bar V^{z}(s)\deq \int_{g\in G} \bar m(s,z,g)r(g)\,\rho(\d g)
\end{equation}
and in particular $\bar V=0$ if $\bar m=0$.

The advantages $Q^{\pi_z}(s,a)-Q^{\pi_z}(s,a')$ do not
depend on $\bar V$, so computing $\bar V$ is not necessary to obtain the
policies.

\item If $\bar m=0$, then for any state-action $(s,a)$ one has
\NDY{this does not work with $\bar m$!!!}
\begin{equation}
Q^\star_R(s,a)=\transp{F(s,a,z_R)}z_R
=
\sup_{z\in Z} \transp{F(s,a,z)}z_R. 
\end{equation}
\end{enumerate}
\end{thm}

(We do not claim that $\bar V$ is the value function and $\transp{F}z_R$
the advantage function, only that the sum is the $Q$-function. When $\bar
m=0$, the term $\transp{F}z_R$ is the whole $Q$-function.)

The last point of the theorem is a form of policy improvement. 
Indeed, by the second point, $\transp{F(s,a,z)}z_R$ is the estimated
$Q$-function of policy $\pi_z$ for rewards $r$. This may be useful if $z_R$
falls outside of the training distribution for $F$: then the values of
$F(s,a,z_R)$ may not be safe to use. In that case, it may be useful to
use a finite set $Z'\subset Z$ of values of $z$ closer to the training
distribution, and use the estimate $\sup_{z\in Z'} \transp{F(s,a,z)}z_R$
instead of $\transp{F(s,a,z_R)}z_R$ for the optimal $Q$-function. A similar option has been used, e.g., in
\cite{borsa2018universal}, but in the end it was not necessary in our
experiments.

\begin{remark}
Formally, the statement holds for arbitrary $\rho$, but it only makes
sense if $\rho$ has full support (or at least covers all reachable parts
of the state space): \eqref{eq:extendedFB} requires the support of
$M^{\pi_z}$ to be included in that of $\rho$. Otherwise, FB
representations may not exist and the statement is empty.
\end{remark}

\subsection{Existence of Exact $FB$ Solutions, Influence of Dimension
$d$, Uniqueness}
\label{sec:dimension}

\paragraph{Existence of exact $FB$ representations in finite spaces.}
We now prove existence of an exact solution for finite spaces if the
representation dimension $d$ is at least $\#S\times \#A$. Solutions are
never unique: one may always multiply $F$ by an invertible matrix $C$ and
multiply $B$ by $(\transp{C})^{-1}$, see Remark~\ref{rem:normB} below (this allows us to impose
orthonormality of $B$ in the experiments).

The constraint $d\geq \#S\times \#A$ can be largely overestimated
depending on the tasks of interest, though. For instance, we
prove below that in an $n$-dimensional toric grid $S=\{1,\ldots,k\}^n$,
$d=2n$ is enough to obtain optimal policies for 
reaching every target state (a set of tasks smaller than optimizing
all possible rewards).

\begin{proposition}[Existence of an exact $FB$ representation for finite
state spaces]
\label{prop:existence}
Assume that the state and action spaces $S$ and $A$ of an MDP are finite.
Let $Z=\R^d$ with $d\geq \#S\times \#A$. Let $\rho$ be any measure 
on $S\times A$, with $\rho(s,a)>0$ for any $(s,a)$.

Then there exists $F\from S\times A\times Z\to Z$ and $B\from S\times
A\to Z$, such that $\transp{F}B$ is equal to the successor state density
of $\pi_z$ with respect to $\rho$:
\begin{equation}
\transp{F}(s,a,z)B(s',a')=\sum_{t\geq 0} \gamma^t \frac{\Pr((s_t,a_t)=
(s', a') \mid s_0=s,\,a_0=a,\,\pi_z)}{\rho(s',a')}
\end{equation}
for any $z\in Z$ and any state-actions $(s,a)$ and $(s',a')$, where
$\pi_z$ is defined as in Definition~\ref{def:fb} by $\pi_z(s)=\argmax_a
\transp{F(s,a,z)}z$.
%, and where the probability
%is over trajectories $(s_t,a_t)_{t \geq 0}$ following policy $\pi_z$.
\end{proposition}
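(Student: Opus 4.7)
The plan is to exploit the dimensional slack by choosing $B$ to literally index the state-actions, which collapses the self-referential fixed-point equation into the standard Bellman optimality condition. Let $n\deq \#S\cdot \#A$ and first treat $d=n$; the case $d>n$ will follow by padding with zeros. Fix an enumeration of $S\times A$ and define $B(s',a')\deq e_{(s',a')}\in\R^n$, the standard basis vector at coordinate $(s',a')$. With this choice of $B$, the required identity $\transp{F(s,a,z)}B(s',a')=m^{\pi_z}(s,a,s',a')$ is equivalent to
\[
F(s,a,z)_{(s',a')}=m^{\pi_z}(s,a,s',a'),
\]
so once each policy $\pi_z$ is fixed, the representation $F$ is fully determined. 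Only the greedy-policy condition $\pi_z(s)=\argmax_a \transp{F(s,a,z)}z$ remains to be ensured.

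The key computation is that, using $\rho(s',a')>0$, one can interpret $z$ as a reward $r_z(s',a')\deq z_{(s',a')}/\rho(s',a')$ on $S\times A$; then
\[
\transp{F(s,a,z)}z=\sum_{s',a'} m^{\pi_z}(s,a,s',a')\,z_{(s',a')}=\sum_{s',a'}\sum_{t\geq 0}\gamma^t\Pr((s_t,a_t)=(s',a')\mid s,a,\pi_z)\,r_z(s',a')=Q^{\pi_z}_{r_z}(s,a).
\]
Hence the self-consistency condition reduces to $\pi_z(s)\in\argmax_a Q^{\pi_z}_{r_z}(s,a)$, i.e.\ $\pi_z$ must be greedy for its own $Q$-function under reward $r_z$. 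This is exactly the Bellman optimality characterization of optimal deterministic policies in a finite MDP.

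The construction is therefore the following: for each $z\in\R^d$ pick $\pi_z$ to be any deterministic optimal policy for the reward $r_z$ (such a policy exists by classical finite-MDP theory, with ties broken arbitrarily and independently across $z$), and then set $F(s,a,z)_{(s',a')}\deq m^{\pi_z}(s,a,s',a')$. Both defining properties hold by construction. For the case $d>n$, extend $B$ and $F$ by zeros in the extra coordinates, so that only the first $n$ coordinates of $z$ contribute to $\transp{F}B$ or $\transp{F}z$ and the argument reduces to $d=n$. The main conceptual obstacle is the self-referential coupling between $\pi_z$ and $F$; the reduction to Bellman optimality via the identification $z\leftrightarrow r_z$ dissolves it entirely, after which nothing is left but bookkeeping.
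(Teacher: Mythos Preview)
Your proof is correct and is essentially the same as the paper's: both set $B$ to the one-hot basis, define $r_z(s',a')=z_{(s',a')}/\rho(s',a')$, observe that $\transp{F(s,a,z)}z=Q^{\pi_z}_{r_z}(s,a)$, and choose $\pi_z$ to be any optimal policy for $r_z$ so that the greedy condition $\pi_z(s)\in\argmax_a Q^{\pi_z}_{r_z}(s,a)$ is met. The paper merely phrases the reward identification as the inverse of the bijection $r\mapsto z_R=\E_\rho[r\,B]$, but this is the same map you write down.
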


\paragraph{A small dimension $d$ can be enough for navigation: examples.}
In practice, even a small $d$ can be enough to get optimal policies
for reaching arbitrary many states (as opposed to optimizing all possible
rewards). Let us give an example with $S$ a toric $n$-dimensional grid of
size $k$.

Let us start with $n=1$.  Take $S=\{0,\ldots,k-1\}$ to be a length-$k$
cycle with three actions $a\in \{-1,0,1\}$ (go left, stay in place, go
right). Take $d=2$, so that $Z=\R^2\simeq\C$.

We consider the tasks of reaching an arbitrary target state $s'$, for
every $s'\in S$. Thus the goal state is $G=S$ in the notation of
Theorem~\ref{thm:main2}, and $B$ only depends on $s'$. The policy for
such a reward is $\pi_{z_R}=\pi_{B(s')}$.

For a state $s\in \{0,\ldots,k-1\}$ and action $a\in
\{-1,0,1\}$, define
\begin{equation}
F(s,a,z)\deq e^{2i\pi(s+a)/k},\qquad B(s)\deq e^{2i\pi s/k}.
\end{equation}

Then one checks that $\pi_{B(s')}$ is the optimal policy for reaching
$s'$, for every $s'\in S$.
Indeed, $\transp{F(s,a,z_R)}z_R=\cos (2\pi (s+a-s')/k)$. This is
maximized for the action $a$ that brings $s$ closer to $s'$.

So the policies will be optimal for reaching every target $s'\in S$,
despite the dimension being only $2$.

By taking the product of $n$ copies of this example, this also works on
the $n$-dimensional toric grid $S=\{0,\ldots,k-1\}^n$ with $2n+1$ actions
(add $\pm1$ in each direction or stay in place), with a representation of
dimension $d=2n$ in $\C^n$, namely, by taking $B(s)_j\deq e^{2i\pi
s_j/k}$ for ecah direction $j$ and likewise for $F$. Then $\pi_{B(s')}$
is the optimal policy for reaching $s'$ for every $s'\in S$.

\NDY{do we learn such representations in practice?}

\NDY{do we make official statements for this?}

More generally, if one is only interested in the optimal policies for 
reaching states, then it is easy to show that there exist functions
$F\from S\times A\to Z$ and $B\from S\to Z$
such that the policies $\pi_z$ describe the optimal policies to reach
each state: it is enough that $B$ be injective (typically requiring
$d=\dim(S)$). Indeed, for any state $s\in S$, let $\pi^\star_s$ be the
optimal policy to reach $s$. We want $\pi_z$ to be equal to
$\pi^{\star}_s$ for $z=B(s)$ (the value of $z_R$ for a reward located at
$s$). This translates as $\argmax_a
\transp{F(s',a,B(s))}B(s)=\pi^\star_s(s')$ for every other state $s'$.
This is realized just by letting $F$ be any function such that
$F(s',\pi^\star_s(s'),B(s))\deq B(s)$ and $F(s',a,B(s))\deq -B(s)$ for
every other action $a$. As soon as $B$ is injective, there exists such a
function $F$. \option{(Unfortunately, we are not able to show that the learning
algorithm reaches such a solution.)}

\bigskip

Let us turn to uniqueness of $F$ and $B$.

\begin{remark}
\label{rem:normB}
Let $C$ be an invertible $d\times d$ matrix. Given $F$ and $B$ as in
Theorem~\ref{thm:main}, define
\begin{equation}
B'(s,a)\deq CB(s,a),\qquad F'(s,a,z)\deq (\transp{C})^{-1}F(s,a,C^{-1}z)
\end{equation}
together with the policies $\pi'_z(s)\deq \argmax_a \transp{F'(s,a,z)}z$.
For each reward $r$, define $z'_R\deq \E_{(s,a)\sim \rho}
[r(s,a)B'(s,a)]$.

Then this operation does not change the policies or estimated $Q$-values: for any reward,  we have $\pi'_{z'_R}=\pi_{z_R}$, and
$\transp{F'(s,a,z'_R)}z'_R=\transp{F(s,a,z_R)}z_R$.

In particular, assume that the components of $B$ are linearly
independent. Then,
taking $C=
\left(\E_{(s,a)\sim \rho} B(s,a)\transp{B(s,a)}\right)^{-1/2}$, $B'$ is
$L^2(\rho)$-orthonormal. So up to reducing the dimension $d$ to $\rank(B)$, we can always assume that
$B$ is $L^2(\rho)$-orthonormal, namely, that $\E_{(s,a)\sim \rho}
B(s,a)\transp{B(s,a)}=\Id$.
\end{remark}

Reduction to orthonormal $B$ will be useful in some proofs below.
Even after imposing that $B$ be orthonormal, solutions are not unique, as
one can still apply a rotation matrix on the variable $z$.

For a single policy $\pi_z$, the $\transp{F}B$ decomposition may be further standardized in several
ways: after a linear change of variables in $\R^d$, and up to decreasing
$d$ by removing unused directions in $\R^d$, one may assume either that
$\Cov_\rho B=\Id$ and $\Cov_\rho F$ is diagonal, or that $\Cov_\rho
F=\Cov_\rho B$ is diagonal, or write the decomposition as
$\transp{\tilde F}D\tilde B$ with $D$ diagonal and $\Cov_\rho \tilde
F=\Cov_\rho \tilde B=\Id$, thus corresponding to an approximation of the
singular value decomposition of the successor measure $M^{\pi_z}$ in
$L^2(\rho)$. However, since $B$ is shared between all values of $z$ and
all policies $\pi_z$, it is a priori not possible to realize this for all
$z$ simultaneously.

\subsection{Approximate Solutions Provide Approximately Optimal Policies}
\label{sec:approx}

Here we prove that the optimality in Theorems~\ref{thm:main}
and~\ref{thm:main2} is robust to approximation errors during training: 
approximate solutions still provide approximately
optimal policies.
We
deal first with the approximation errors on $(F,B)$ during unsupervised
training, then on $z_R$ during the reward
estimation phase (in case the reward is not known explicity).

\subsubsection{Influence of Approximate $F$ and $B$: Optimality Gaps are
Proportional to $m^\pi-\transp{F}B$}

In continuous spaces, Theorems~\ref{thm:main} and \ref{thm:main2} are somewhat spurious: the equality
$\transp{F}B=m$ that defines FB representations will never hold exactly with finite representation
dimension $d$. 
%(Equality would require extending the setting to cover infinite $d$ and distributions.)
Instead, $\transp{F}B$ will only be a rank-$d$ approximation of $m$.
Even in finite spaces, since $F$ and $B$ are learned by a neural
network, we can only expect that $\transp{F}B\approx m$ in general.
Therefore, we provide results that extend Theorems~\ref{thm:main} and
\ref{thm:main2} to approximate training and approximate FB
representations.

Optimality gaps are directly controlled by the error $m^\pi-\transp{F}B$ on the
solution $\transp{F}B$.
We provide this result for different notions of
approximations between $m^\pi$ and $\transp{F}B$. First, in sup norm over $(s,a)$
but in expectation
over $(s',a')$ (so that a perfect model of the successor states $(s',a')$ of each
$(s,a)$ is not necessary, only an average model). Second, 
for the weak topology on
measures (this is the most relevant in continuous spaces: for instance, a
Dirac measure can be approached by a continuous model in the weak
topology). 
Finally, we provide pointwise estimates instead of only norms: for any reward, we show that the optimality gaps at each state can be 
bounded by an explicit matrix product directly involving the FB error matrix
$m^{\pi}-\transp{F}B$ (Theorem~\ref{thm:pointwiseapprox}).
%\option{Next, we allow for some averaging over $(s,a)$.}

$F$ and $B$ are trained such that 
$\transp{F(s,a,z)}B(s',a')$ approximates the successor state density
$m^{\pi_{z}}(s,a,s',a')$. In the simplest case, we prove that
if for some reward $R$,
\begin{equation}
\E_{(s',a')\sim \rho} \abs{\transp{F(s,a,z_R)}B(s',a')-
m^{\pi_{z_R}}(s,a,s',a')}\leq \eps
\end{equation}
for every $(s,a)$,
then the optimality gap of policy $\pi_{z_R}$ is at most
$(3\eps/(1-\gamma))\sup\abs{R}$ for that reward (Theorem~\ref{thm:approx},
first case).
%\option{Theorem~\ref{thm:approx2} further allows for some averaging over $(s,a)$.}

In continuous spaces, $m^\pi$ is usually a distribution
(Appendix~\ref{sec:diracs}), so such an approximation will not hold, and it is better to work on the measures
themselves rather than their densities, namely, to compare
$\transp{F}B\rho$ to $M^\pi$ instead of $\transp{F}B$ to $m^\pi$. We prove that if $\transp{F}B\rho$ is close
to $M^\pi$ in the weak topology,
 then the resulting policies are
optimal for any Lipschitz reward.\footnote{This also holds for continuous
rewards, but the Lipschitz assumption yields an explicit bound in
Theorem~\ref{thm:approx}.}

Remember that a sequence of nonnegative measures $\mu_n$ converges weakly to $\mu$ if
for any bounded, continuous function $f$, $\int f(x)\mu_n(d x)$ converges
to $\int f(x)\mu(\d x)$ (\cite{bogachev2007measure}, \S8.1). The associated
topology can be defined via the
following \emph{Kantorovich--Rubinstein} norm on nonnegative measures
(\cite{bogachev2007measure}, \S8.3)
\begin{equation}
\KRnorm{\mu-\mu'}\deq \sup \left\{
\abs{\int f(x)\mu(\d x)-\int f(x)\mu'(\d x)} : \,
f\text{ $1$-Lipschitz function with }\sup \abs{f}\leq 1
\right\}
\end{equation}
where we have equipped the state-action space with any metric compatible
with its topology.\footnote{The Kantorovich--Rubinstein norm is closely related to the $L^1$ Wasserstein
distance on probability distributions, but slightly more general as it does
not require the distance functions to be integrable: the Wasserstein
distance metrizes weak convergence among those probability measures such that
$\E[d(x,x_0)]<\infty$.}

\bigskip

The following theorem states that if $\transp{F}B$ approximates
the successor state density of the policy $\pi_z$ (for various sorts of
approximations), then $\pi_z$ is approximately optimal. Given a reward
function $r$ on state-actions, we denote
\begin{equation}
\norm{r}_\infty\deq \sup_{(s,a)\in S\times A} \abs{r(s,a)}
\end{equation}
and
\begin{equation}
\lipnorm{r}\deq \sup_{(s,a)\neq (s',a')\in S\times A}
\frac{r(s,a)-r(s',a')}{d((s,a),(s',a'))}
\end{equation}
where we have chosen any metric on state-actions.

The first statement is for any bounded reward. The second statement only
assumes an $\transp{F}B$ approximation in the weak topology but only applies to
Lipschitz rewards. The third statement is more general and is how we
prove the first two: weaker assumptions on $\transp{F}B$ work on a
stricter class of rewards.

% We first state a result assuming that the error is controlled uniformly
% over state-actions $(s,a)$. \option{Then we turn to a theorem using errors in
% $L^1$ norm under suitable measures.}

\begin{thm}[If $F$ and $B$ approximate successor states, then the
policies $\pi_z$ yield approximately optimal returns]
\label{thm:approx}
Let $F\from S\times A\times Z\to Z$ and $B\from S\times A\to Z$
be any functions, and define the policy $\pi_z(s)=\argmax_a
\transp{F(s,a,z)}z$ for each $z\in Z$.

Let $\rho$ be any positive probability distribution on $S\times A$, and for each
policy $\pi$, let $m^\pi$ be the density of the successor state measure
$M^\pi$ of $\pi$ with respect to $\rho$. Let
\begin{equation}
\hat m^z(s,a,s',a')\deq \transp{F(s,a,z)}B(s',a'),\qquad \hat M^z(s,a,\d
s',\d a')\deq \hat m^z(s,a,s',a')\rho(\d s',\d a')
\end{equation}
be the estimates of $m$ and $M$ obtained via the model $F$ and $B$.

Let $r\from S\times A\to \R$ be any bounded reward function. Let $V^\star$
be the optimal value function for this reward $r$. Let $\hat V^{\pi_z}$
be the value function of policy $\pi_z$ for this reward. Let
$z_R=\E_{(s,a)\sim \rho} [r(s,a)B(s,a)]$.

Then:
\begin{enumerate}
% \item If $\abs{\hat m^{z_R}(s,a,s',a')-m^{\pi_{z_R}}(s,a,s',a')}\leq
% \eps$ for any $(s,a)$ and $(s',a')$ in $S\times A$, then
% $\norm{\hat V^{\pi_{z_R}}-V^\star}_\infty\leq 2\eps
% \norm{r}_\infty/(1-\gamma)$.
% 
\item If $\E_{(s',a')\sim \rho} \abs{\hat
m^{z_R}(s,a,s',a')-m^{\pi_{z_R}}(s,a,s',a')}\leq
\eps$ for any $(s,a)$ in $S\times A$, then
$\norm{\hat V^{\pi_{z_R}}-V^\star}_\infty\leq 3\eps
\norm{r}_\infty/(1-\gamma)$.

\item If $r$ is Lipschitz and $\KRnorm{\hat
M^{z_R}(s,a,\cdot)-M^{\pi_{z_R}}(s,a,\cdot)}\leq \eps$ for any
$(s,a)\in S\times A$, then $\norm{\hat V^{\pi_{z_R}}-V^\star}_\infty\leq
3\eps \max(\norm{r}_\infty,\lipnorm{r})/(1-\gamma)$.

\item More generally, let $\norm{\cdot}_A$ be a norm on functions and
$\norm{\cdot}_B$ a norm on measures, such that $\int f\d \mu \leq
\norm{f}_A\norm{\mu}_B$ for any function $f$ and measure $\mu$. Then for any
reward function $r$ such that $\norm{r}_A<\infty$,
\begin{equation}
\norm{\hat V^{\pi_{z_R}}-V^\star}_\infty\leq
\frac{3\norm{r}_A}{1-\gamma}\sup_{s,a} \norm{\hat
M^{z_R}(s,a,\cdot)-M^{\pi_{z_R}}(s,a,\cdot)}_B.
\end{equation}
Moreover, the optimal $Q$-function is close to $\transp{F(s,a,z_R)}z_R$:
\begin{equation}
\sup_{s,a}\abs{\transp{F(s,a,z_R)}z_R-Q^\star(s,a)}\leq
\frac{2\norm{r}_A}{1-\gamma}\sup_{s,a} \norm{\hat
M^{z_R}(s,a,\cdot)-M^{\pi_{z_R}}(s,a,\cdot)}_B.
\end{equation}
\end{enumerate}
\end{thm}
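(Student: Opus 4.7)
I would prove item~3 first and derive items~1 and~2 by specialization: item~1 uses the dual pair $\norm{\cdot}_A=\norm{\cdot}_\infty$ against the $L^1(\rho)$ norm on densities, and item~2 uses $\norm{\cdot}_A=\max(\norm{\cdot}_\infty,\lipnorm{\cdot})$ paired with $\KRnorm{\cdot}$; in both cases the duality $\abs{\int f\,\d\mu}\leq \norm{f}_A\norm{\mu}_B$ is immediate from the definitions.

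The central identity is that $\transp{F(s,a,z_R)}z_R$ is exactly the reward-integral of the estimated occupancy measure. Substituting $z_R=\E_{(s',a')\sim\rho}[r(s',a')B(s',a')]$ and using $\hat M^{z_R}(s,a,\d s',\d a')=\transp{F(s,a,z_R)}B(s',a')\,\rho(\d s',\d a')$ gives
\begin{equation}
\transp{F(s,a,z_R)}z_R = \int r(s',a')\,\hat M^{z_R}(s,a,\d s',\d a'),
\end{equation}
whereas $Q^{\pi_{z_R}}(s,a)=\int r\,\d M^{\pi_{z_R}}(s,a,\cdot)$ by definition of the successor measure. Applying the duality pointwise in $(s,a)$ yields
\begin{equation}
\sup_{s,a}\abs{\transp{F(s,a,z_R)}z_R-Q^{\pi_{z_R}}(s,a)} \leq \norm{r}_A\sup_{s,a}\norm{\hat M^{z_R}(s,a,\cdot)-M^{\pi_{z_R}}(s,a,\cdot)}_B,
\end{equation}
which I denote by $\delta$.

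The rest is a near-greedy Bellman argument. Since $\pi_{z_R}$ maximizes $\transp{F(s,\cdot,z_R)}z_R$, applying the previous bound twice (at $\pi_{z_R}(s)$ and at an arbitrary $a$) gives $Q^{\pi_{z_R}}(s,\pi_{z_R}(s))\geq Q^{\pi_{z_R}}(s,a)-2\delta$, i.e., $V^{\pi_{z_R}}(s)\geq (T^\ast V^{\pi_{z_R}})(s)-2\delta$ where $T^\ast$ is the Bellman optimality operator on value functions and I use the identity $T^\ast V^\pi=\max_a Q^\pi(\cdot,a)$. Since $T^\ast V^\ast=V^\ast$ and $T^\ast$ is a $\gamma$-contraction in sup norm,
\begin{equation}
\norm{V^\ast-V^{\pi_{z_R}}}_\infty \leq \gamma\norm{V^\ast-V^{\pi_{z_R}}}_\infty+2\delta,
\end{equation}
so $\norm{V^\ast-V^{\pi_{z_R}}}_\infty\leq 2\delta/(1-\gamma)$. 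Combining with the pointwise $Q$-bound gives $\norm{\transp{F(\cdot,\cdot,z_R)}z_R-Q^\ast}_\infty\leq \delta+\gamma\cdot 2\delta/(1-\gamma)\leq 2\delta/(1-\gamma)$, and the stated factor $3/(1-\gamma)$ for $\norm{V^{\pi_{z_R}}-V^\ast}_\infty$ follows from a final triangle inequality $\norm{V^\ast-V^{\pi_{z_R}}}_\infty\leq \norm{\transp{F}z_R-Q^\ast}_\infty+\norm{\transp{F}z_R-Q^{\pi_{z_R}}}_\infty$.

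The main obstacle is the self-referential character of the setup: $\pi_{z_R}$ is greedy with respect to $\transp{F}z_R$, yet that quantity approximates $Q^{\pi_{z_R}}$ rather than the unknown $Q^\ast$, so off-the-shelf approximate-value-iteration bounds (which assume $\norm{\hat Q-Q^\ast}_\infty\leq \eps$) do not apply directly. The identity $T^\ast V^\pi=\max_a Q^\pi(\cdot,a)$ is what resolves this: near-greediness of $\pi_{z_R}$ with respect to an approximation of its own $Q$-function makes $V^{\pi_{z_R}}$ a near-fixed-point of $T^\ast$, and the contraction of $T^\ast$ closes the loop without ever requiring an a~priori approximation of $Q^\ast$. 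Once item~3 is in hand, items~1 and~2 reduce to checking the duality inequality for their specific norm pairs.
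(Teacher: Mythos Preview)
Your proof is correct and somewhat more direct than the paper's. Both begin identically: specialize items~1 and~2 to item~3 via the appropriate dual norm pairs, then establish $\sup_{s,a}\abs{\transp{F(s,a,z_R)}z_R - Q^{\pi_{z_R}}(s,a)} \leq \delta$ by integrating $r$ against the measure difference $\hat M^{z_R}-M^{\pi_{z_R}}$.

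The divergence is in the near-greedy step. The paper isolates this as a standalone lemma (Proposition~\ref{prop:pifgap}): if $\pi_f=\argmax_a f(s,a)$, then $\norm{f-Q^\ast}_\infty\leq \frac{2}{1-\gamma}\norm{f-Q^{\pi_f}}_\infty$ and $\norm{Q^{\pi_f}-Q^\ast}_\infty\leq \frac{3}{1-\gamma}\norm{f-Q^{\pi_f}}_\infty$. Its proof goes through a reward-perturbation trick: one shows that $f$ satisfies the \emph{optimal} Bellman equation for a perturbed reward $r-\eps'$ with $\norm{\eps'}_\infty\leq 2\norm{f-Q^{\pi_f}}_\infty$, then invokes the Lipschitz bound $\norm{Q_1^\ast-Q_2^\ast}_\infty\leq\frac{1}{1-\gamma}\norm{r_1-r_2}_\infty$ (Proposition~\ref{prop:rtoQ}). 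Your route bypasses both auxiliary propositions: from the $2\delta$-near-greediness of $\pi_{z_R}$ with respect to its own $Q$-function you read off $V^{\pi_{z_R}}\geq T^\ast V^{\pi_{z_R}}-2\delta$ via the identity $T^\ast V^\pi=\max_a Q^\pi(\cdot,a)$, and the $\gamma$-contraction of $T^\ast$ (together with $V^{\pi_{z_R}}\leq V^\ast$) closes the loop. This actually delivers the slightly sharper constant $2/(1-\gamma)$ for the value gap, and your derivation of the $Q^\ast$ bound via $\abs{Q^\ast-Q^{\pi_{z_R}}}\leq \gamma\norm{V^\ast-V^{\pi_{z_R}}}_\infty$ is clean. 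The paper's packaging has the advantage of a reusable lemma; yours is shorter and avoids the detour through perturbed rewards.

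One presentation glitch: your final sentence, ``the stated factor $3/(1-\gamma)$ \ldots\ follows from a final triangle inequality $\norm{V^\ast-V^{\pi_{z_R}}}_\infty\leq \norm{\transp{F}z_R-Q^\ast}_\infty+\norm{\transp{F}z_R-Q^{\pi_{z_R}}}_\infty$'', is both unnecessary (you already have $2/(1-\gamma)\leq 3/(1-\gamma)$) and not well-formed as written, since the left side is a bound on value functions while the right side involves $Q$-functions at arbitrary actions. Simply drop it and note that your bound is stronger than the one stated.
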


\paragraph{Pointwise optimality gaps.}
We now turn to a more precise estimation of the optimality gap at each
state, expressed
directly as a matrix product involving the FB error $m^\pi-\transp{F}B$.

Here we assume that the state space is  finite: this is not essential but
simplifies notation since everything can be represented as matrices and
vectors. For each stochastic policy $\pi\from S\to \mathrm{Prob}(A)$, we denote $P_\pi$ the associated
stochastic matrix on state-actions:
\begin{equation}
(P_\pi)_{(sa)(s'a')}\deq P(s'|s,a)\pi(s')[a'].
\end{equation}
We also view rewards and $Q$-functions as vectors indexed by
state-actions.

\begin{thm}
\label{thm:pointwiseapprox}
Assume that the state space is finite. Let $F\from S\times A\times
\R^d\to \R^d$ and $B\from S\times A\to \R^d$ be any functions. 
Define $\pi_z$ as in Definition~\ref{def:fb}.
Let $\rho>0$
be any probability distribution over state-actions, and let $m^\pi$ be the successor
density \eqref{eq:defm} of policy $\pi$ with respect to $\rho$.

For each $z\in \R^d$, define the
FB
error $E(z)$ as a matrix over state-actions:
\begin{equation}
E(z)_{(sa)(s'a')}\deq m^{\pi_z}(s,a,s',a')-\transp{F(s,a,z)}B(s',a').
\end{equation}

Let $r$ be any reward function and
let $Q^\star$ and $\pi^\star$ be its optimal $Q$-function and policy. Let
$z_R=\E _\rho[r.B]$ as in
Theorem~\ref{thm:main}, and let $Q^{\pi_{z_R}}$ be the $Q$-function of
policy $\pi_{z_R}$.

Then we have the componentwise inequality between state-action vectors
\begin{equation}
0\leq Q^\star-Q^{\pi_{z_R}} \leq \left(\sum_{t\geq 0} \gamma^{t+1} P_{\pi^\star}^t
\right)\left(P_{\pi^\star}-P_{\pi_{z_R}}\right)E(z_R)\diag(\rho)\,r.
\end{equation}

In particular, if $E(z_R)=0$ then $\pi_{z_R}$ is optimal for reward $r$.
\end{thm}

The matrix $\sum_{t\geq 0} \gamma^{t+1} P_{\pi^\star}^t$ represents states
visited along the optimal trajectory starting at the initial state.
Multiplying by $(P_{\pi^\star}-P_{\pi_{z_R}})$ visits states one step away
from this trajectory. Therefore, what matters are the values of the FB
error matrix
$E(z_R)_{(sa)(s'a')}$ at state-actions $(s,a)$ one step away from the
optimal trajectories.

In unsupervised RL, we have no control over the first part $\sum_{t\geq 0}
\gamma^{t+1} P_{\pi^\star}^t$, which depends only on the optimal
trajectories of the unknown future tasks $r$. Therefore, in general it makes
sense to just minimize $E(z)$ in some matrix norm, for as many values of
$z$ as possible. (The choice of
matrix norm influences which rewards will be best optimized, as
illustrated by Theorem~\ref{thm:approx}.)

\subsubsection{An approximate $z_R$ yields an approximately optimal policy}

We now turn to the second source of approximation: computing $z_R$ in the
reward estimation phase. This is a problem only if the reward is not
specified explicitly.

We deal in turn with the effect of using a model of the reward function,
and the effect of estimating $z_R=\E_{(s,a)\sim \rho} r(s,a)B(s,a)$ via
sampling.

Which of these options is better depends a lot on the situation.  If
training of $F$ and $B$ is perfect, by construction the policies are
optimal for each $z_R$: thus, estimating $z_R$ from rewards sampled at
$N$ states $(s_i,a_i)\sim \rho$ will produce the optimal policy for
exactly that empirical reward, namely, a nonzero reward at each $(s_i,
a_i)$ but zero everywhere else, thus overfitting the reward
function. Reducing the dimension $d$ reduces this effect, since rewards
are projected on the span of the features in $B$: $B$ plays both the
roles of a transition model and a reward regularizer. This appears as a
$\sqrt{d/N}$ factor in Theorem~\ref{thm:approxzR} below.

Thus, if both the number of samples to train $F$ and $B$ and the number
of reward samples are small, using a smaller $d$ will regularize both the
model of the environment and the model of the reward. However, if the
number of samples to train $F$ and $B$ is large, yielding an excellent
model of the environment, but the number of reward samples is small, then
learning a model of the reward function will be a better option than
direct empirical estimation of $z_R$.

\bigskip

The first result below states that reward misidentification comes on top of the
 approximation error of the $\transp{F}B$ model. This is relevant, for instance, if a reward model
$\hat r$ is estimated by an external model using some reward values.

\begin{proposition}[Influence of estimating $z_R$ by an approximate
reward]
\label{prop:approxr}
Assume $\rho$ is a probability distribution.
Let $\hat r\from S\times A\to \R$ be any reward function.
Let $\hat
z_R=\E_{(s,a)\sim \rho} [\hat r(s,a)B(s,a)]$.

Let $\eps_{FB}$ be the error attained
by the $\transp{F}B$ model in
Theorem~\ref{thm:approx} for reward $\hat r$; namely, assume that
$\norm{\hat
V^{\pi_{\hat z_R}}-\hat V^\star}_\infty\leq \eps_{FB}$ with $\hat V^\star$ the optimal
value function for $\hat r$.

Then the policy $\pi_{\hat z_R}(s)=\argmax_a \transp{F(s,a,\hat z_R)}\hat
z_R$ defined by the model $\hat r$ is
$\displaystyle\left(\frac{2 \norm{r-\hat
r}_\infty}{(1-\gamma)}+\eps_{FB}\right)$-optimal for reward $r$.
\end{proposition}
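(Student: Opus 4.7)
The plan is to reduce the claim to two standard facts: a \emph{simulation lemma} relating value functions under nearby reward functions, and the assumed $\eps_{FB}$-suboptimality of $\pi_{\hat z_R}$ for the proxy reward $\hat r$ (which is precisely what Theorem~\ref{thm:approx} provides once we have $\hat z_R=\E_\rho[\hat r\, B]$).

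First, I would prove the simulation lemma: for any policy $\pi$ and any two bounded rewards $r_1,r_2\from S\times A\to\R$, one has $\norm{V_{r_1}^\pi-V_{r_2}^\pi}_\infty\leq \norm{r_1-r_2}_\infty/(1-\gamma)$. This is immediate, because $V^\pi$ is linear in the reward and the value of a constant-bounded reward of magnitude $\delta$ is at most $\delta\sum_{t\ge 0}\gamma^t=\delta/(1-\gamma)$ in absolute value. Once this is in hand, the key inequality comes from the telescoping decomposition
\begin{equation}
V_r^\ast-V_r^{\pi_{\hat z_R}} \;=\; \bigl(V_r^{\pi^\ast_r}-V_{\hat r}^{\pi^\ast_r}\bigr) + \bigl(V_{\hat r}^{\pi^\ast_r}-V_{\hat r}^{\pi_{\hat z_R}}\bigr) + \bigl(V_{\hat r}^{\pi_{\hat z_R}}-V_r^{\pi_{\hat z_R}}\bigr),
\end{equation}
where $\pi^\ast_r$ denotes an optimal policy for $r$.

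The middle term is the crucial one: because $V_{\hat r}^{\pi^\ast_r}\leq V_{\hat r}^\ast$, and because $\pi_{\hat z_R}$ is $\eps_{FB}$-optimal for $\hat r$ by hypothesis, this middle term is at most $\eps_{FB}$. The first and third terms are each bounded by $\norm{r-\hat r}_\infty/(1-\gamma)$ via the simulation lemma applied to the two fixed policies $\pi^\ast_r$ and $\pi_{\hat z_R}$. Assembling these bounds yields the desired suboptimality gap, up to a constant multiplying $\norm{r-\hat r}_\infty/(1-\gamma)$; one then takes the supremum over the starting state to pass to $\norm{\cdot}_\infty$.

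I do not expect any real obstacle: the two ingredients are essentially ``off the shelf'', and the only care needed is bookkeeping on signs when invoking optimality of $\pi^\ast_r$ for $r$ versus optimality (up to $\eps_{FB}$) of $\pi_{\hat z_R}$ for $\hat r$. The interesting conceptual point, which I would comment on briefly, is that the $\eps_{FB}$ term genuinely comes from Theorem~\ref{thm:approx} applied with reward $\hat r$ and the associated vector $\hat z_R=\E_\rho[\hat r B]$, so the proposition is really a composition statement: reward-approximation error propagates additively on top of the FB approximation error measured at the perturbed representation $\hat z_R$.
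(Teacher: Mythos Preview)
Your approach is essentially the same as the paper's: both are a triangle-inequality argument combining the assumed $\eps_{FB}$-suboptimality of $\pi_{\hat z_R}$ for $\hat r$ with a Lipschitz bound on value functions under reward perturbations. The paper's proof is extremely terse (it just says ``triangle inequality'' and invokes Proposition~\ref{prop:rtoQ} for $\abs{V^\ast_{\hat r}-V^\ast_r}$), whereas your telescoping decomposition through $\pi^\ast_r$ makes all three terms explicit.

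One point worth flagging: your bound comes out as $\frac{2\norm{r-\hat r}_\infty}{1-\gamma}+\eps_{FB}$, with a factor $2$ rather than the $1$ stated in the proposition. You are right to hedge with ``up to a constant'': the factor $2$ is in fact necessary. A one-state, two-action example with $r(a_1)=1,\,r(a_2)=-1$ and $\hat r(a_1)=-\eps,\,\hat r(a_2)=\eps$ (so $\pi_{\hat z_R}=a_2$ is optimal for $\hat r$ and $\eps_{FB}=0$) gives a gap of $2/(1-\gamma)$ while $\norm{r-\hat r}_\infty=1+\eps$. The paper's two-term sketch omits the passage from $V^{\pi_{\hat z_R}}_{\hat r}$ back to $V^{\pi_{\hat z_R}}_r$, which is precisely your third term and where the second $\norm{r-\hat r}_\infty/(1-\gamma)$ enters. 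So your proof is the correct completion of the paper's argument, and the discrepancy in the constant is the paper's, not yours.
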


This still assumes that the expectation $\hat z_R=\E_{(s,a)\sim \rho}
[\hat r(s,a)B(s,a)]$ is computed exactly for the model $\hat r$. If $\hat
r$ is given by an explicit model, this expectation can in principle be computed on the whole replay
buffer used to train $F$ and $B$, so variance would be low. Nevertheless, we
provide an additional statement which covers the influence of the
variance of the estimator of $z_R$, whether this estimator uses an
external model $\hat r$ or a direct empirical average reward observations
$r(s,a)B(s,a)$.

\begin{definition}
\label{def:skew}
The \emph{skewness} $\zeta(B)$ of $B$ is defined as follows.
Assume $B$ is bounded.
Let $B_1,\ldots,B_d\from S\times A\to \R$ be the functions of $(s,a)$
defined by each component of $B$. Let $\langle B\rangle$ be the linear
span of the $(B_i)_{1\leq i\leq d}$ as
functions on $S\times A$. Set
\begin{equation}
\zeta(B)\deq \sup_{f\in \langle B \rangle,\,f\neq 0}
\frac{\norm{f}_\infty}{\norm{f}_{L^2(\rho)}}.
\end{equation}
\end{definition}

\begin{thm}[Influence of estimating $z_R$ by empirical averages]
\label{thm:approxzR}
Assume that $\rho$ is a probability distribution.
Assume that $z_R=\E_{(s,a)\sim \rho} [r(s,a)B(s,a)]$ is estimated via
\begin{equation}
\hat z_R\deq \frac1N \sum_{i=1}^N \hat r_i B(s_i,a_i)
\end{equation}
using $N$ independent samples $(s_i,a_i)\sim \rho$, where the $r_i$ are 
random variables such that $\E [\hat r_i|s_i,a_i]=r(s_i,a_i)$, $\Var
[\hat r_i|s_i,a_i]\leq v$ for some $v\in\R$,
and the $\hat r_i$ are mutually independent given $(s_i,a_i)_{i=1,\ldots,N}$.

Let $V^\star$ be the optimal value function for reward $r$, and let $\hat
V$ be the value function of the estimated policy $\pi_{\hat z_R}$ for reward $r$.

Then, for any $\delta>0$, with probability at least $1-\delta$,
\begin{equation}
\norm{\hat V-V^\star}_\infty \leq
\eps_{FB}+
\frac{2}{1-\gamma} \sqrt{
\frac{\zeta(B)\,d}{N\delta} \left(v+\norm{r(s,a)-\E_{\rho}
r}^2_{L^2(\rho)}\right)
}
\end{equation}
which is therefore the bound on the optimality gap of $\pi_{\hat z_R}$
for $r$. Here $\eps_{FB}$ is the error due to the $\transp{F}B$ model
approximation, defined as in Proposition~\ref{prop:approxr}.
\end{thm}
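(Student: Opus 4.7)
The plan is to reduce the high-probability bound to the deterministic statement of Proposition~\ref{prop:approxr} applied to an effective reward, and then to concentrate the empirical estimator $\hat z_R$ around its mean by a variance computation followed by Markov's inequality.

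By Remark~\ref{rem:normB}, the policies $\pi_z$, the $Q$-estimates $\transp{F(s,a,z)}z$, and the skewness $\zeta(B)$ (which depends only on the linear span $\langle B\rangle$) are invariant under invertible linear changes of $B$. I will therefore assume without loss of generality that $B$ is $L^2(\rho)$-orthonormal, i.e.\ $\E_\rho[B(s,a)\transp{B(s,a)}]=\Id$. I then introduce the \emph{effective reward} $\hat r(s,a)\deq r(s,a)+\transp{B(s,a)}(\hat z_R-z_R)$; orthonormality gives $\E_\rho[\hat r\,B]=z_R+(\hat z_R-z_R)=\hat z_R$, so $\hat z_R$ is the \emph{exact} $B$-representation of $\hat r$ for every realization of the samples. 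Applying Proposition~\ref{prop:approxr} to this $\hat r$ then yields, on each realization,
$$\norm{\hat V-V^\ast}_\infty\leq \eps_{FB}+\frac{\norm{r-\hat r}_\infty}{1-\gamma}=\eps_{FB}+\frac{\norm{\transp{B(\cdot,\cdot)}(\hat z_R-z_R)}_\infty}{1-\gamma},$$
and since $(s,a)\mapsto \transp{B(s,a)}(\hat z_R-z_R)$ belongs to $\langle B\rangle$, the definition of $\zeta(B)$ combined with orthonormality of $B$ gives the deterministic inequality $\norm{\transp{B}(\hat z_R-z_R)}_\infty\leq \zeta(B)\,\norm{\hat z_R-z_R}_2$.

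The remaining step is to concentrate $\norm{\hat z_R-z_R}_2^2$. Decomposing $\hat r_i=r(s_i,a_i)+\eta_i$ with $\E[\eta_i\mid s_i,a_i]=0$ and $\Var[\eta_i\mid s_i,a_i]\leq v$, the noise and signal contributions to $\hat z_R-z_R=\frac{1}{N}\sum_i(\hat r_i B(s_i,a_i)-z_R)$ are uncorrelated, so $\E\norm{\hat z_R-z_R}_2^2=\frac{1}{N}(\operatorname{tr}\Cov(\eta B)+\operatorname{tr}\Cov(rB))$. The noise trace is bounded by $vd/N$ via the orthonormality identity $\E\norm{B}^2=d$. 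For the signal trace $\E[r^2\norm{B}^2]-\norm{z_R}^2$, I would center $r$ by its mean $\bar r\deq \E_\rho r$ — which leaves the policy sub-optimality for $r$ invariant — and then bound $\E[(r-\bar r)^2\norm{B}^2]$ by $\zeta(B)\cdot d\cdot \norm{r-\bar r}^2_{L^2(\rho)}$ through an interpolation between the pointwise estimate $\sup\norm{B}^2\leq \zeta(B)^2$ (itself a consequence of the definition of $\zeta(B)$ applied to the linear functional $c\mapsto \transp{c}B$) and the integrated identity $\E\norm{B}^2=d$. A single application of Markov's inequality at level $\delta$ would then produce, with probability at least $1-\delta$, $\norm{\hat z_R-z_R}_2^2\leq \frac{\zeta(B)d}{N\delta}(v+\norm{r-\bar r}^2_{L^2(\rho)})$, and substitution into the deterministic bound from the first paragraph yields the stated inequality.

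The hard part will be this last variance estimate: extracting the factor $\zeta(B)\cdot d$ (rather than the naive $\zeta(B)^2$) in front of the reward variance, without any $\norm{r}_\infty$ dependence, requires a delicate interplay between the pointwise and integrated sizes of $\norm{B}^2$, combined with the observation that only $\norm{r-\E_\rho r}^2_{L^2(\rho)}$ needs to appear in the final bound since constant shifts of $r$ preserve the optimality gap. The other steps — the reduction to Proposition~\ref{prop:approxr} via the effective reward, the skewness inequality, and Markov — are essentially routine once this estimate is in hand.
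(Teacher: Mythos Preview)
Your overall scaffold matches the paper's proof closely: the reduction to $L^2(\rho)$-orthonormal $B$ via Remark~\ref{rem:normB}, the construction of the effective reward $\hat r=r+\transp{B}(\hat z_R-z_R)$ so that $\hat z_R$ is its exact $B$-representation, the appeal to Proposition~\ref{prop:approxr}, the skewness inequality $\norm{\transp{B}z}_\infty\leq \zeta(B)\norm{z}$, the noise/signal decomposition of $\E\norm{\hat z_R-z_R}^2$, and the concluding Markov step are all exactly as in the paper.

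The divergence---and the gap---is in your treatment of the signal variance $\operatorname{tr}\Cov_\rho(rB)=\norm{rB-\E_\rho[rB]}_{L^2(\rho)}^2$. First, the centering move is not valid: shifting $r$ to $r-\bar r$ leaves the optimality gap of any fixed policy invariant, but it does \emph{not} leave $\norm{\hat z_R-z_R}$ invariant, since $z_R$ shifts by the deterministic vector $\bar r\,\E_\rho B$ whereas $\hat z_R$ shifts by the random vector $\bar r\cdot \tfrac{1}{N}\sum_i B(s_i,a_i)$; equivalently, $\operatorname{tr}\Cov_\rho(rB)\neq \operatorname{tr}\Cov_\rho\bigl((r-\bar r)B\bigr)$ in general, so you cannot swap one for the other inside the variance computation. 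Second, the claimed interpolation $\E_\rho\bigl[(r-\bar r)^2\norm{B}^2\bigr]\leq \zeta(B)\,d\,\norm{r-\bar r}_{L^2(\rho)}^2$ is false as stated: with $d=1$, take $B=M$ on a set of $\rho$-mass $1/M^2$ and $B=0$ elsewhere (so $\E B^2=1$, $\zeta(B)=M$), and let $r-\bar r$ be supported there too; the left side scales like $M^2\norm{r-\bar r}_{L^2}^2$ while the right side scales like $M\norm{r-\bar r}_{L^2}^2$. Third, even granting your estimates, the final substitution does not reproduce the stated exponent: the deterministic step contributes a factor $\zeta(B)$ and your variance bound another $\sqrt{\zeta(B)}$, producing $\zeta(B)^{3/2}$ in front of the square root rather than the stated $\zeta(B)^{1/2}$.

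The paper handles this step differently: it neither centers $r$ nor introduces $\zeta(B)$ into the variance. It bounds $\norm{rB-\E_\rho[rB]}_{L^2(\rho)}^2$ directly by $\norm{r-\E_\rho r}_{L^2(\rho)}^2\,\norm{B}_{L^2(\rho)}^2=d\,\norm{r-\E_\rho r}_{L^2(\rho)}^2$ via a componentwise Cauchy--Schwarz argument, obtaining $\E\norm{\hat z_R-z_R}^2\leq \tfrac{d}{N}\bigl(v+\norm{r-\E_\rho r}_{L^2(\rho)}^2\bigr)$, so that the only $\zeta(B)$ in the final bound enters through the skewness inequality.
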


The proofs are not direct, because $F$ is not continuous with respect to
$z$. Contrary to $Q$-values, successor states are not continuous in the
reward: if an action has reward $1$ and the reward for another action
changes
from $1-\eps$ to $1+\eps$, the return values change by at most $2\eps$,
but the actions and states visited by the optimal policy change a lot. So
it is not possible to reason by continuity on each of the terms involved.

\subsection{$F$ and $B$ as Successor and Predecessor Features of Each
Other, Optimality for Rewards in the Span of $B$}
\label{sec:succpred}

We now give two statements. The first encodes the idea that $F$ encodes
the future of a state while $B$ encodes the past of a state.

The second proves that the $FB$ policies are optimal for any reward that
lies in the linear span of the features learned in $B$; for rewards out of
this span, it is best if the features in $B$ are spatially smooth.

The intuition that $F$ and $B$ encode the future and past of states is
formalized as follows:
if $F$ and $B$ minimize their unsupervised loss, then $F$ is equal to the
successor features from the dual features of $B$, and $B$ is equal to the
\emph{predecessor} features from the dual features of $F$
(Theorem~\ref{thm:forwardbackwardinterpretation}).

This statement holds for a fixed $z$ and the corresponding policy
$\pi_z$. So, for the rest of this section, $z$ is fixed. In this section,
we also assume
that $\rho$ is a probability distribution.

By ``dual'' features we mean the following. Define the $d\times d$ covariance
matrices
\begin{equation}
\label{eq:covariances}
\Cov F\deq \E_{(s,a)\sim \rho} [F(s,a,z)\transp{F(s,a,z)}],
\qquad
\Cov B\deq \E_{(s,a)\sim \rho} [B(s,a)\transp{B(s,a)}].
\end{equation}
Then $(\Cov F)^{-1/2} F(s,a,z)$ is $L^2(\rho)$-orthonormal and likewise
for $B$. The ``dual'' features are $(\Cov F)^{-1} F(s,a,z)$ and $(\Cov
B)^{-1}B(s,a)$, without the square root: these are the least square
solvers for $F$ and $B$ respectively, and these are the ones that appear
below.

The unsupervised forward-backward loss %\eqref{eq:loss}
for a fixed $z$ is
\begin{align}
\label{eq:loss2}
\ell(F,B)&\deq \int \abs{\transp{F(s,a,z)}B(s',a')-\sum_{t\geq 0} \gamma^t
\frac{P_t(\d s',\d a'|s,a,\pi_z)}{\rho(\d s',\d a')}}^2 \rho(\d s,\d
a)\rho(\d s',\d a')
\\&=\norm{\transp{F(\cdot,z)}B(\cdot)-m^{\pi_z}(\cdot,\cdot)}^2_{L^2(\rho)\otimes
L^2(\rho)}.
\end{align}
Thus, minimizers in dimension $d$ correspond to an SVD of
the successor state density in $L^2(\rho)$, truncated to the largest $d$ singular values.

\begin{thm}
\label{thm:forwardbackwardinterpretation}
Consider a smooth parametric model for $F$ and $B$, and assume this model
is overparameterized. \footnote{
Intuitively, a parametric function $f$ is overparameterized if every
possible small change of $f$ can be realized by
a small change of the parameter.
Formally, we say that a parametric family of functions
$\theta \in \Theta\mapsto f_\theta \in L^2(X,\R^d)$ smoothly parameterized by
$\theta$, on some space $X$, is
\emph{overparameterized} if, for any $\theta$, the differential
$\partial_\theta f_\theta$ is surjective from $\Theta$ to $L^2(X,\R^d)$.
For finite $X$, this implies that the
dimension of $\theta$ is larger than $\# X$. For infinite $X$, this
implies that $\dim(\theta)$ is infinite, such as parameterizing functions
on $[0;1]$ by their Fourier expansion.}
Also assume that the data distribution $\rho$ has
positive density everywhere.

Let $z\in Z$. Assume that for this $z$,
$F$ and $B$ lie in $L^2(\rho)$ and achieve a local extremum of
$\ell(F,B)$ within this
parametric model. Namely, the derivative $\partial_\theta \ell(F,B)$ of
the loss with respect to the parameters $\theta$ of $F$ is $0$, and
likewise for $B$.

Then $F$ is equal to $(\Cov B)^{-1}$ times the successor features of $B$:
for any $(s,a)\in S\times A$,
\begin{equation}
(\Cov B) F(s,a,z)=\sum_{t\geq 0} \gamma^t \int_{(s',a')} P_t(\d s',\d a'|s,a,\pi_z) \,B(s',a')
\end{equation}
and $B$ is equal to $(\Cov F)^{-1}$ times the predecessor features of $F$:
\begin{equation}
(\Cov F) B(s',a')=\sum_{t\geq 0} \gamma^t \int_{(s,a)} \frac{P_t(\d s',\d
a'|s,a,\pi_z)}{\rho(\d s',\d a')} \,F(s,a,z) \,\rho(\d s,\d a)
\end{equation}
$\rho$-almost everywhere.
Here the covariances have been defined in \eqref{eq:covariances}, and
$P_t(\cdot |s,a,\pi)$ denotes the law of $(s_t,a_t)$ under
trajectories starting at $(s,a)$ and following policy $\pi$.

The same result holds when working with features $\phi(s',a')$, just by
applying it to $B\circ \phi$.
\end{thm}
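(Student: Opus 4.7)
The plan is to take variational derivatives of the loss
$\ell(F,B)$ in \eqref{eq:loss2} with respect to the parameters of $F$ and $B$
separately, set them to zero, and use overparameterization to convert the
stationarity conditions into pointwise equalities in $L^2(\rho)$.

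Fix $z$ throughout and abbreviate $M(s,a,s',a')\deq \sum_{t\geq 0}\gamma^t
P_t(\d s',\d a'\mid s,a,\pi_z)/\rho(\d s',\d a')$, so that
$\ell(F,B)=\|F^{\!\top}B - M\|^2_{L^2(\rho\otimes\rho)}$.
The first (and main) subtlety is that $M$ depends on $F$ through
$\pi_z(s)=\argmax_a \transp{F(s,a,z)}z$. The key observation is that, for
$\rho$-generic $F$, the argmax is strict, so $\pi_z$ is locally constant
under small perturbations of $F$ that do not cross the argmax boundary; hence,
when computing the derivative of $\ell$ along such a direction, $M$ may be
treated as a fixed function of $(s,a,s',a')$ and the derivative is
contributed only by the explicit occurrence of $F$. (Equivalently, using the
softmax reparameterization of the main text, one obtains the same
stationarity condition in the $\tau\to 0$ limit.) This is the step where one
must be careful, but it handles the only non-trivial difficulty.

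Granting this, the derivative in the direction $\delta F\in L^2(\rho,\R^d)$ is
\[
\partial_F \ell \cdot \delta F \;=\; 2\int \bigl(\transp{F(s,a,z)}B(s',a')-M(s,a,s',a')\bigr)\,\transp{\delta F(s,a,z)}B(s',a')\,\rho(\d s,\d a)\rho(\d s',\d a').
\]
By overparameterization, $\delta F$ ranges over a dense subset of
$L^2(\rho,\R^d)$, so vanishing for all $\delta F$ forces, for $\rho$-a.e. $(s,a)$,
\[
\int \bigl(\transp{F(s,a,z)}B(s',a')-M(s,a,s',a')\bigr)\,B(s',a')\,\rho(\d s',\d a') \;=\; 0 ,
\]
which rearranges to
$(\Cov B)\,F(s,a,z)=\int M(s,a,s',a')B(s',a')\,\rho(\d s',\d a')$. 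Expanding
$M$ and cancelling the $\rho(\d s',\d a')$ against the density definition
recovers exactly the successor-feature formula claimed for $F$.

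For $B$ the argument is strictly easier because $B$ does not enter $\pi_z$ at
all, so $M$ is genuinely independent of $B$. The analogous stationarity
condition $\partial_B\ell\cdot\delta B=0$ for every $\delta B\in L^2(\rho,\R^d)$
yields, for $\rho$-a.e. $(s',a')$,
\[
\int \bigl(\transp{F(s,a,z)}B(s',a')-M(s,a,s',a')\bigr)\,F(s,a,z)\,\rho(\d s,\d a)\;=\;0 ,
\]
which rearranges to
$(\Cov F)\,B(s',a')=\int M(s,a,s',a')F(s,a,z)\,\rho(\d s,\d a)$, i.e., the
predecessor-feature formula. The positivity of $\rho$ is used here to
justify the ``$\rho$-a.e.'' statements in the theorem. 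Finally, the
extension to features $\phi$ is obtained by applying the same argument with
$B\circ\phi$ in place of $B$, using the transfer of overparameterization
through $\phi$. The main obstacle throughout is the non-smooth dependence
of $M$ on $F$ via the argmax, handled as above.
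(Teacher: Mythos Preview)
Your argument is essentially the paper's: differentiate $\ell$ in $F$ and in $B$, use overparameterization to turn the vanishing directional derivative into a pointwise ($\rho$-a.e.) equation, and rearrange via the covariance definitions to obtain the successor- and predecessor-feature identities. You actually go a step further than the paper by flagging that $M$ depends on $F$ through $\pi_z=\argmax_a\transp{F(s,a,z)}z$ --- the paper's proof silently treats $m$ as fixed when differentiating --- and your local-constancy-of-argmax justification is the right intuition, though as written (``perturbations that do not cross the argmax boundary'', or the $\tau\to 0$ softmax limit) it would need a bit more care to be fully rigorous.
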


Note that in the FB framework, we may normalize either $F$ or $B$
(Remark~\ref{rem:normB}), but not both.

As a consequence of Theorem~\ref{thm:forwardbackwardinterpretation}, we characterize  below which kind of rewards we can capture if we fix $B$ and train only $F$ to minimize the unsupervised loss given $B$.  Namely, we show that for any reward $r$, the resulting policy is optimal for the $L^2(\rho)$-orthogonal projection of $r$ onto the span of $B$.

\begin{thm}[Optimizing $F$ for a given $B$; influence of the span of $B$]
\label{thm:fixedbackward}
Let $B$ a fixed function in $L^2(\rho,\R^d)$. Define the \emph{span} of $B$ as
the set of functions $\transp{w}B\in L^2(\rho,\R)$ when $w$ ranges in
$\R^d$.

Consider a smooth parametric model for $F$, and assume this model is
overparameterized.  Assume that $F$ lies in $L^2(\rho)$ and achieves a
local extremum of $\ell(F,B)$ within this parametric model for each $z
\in Z$. 

Assume that the data distribution $\rho$ has positive density everywhere.

Then for any bounded reward function $r\from  S \times A \rightarrow \R$
that lies in the span of $B$,
%defined as $\{f: S \times A \rightarrow \R,  \text{ such that } \exists \omega \in \R^d, \forall (s, a) \in S \times A, f(s, a) = B(s, a)^\top \omega\}$, 
$\pi_{z_R}$ is an optimal policy for the reward $r$.

More generally, for any bounded reward function $r\from  S \times A
\rightarrow \R$, the policy $\pi_{z_R}$ is an optimal policy for the
reward $r_B$ defined as the  $L^2(\rho)$-orthogonal projection of $r$
onto the span of $B$.

Moreover, 
\begin{equation}
\label{eq:simplifiedlaplacianbound}
\norm{V^{\pi_{z_R}} - V^\star}_\infty \leq \frac{2}{1-\gamma} \norm{r - r_B}_\infty
\end{equation}
with $V^\star$ the optimal value function for $r$. More precisely,
\begin{equation}
\label{eq:laplacianbound}
\norm{V^{\pi_{z_R}} - V^\star}_\infty \leq \norm{(\Id-\gamma
P_{\pi^\star})^{-1}(r -
r_B)}_\infty
+\norm{(\Id-\gamma
P_{\pi_{z_R}})^{-1}(r -
r_B)}_\infty
\end{equation}
with $\pi^\star$ the optimal policy for reward $r$, and notation $P_\pi$
as in Theorem~\ref{thm:pointwiseapprox}.
\end{thm}

The last bounds \eqref{eq:laplacianbound} implies
the previous one \eqref{eq:simplifiedlaplacianbound}, as  $(\Id-\gamma P_{\pi})^{-1}$ is bounded by
$\frac{1}{1-\gamma}$ in $L^\infty$ norm for any policy $\pi$.

\NDY{Maybe this theorem should be closer to section on optimality/approx solutions?}

\paragraph{Discussion: optimal $B$ and priors on rewards.}
This bound is interesting if $r-r_B$ is small, namely, if $B$ captures
most of the components of the reward functions we are interested in.
But even for rewards not spanned by $B$,
the bound is smaller if $r-r_B$ avoids the largest
eigendirections of $(\Id-P_\pi)^{-1}$ for various policies $\pi$, namely,
if $B$ captures these largest eigendirections. These
eigendirections are those of $P_\pi$: so the bound will be small 
if $B$ contains the largest eigendirections of $P_\pi$ for various
policies $\pi$, corresponding to spatially continuity of functions under
transitions in the environment ($P_\pi f$
close to $f$).

If we are interested in spatially smooth rewards $r$, then
$r-r_B$ is small if $B$
if captures smooth functions first. 
% This is a natural consequence of
% the loss
% \eqref{eq:loss2}: this loss is minimal when $F$ and $B$ capture the
% truncated singular value decomposition of $m^{\pi_z}$, corresponding to
% the largest singular values of $m^{\pi_z}=(\Id-\gamma P_{\pi_z})^{-1}$:
% these loosely correspond to the largest eigenvectors of $P_{\pi_z}$,
% which encode spatial continuity under transitions in the environment.
% 
But even for rewards not spanned by $B$, and for non-spatially smooth rewards (e.g., goal-oriented problems
with reward $\1_{s=\mathrm{goal}}$),
the bound \eqref{eq:laplacianbound} shows that
$B$ should first capture spatially smooth eigenvectors of many policies
$\pi$.  

Is this a natural consequence of FB training?  Up to some approximations,
yes.  For a single $z$ and policy $\pi_z$, the loss \eqref{eq:loss2} used
to train $B$ is optimal when $B$ captures the largest \emph{singular}
directions of $(\Id-P_{\pi_z})^{-1}$, which is slightly different. The
optimal policy $(\Id-P_{\pi^\star})^{-1}$ is not represented in the
criterion, and it is not clear to what extent spatial continuity with
respect to $P_{\pi_z}$ or $P_{\pi^\star}$ differ.  Moreover, in the full
algorithm, $B$ is shared between several $z$ and several policies. So we
have no rigorous result here.  Still, the intuition shows that FB
training goes in the correct direction.

\subsection{Estimating $z_R$ from a Different State Distribution at Test
Time}
\label{sec:rhotest}

\newcommand{\rhotest}{\rho_\mathrm{test}}

The algorithm of
Section~\ref{sec:algo} computes $F$ and $B$ with respect a reference
measure $\rho$ equal to the distribution
$\rho$ of state-actions in the training set.
Theorem~\ref{thm:main} requires $z_R=\E_{(s,a)\sim \rho} [r(s,a)B(s,a)]$
to be estimated using rewards observed from the same state-action
distribution $\rho$ as the one used to train $F$ and
$B$. 

So in general, estimating $z_R$ requires
either being able to run the exploration policy again once reward
samples are available, or being able to explicitly estimate $r(s,a)$ on states stored
in the training set.

However, at test time, we would generally like to use the policy learned,
rather than the exploration policy. This will result in a distribution of
states-actions $\rhotest(s,a)$ different from the training
distribution $\rho(s,a)$.

If the training set remains accessible, a generic solution is to train a
model $\hat r(s,a)$ of the reward function, from rewards $r(s,a)$
observed at test time under any distribution $(s,a)\sim \rhotest$. Then one
can estimate $z_R$ by averaging this model $\hat r$ over state-actions sampled
from the training set:
\begin{equation}
\label{eq:hatzR}
\hat z_R\deq \E_{(s,a)\sim \rho} [\hat
r(s,a)B(s,a)].
\end{equation}

However, the training set may not be available anymore at test time. But
as it
turns out, if we use for $\hat r$ a \emph{linear} model based on the
features learned in $B$, then we do not need to store the training set:
it is enough to estimate the matrix $\Cov_\rho(B)$, which can be
pre-computed during training. This is summarized in the following.

\begin{proposition}
\label{prop:linearz}
Let $\hat r$ be the linear model of rewards computed at test time by linear
regression of the reward $r$ over the components $B_1,\ldots,B_d$ of $B$, with
state-actions taken from a test distribution $\rhotest$. Explicitly,
\begin{equation}
\label{eq:linearhatr}
\hat r(s,a)\deq \transp{B(s,a)}w, \qquad w\deq \left(
\Cov_{\rhotest} B
\right)^{-1} \E_{(s,a)\sim \rhotest} [r(s,a)B(s,a)].
\end{equation}
Here we assume
that $\Cov_{\rhotest} B=\E_{(s,a)\sim \rhotest}
[B(s,a)\transp{B(s,a)}]$ is invertible.

Then the estimate $\hat z_R$ computed by using this
model $\hat r$ in \eqref{eq:hatzR} is
\begin{equation}
\label{eq:linearhatzR}
\hat z_R=(\Cov_\rho B)\left(\Cov_{\rhotest} B
\right)^{-1} \E_{(s,a)\sim \rhotest} [r(s,a)B(s,a)].
\end{equation}

Moreover, 
if $\rho=\rhotest$, or if $r$ is linear over
$B$, then $\hat z_R=z_R$ .% and $\Cov_{\rhotest} B$ is invertible.
\end{proposition}

For this estimate, $\Cov_{\rhotest} B$ and $\E_{(s,a)\sim \rhotest}
[r(s,a)B(s,a)]$ can be computed at test time, while the matrix $\Cov_\rho
B$ must be computed at training time. This way, the training set can be
discarded.

If the estimate \eqref{eq:linearhatr} is used, the learned policies correspond to using
universal successor features approximators \cite{borsa2018universal} on top of the features learned
by $B$. %(after the change of variables $z\gets (\Cov_\rho B)z$).
Indeed,
universal successor features with features $\phi$ use the policies
$\argmax_a \transp{\psi(s,a,w)}w$ with $w=\left(\Cov_{\rhotest} \phi
\right)^{-1} \E_{(s,a)\sim \rhotest} [r(s,a)\phi(s,a)]$ the regression
vector of $r$ over the features $\phi$, and with $\psi=\Succ(\phi)$ the
successor features of $\phi$. Here we use the policies $\pi_{\hat z_R}=\argmax_a
\transp{F(s,a,\hat z_R)}\hat z_R$. Let us set $\phi\deq B$ as the base
features in successor features.
Then the above shows that $\hat
z_R=(\Cov_\rho B)w$ when using the linear model of rewards.
Moreover, we proved in Theorem~\ref{thm:forwardbackwardinterpretation}
that the optimum for $F$ is $F=(\Cov_\rho B)^{-1} \Succ(B)$. Therefore,
the policies coincide in this situation.

Thus, universal successor features based on $B$ appear as a
particular case if a linear model of rewards is used at test time,
although in general any reward model may be used in \eqref{eq:hatzR}.

\subsection{A Note on the Measure $M^\pi$ and its Density $m^\pi$}
\label{sec:diracs}

In finite spaces, the definition of the successor state density $m^\pi$ via 
\begin{equation}
M^\pi(s,a,\d s',\d
a')=m^\pi(s,a,s',a')\rho(\d s',\d a')
\end{equation}
with respect to the data
distribution $\rho$ poses no problem, as long as the data distribution is
positive everywhere.

In continuous spaces, this can be understood as the (Radon--Nikodym)
density of $M^\pi$ with respect to $\rho$, assuming $M^\pi$ has no
singular part with respect to $\rho$. However, this is \emph{never} the case:
in the definition \eqref{eq:defM} of the
successor state measure $M^\pi$, the term $t=0$ produces a Dirac measure
$\delta_{s,a}$. So $M^\pi$ has a singular component due to $t=0$, and
$m^\pi$ is better thought of as a distribution.

When $m^\pi$ is a distribution, a continuous parametric model
$m^\pi_\theta$ learned by \eqref{eq:mtd} can approximate $m^\pi$ in the weak
topology only: $m^\pi_\theta \rho$ approximates $M^\pi$ for the weak
convergence of measures. Thus, for the forward-backward representation,
$\transp{F(s,a,z)}B(s',a')\rho(\d
s',\d a')$ weakly approximates $M^{\pi_z}(s,a,\d s',\d a')$.

We have not found this to be a problem either in theory or practice. In
particular, 
Theorem~\ref{thm:approx} covers weak approximations.

\bigskip

Alternatively, one may just define successor states starting at $t=1$ in
\eqref{eq:defM}. This only works well if rewards $r(s,a)$ depend on the state
$s$ but not the action $a$ (e.g., in goal-oriented settings).
If starting the definition at $t=1$, $m^\pi$ is an ordinary function provided the
transition kernels $P(\d s'|s,a)$ of the environment are non-singular,
$\rho$ has positive density, and $\pi(\d a|s)$ is non-singular as well.
Starting at $t=1$ induces the following changes in the theorems:
\begin{itemize}
\item In the learning algorithm \eqref{eq:mtd} for successor states, the
term $\partial_\theta m^\pi_\theta(s,a,s,a)$ becomes $\gamma \,
\partial_\theta m^\pi_\theta(s,a,s',a')$.
\item The expression for the $Q$-function in Theorem~\ref{thm:main}
becomes $Q^\star(s,a)=r(s,a)+\transp{F(s,a,z_R)}z_R$, and likewise in
Theorem~\ref{thm:main2}. The $r(s,a)$ term covers the immediate reward at
a state, since we have excluded $t=0$ from the definition of successor
states.
\item In general the expression for optimal policies becomes
\begin{equation}
\pi_z(s)\deq \argmax_a\{ r(s,a)+\transp{F(s,a,z)}z\}
\end{equation}
which cannot be computed from $z$ and $F$ alone in the unsupervised
training phase. The algorithm only makes sense for rewards that depend on
$s$ but not on
$a$ (e.g., in goal-oriented settings): then the policy $\pi_z$ is equal to $\pi_z(s)\deq
\argmax_a\transp{F(s,a,z)}z$ again.
\end{itemize}

\newpage
\section{Proofs}
\label{appendix: proof}
The first proposition is a direct consequence of the definition
\eqref{eq:Mgoal} of successor states with a goal space $G$.

\begin{proposition}
\label{prop:Qsucc}
Let $\phi\from S\times A\to G$ be a map to some goal space $G$.

Let $\pi$ be some policy, and let $M^\pi$ be the successor state measure
\eqref{eq:Mgoal}
of $\pi$ in goal space $G$. Let $m^\pi$ be the density of $M^\pi$ with respect to some
positive
%probability
measure $\rho$ on $G$.

Let $r\from G\to \R$ be some function on $G$, and define the reward
function $R(s,a)\deq r(\phi(s,a))$ on $S\times A$.

Then
the $Q$-function $Q^\pi$ of
policy $\pi$ for reward $R$ is
\begin{align}
Q^\pi(s,a)&=\int r(g)\,M^\pi(s,a,\d g)
%\\&=\E_{g\sim \rho}\left[r(g)\,m^\pi(s,a,g)\right].
\\&=\int_{g\in G } r(g)\,m^\pi(s,a,g)\,\rho(\d g).
\end{align}
\end{proposition}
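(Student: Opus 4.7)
The plan is to unfold the definition of $Q^\pi$, substitute the definition $R(s,a)=r(\phi(s,a))$, and recognize the result as the integral of $r$ against the successor state measure $M^\pi$. Concretely, I would start from
\begin{equation*}
Q^\pi(s_0,a_0)=\sum_{t\geq 0}\gamma^t\,\E[R(s_t,a_t)\mid s_0,a_0,\pi]
=\sum_{t\geq 0}\gamma^t\,\E[r(\phi(s_t,a_t))\mid s_0,a_0,\pi],
\end{equation*}
rewrite each expectation as $\int_G r(g)\,\Pr(\phi(s_t,a_t)\in \d g\mid s_0,a_0,\pi)$, and then swap the sum and the integral. The exchange is justified because $r$ is bounded (all rewards in the paper are assumed bounded) and the geometric weights $\gamma^t$ are summable, so Fubini--Tonelli applies to $\sum_t \gamma^t |r(g)|\,\Pr(\phi(s_t,a_t)\in \d g)$. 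After the swap, the inner sum is exactly the defining expression \eqref{eq:Mgoal} for $M^\pi(s_0,a_0,\d g)$, yielding the first identity $Q^\pi(s,a)=\int r(g)\,M^\pi(s,a,\d g)$.

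For the second identity, I would simply substitute the density relation $M^\pi(s,a,\d g)=m^\pi(s,a,g)\,\rho(\d g)$ (well-defined here because $\rho$ is a positive probability measure on $G$, and we are taking $m^\pi$ as the Radon--Nikodym density, possibly in the distributional sense as discussed in Appendix~\ref{sec:diracs}). This converts the integral into $\int_G r(g)\,m^\pi(s,a,g)\,\rho(\d g)=\E_{g\sim\rho}[r(g)\,m^\pi(s,a,g)]$.

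There is no real obstacle here: the whole proof is a matter of unfolding definitions and one application of Fubini. The only subtlety worth a line is the possibility that $M^\pi$ has a singular part with respect to $\rho$ (coming from the $t=0$ Dirac mass noted in Appendix~\ref{sec:diracs}), in which case the second identity should be read distributionally; this does not affect the first identity, which is phrased directly in terms of $M^\pi$ and thus sidesteps the issue entirely.
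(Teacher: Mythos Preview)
Your proposal is correct and follows essentially the same approach as the paper's proof: unfold the definition of $Q^\pi$, substitute $R=r\circ\phi$, rewrite each expectation as an integral against the time-$t$ pushforward law, swap sum and integral to recover $M^\pi$, and then pass to the density $m^\pi$ with respect to $\rho$. Your added remarks on the Fubini justification and the distributional reading of the second identity are fine and do not diverge from the paper's argument.
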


\begin{proof}[Proof of Proposition~\ref{prop:Qsucc}]
For each time $t\geq 0$, let $P^\pi_t(s_0,a_0,\d g)$ be the probability
distribution of $g=\phi(s_t,a_t)$ over trajectories of the policy $\pi$
starting at $(s_0,a_0)$ in the MDP. Thus, by the definition
\eqref{eq:Mgoal},
\begin{equation}
M^\pi(s,a,\d g)=\sum_{t\geq 0} \gamma^t P_t^\pi(s,a,\d g).
\end{equation}

The $Q$-function of $\pi$ for the reward $R$ is by definition (the
sums and integrals are finite since $R$ is bounded)
\begin{align}
Q^\pi(s,a)&=\sum_{t\geq 0} \gamma^t \E[R(s_t,a_t)\mid s_0=s,\,a_0=a,\,\pi]
\\&=\sum_{t\geq 0} \gamma^t \E[r(\phi(s_t,a_t))\mid s_0=s,\,a_0=a,\,\pi]
\\&=\sum_{t\geq 0} \gamma^t \int_g r(g) P_t^\pi(s,a,\d g)
\\&=\int_g r(g) M^\pi(s,a,\d g)
\\&=\int_g r(g) m^\pi(s,a,g)\rho(\d g)
%\\&=\E_{g\sim \rho} [r(g) m^\pi(s,a,g)]
\end{align}
by definition of the density $m^\pi$.
\end{proof}

\begin{proof}[Proof of Theorems~\ref{thm:main} and \ref{thm:main2}]
Theorem~\ref{thm:main} is a particular case of Theorem~\ref{thm:main2}
($\phi=\Id$ and $\bar m=0$), so we only prove the latter.

Let $R(s,a)=r(\phi(s,a))$ be a reward function as in the theorem.
% Let $\pi$ be any policy. For each time $t\geq 0$, let $P_t(s_0,a_0,\d g)$ be the probability
% distribution of $g=\phi(s_t,a_t)$ over trajectories of the policy $\pi$
% starting at $(s_0,a_0)$ in the MDP. Thus, by the definition
% \eqref{eq:Mgoal},
% \begin{equation}
% M^\pi(s,a,\d g)=\sum_{t\geq 0} \gamma^t P_t(s,a,\d g).
% \end{equation}

For any policy $\pi$, let $M^\pi$ be its successor measure defined by
\eqref{eq:Mgoal}, and let
$m^{\pi}$ denote its density with respect to $\rho$. 

The $Q$-function of $\pi$ for the reward $R$ is, by
Proposition~\ref{prop:Qsucc},
\begin{align}
Q^\pi(s,a)
% &=\sum_{t\geq 0} \gamma^t \E[R(s_t,a_t)\mid s_0=s,\,a_0=a,\,\pi]
% \\&=\sum_{t\geq 0} \gamma^t \E[r(\phi(s_t,a_t))\mid s_0=s,\,a_0=a,\,\pi]
% \\&=\sum_{t\geq 0} \gamma^t \int_g r(g) P_t(s,a,\d g)
% \\&=\int_g r(g) M^\pi(s,a,\d g)
% \\&=\int_g r(g) m^\pi(s,a,g)\rho(\d g)
% \\&
%=\E_{g\sim \rho} [r(g) m^\pi(s,a,g)].
=\int_g r(g) m^\pi(s,a,g)\rho(\d g)
\end{align}
%by definition of the density $m^\pi$.

The
definition of an extended FB representation states that
for any $z\in Z$, $m^{\pi_{z}}(s,a,g)$ is equal
to $\transp{F(s,a,z)}B(g)+\bar m(s,z,g)$.

Therefore, for any $z\in Z$ we have
\begin{align}
Q^{\pi_z}(s,a)&=\int_g r(g) \left(
\transp{F(s,a,z)}B(g)+\bar m(s,z,g)
\right)\rho(\d g)
\\&=\transp{F(s,a,z)} \int_g r(g)B(g)\rho(\d g)+\int_g
r(g)\bar m(s,z,g)\rho(\d g)
\\&=\transp{F(s,a,z)}z_R+\bar V^z(s)
\end{align}
by definition of $z_R$ and $\bar V$. This proves the claim
\eqref{eq:Qfunc_general} about
$Q$-functions.

By definition, the policy $\pi_z$ selects the action $a$ that maximizes
$\transp{F(s,a,z)}z$.
Take $z=z_R$. Then
\begin{align}
\pi_{z_R}&=\argmax_a \transp{F(s,a,z_R)}z_R
\\&=\argmax_a \left\{
\transp{F(s,a,z_R)}z_R+\bar V^z(s)
\right\}
%Q^{\pi_{z_R}}(s,a)
\end{align}
since the last term does not depend on $a$.

This quantity is equal to $Q^{\pi_{z_R}}(s,a)$.
Therefore,
\begin{equation}
\label{eq:pizR}
\pi_{z_R}=\argmax_a Q^{\pi_{z_R}}(s,a)
\end{equation}
and by the above, $Q^{\pi_{z_R}}(s,a)$ is indeed equal to the $Q$-function of
policy
$\pi_{z_R}$ for the reward $R$. Therefore, $\pi_{z_R}$ and
$Q^{\pi_{z_R}}$ constitute an optimal Bellman pair for reward $R$. Since
$Q^{\pi_{z_R}}(s,a)$ is the $Q$-function of $\pi_{z_R}$, it satisfies the
Bellman equation
\begin{align}
Q^{\pi_{z_R}}(s,a)&=R(s,a)+\gamma \E_{s'|(s,a)}
Q^{\pi_{z_R}}(s',\pi_{z_R}(s'))
\\&=R(s,a)+\gamma \E_{s'|(s,a)} \max_{a'} Q^{\pi_{z_R}}(s',a')
\end{align}
by \eqref{eq:pizR}. This is the optimal Bellman equation for
$R$, and
$\pi_{z_R}$ is the optimal policy for $R$.

We still have to prove the last statement of
Theorem~\ref{thm:main2}. Since $\pi_{z_R}$ is an optimal policy for $R$,
for any other policy $\pi_z$ and state-action $(s,a)$ we have
\begin{equation}
Q^{\pi_{z_R}}(s,a)\geq Q^{\pi_z}(s,a).
\end{equation}
Using the formulas above for $Q^\pi$,
with $\bar m=0$, this rewrites as
\begin{equation}
\transp{F(s,a,z_R)}z_R\geq \transp{F(s,a,z)}z_R
\end{equation}
as needed. Thus $\transp{F(s,a,z_R)}z_R\geq \sup_{z\in Z}
\transp{F(s,a,z)}z_R$, and equality occurs by taking $z=z_R$.
This ends the proof of Theorem~\ref{thm:main2}.
\end{proof}

\begin{proof}[Proof of Proposition~\ref{prop:existence}]
Assume $d=\#S\times \#A$; extra dimensions can just be ignored by setting
the extra components of
$F$ and $B$ to $0$.

With $d=\#S \times \#A$, we can index the components of $Z$ by pairs
$(s,a)$.

First, let us set $B(s,a)\deq \1_{s,a}$.

% For each $(s',a')\in S\times A$, let $\pi_{s',a'}$ be an optimal policy in
% the MDP for reaching a unit reward located at state-action $(s',a')$. Let
% $Q^{s',a'}$ be the $Q$-function of policy $\pi_{s',a'}$. Thus, $Q^{s',a'}$ is
% equal to expected rewards under $\pi_{s',a'}$:
% \begin{align}
% Q^{s',a'}(s,a)&=\E\left[
% \sum_{t\geq 0} \gamma^t \1_{(s_t,a_t)=(s',a')}
% \right]
% \\&=
% \sum_{t\geq 0} \gamma^t  \Pr\left((s_t,a_t)=(s',a')\mid
% (s_0,a_0)=(s,a),\,\pi_{s',a'}\right)
% \end{align}
% where as usual the expectations are over trajectories $(s_t,a_t)$ from
% $\pi_{s',a'}$.

Let $r\from S\times A\to \R$ be any reward function. Let $z_R\in
\R^{\#S\times \#A}$ be
defined as in Theorem~\ref{thm:main}, namely, 
\begin{equation}
z_R=\sum_{(s,a)}
r(s,a)B(s,a)\rho(s,a).
\end{equation}
With our choice of $B$, the components of $z_R$ are
$(z_R)_{s,a}=r(s,a)\rho(s,a)$. Since $\rho>0$, the correspondence $r\leftrightarrow z_R$
is bijective.

Let us now define $F$.  Take $z\in Z$. Since $r\leftrightarrow z_R$ is
bijective, this $z$ is equal to $z_R$ for some reward function $r$.
Let $\pi_z$ be an optimal
policy for this reward $r$ in the MDP. Let $M^{\pi_r}$ be the successor state measure
of policy $\pi_z$, namely:
\begin{align}
M^{\pi_z}(s,a,s',a')&=
\sum_{t\geq 0} \gamma^t  \Pr\left((s_t,a_t)=(s',a')\mid
(s_0,a_0)=(s,a),\,\pi_z\right).
\end{align}
%where as usual the expectations are over trajectories $(s_t,a_t)$ from
%$\pi_z$.

Now define $F(s,a,z)$ by setting its $(s',a')$ component to
$M^{\pi_z}(s,a,s',a')/\rho(s',a')$ for each $(s',a')$:
\begin{equation}
F(s,a,z)_{s',a'}\deq M^{\pi_z}(s,a,s',a')/\rho(s',a').
\end{equation}

Then we have
\begin{multline}
\transp{F(s,a,z)}B(s',a')=\sum_{s'',a''}
F(s,a,z)_{s'',a''}\,B(s',a')_{s'',a''}\\=F(s,a,z)_{s',a'}=M^{\pi_z}(s,a,s',a')/\rho(s',a')
\end{multline}
because by our choice of $B$, $B(s',a')_{s''a''}=\1_{s'=s'',\,a'=a''}$.

Thus, $\transp{F(s,a,z)}B(s',a')$ is the density of the successor
state measure $M^{\pi_z}$ of policy $\pi_z$ with respect to $\rho$, as
needed.

We still have to check that $\pi_z$ satisfies $\pi_z(s)=\argmax
\transp{F(s,a,z)}z$ (since this is not how it was defined). Since $\pi_z$ was defined as an optimal policy for
the reward $r$ associated with $z$, it satisfies
\begin{equation}
\pi_z(s)=\argmax_a Q^{\pi_z}(s,a)
\end{equation}
with $Q^{\pi_z}(s,a)$ the $Q$-function of policy $\pi_z$ for the reward
$r$. This $Q$-function is equal to the cumulated expected reward
\begin{align}
Q^{\pi_z}(s,a)&=
\sum_{t\geq 0} 
\gamma^t 
\E \left[
r(s_t,a_t) \mid s_0=s,\,a_0=a,\,\pi_z
\right]
\\&=\sum_{t\geq 0} \gamma^t
\sum_{s',a'} r(s',a') \Pr\left((s_t,a_t)=(s',a')\mid
s_0=s,\,a_0=a,\,\pi_z
\right)
\\&= \sum_{s',a'} r(s',a') \sum_{t\geq 0} \gamma^t
\Pr\left((s_t,a_t)=(s',a')\mid
s_0=s,\,a_0=a,\,\pi_z
\right)
\\&= \sum_{s',a'} r(s',a') M^{\pi_z}(s,a,s',a')
\\&= \sum_{s',a'} r(s',a') F(s,a,z)_{s',a'} \,\rho(s',a')
\\&= \transp{F(s,a,z)}\left(\sum_{s'a'} r(s',a')\rho(s',a')\1_{s'a'}\right)
\\&= \transp{F(s,a,z)}z
\end{align}
since $z$ is equal to $\sum_{(s',a')} r(s',a')B(s',a')\rho(s',a')$. This
proves that $\pi_z(s)=\argmax_a Q^{\pi_z(s,a)}=\argmax_a
\transp{F(s,a,z)}z$. So this choice of $F$ and $B$ satisfies all the
properties claimed.
\end{proof}

We will rely on the following two basic results in $Q$-learning.

\begin{proposition}[$r\mapsto Q^\star$ is Lipschitz in sup-norm]
\label{prop:rtoQ}
Let $r_1,\,r_2 \from S \times A \to \R$ be two bounded reward functions.
Let $Q_1^\star$ and $Q_2^\star$ be the corresponding optimal $Q$-functions,
and likewise for the $V$-functions.
Then
\begin{equation}
\sup_{S\times A} \abs{Q_1^\star-Q_2^\star}\leq \frac{1}{1-\gamma} \sup_{S\times A}
\abs{r_1-r_2} \text{ and } \sup_{S} \abs{V_1^\star-V_2^\star}\leq \frac{1}{1-\gamma} \sup_{S\times A}
\abs{r_1-r_2}.
\end{equation}
%and
%\begin{equation}
%\sup_{S} \abs{V_1^\star-V_2^\star}\leq \frac{1}{1-\gamma} \sup_{S\times A}
%\abs{r_1-r_2}.
%\end{equation}
Moreover for any policy $\pi$, we have  

\begin{equation}
\sup_{S\times A} \abs{Q_1^\pi-Q_2^\pi}\leq \frac{1}{1-\gamma} \sup_{S\times A}
\abs{r_1-r_2} \text{ and }  
\sup_{S} \abs{V^\pi_1-V^{\pi}_2}\leq \frac{1}{1-\gamma} \sup_{S\times A}
\abs{r_1-r_2}.
\end{equation}
\end{proposition}

\begin{proof}
Assume $\sup_{S\times A}
\abs{r_1-r_2}\leq \eps$ for some $\eps\geq 0$.

For any policy $\pi$, let $Q_1^\pi$ be its $Q$-function for reward $r_1$,
and likewise for $r_2$.
Let $\pi_1$ and $\pi_2$ be optimal policies for $r_1$ and $r_2$,
respectively. Then for any $(s,a)\in S\times A$,
\begin{align}
Q_1^\star(s,a)&=Q_1^{\pi_1}(s,a)
\\&\geq Q_1^{\pi_2}(s,a)
\\&=
\sum_{t\geq 0} \gamma^t
\E\left[r_1(s_t,a_t) \mid \pi_2,\,(s_0,a_0)=(s,a)\right]
\\&\geq
\sum_{t\geq 0} \gamma^t
\E\left[r_2(s_t,a_t) -\eps \mid \pi_2,\,(s_0,a_0)=(s,a)\right]
\\&=
\sum_{t\geq 0} \gamma^t
\E\left[r_2(s_t,a_t) \mid \pi_2,\,(s_0,a_0)=(s,a)\right]
-\frac{\eps}{1-\gamma}
\\&=Q^\star_2(s,a)-\frac{\eps}{1-\gamma}
\end{align}
and likewise in the other direction, which ends the proof for
$Q$-functions. The case of $V$-functions follows by restricting to the
optimal actions at each state $s$.

Now, let $\pi$ a policy. We have
\begin{align}
\abs{ Q^\pi_1(s, a) - Q^\pi_2(s, a)} & = \abs{\sum_{t\geq 0} \gamma^t
\E\left[r_1(s_t,a_t) \mid \pi,\,(s_0,a_0)=(s,a)\right] - \sum_{t\geq 0} \gamma^t
\E\left[r_2(s_t,a_t) \mid \pi,\,(s_0,a_0)=(s,a)\right] } \\
& \leq \sum_{t\geq 0} \gamma^t  \E\left[ \abs{ r_2(s_t,a_t) - r_1(s_t, a_t) } \mid \pi,\,(s_0,a_0)=(s,a)\right]  \\
& \leq \frac{1}{1-\gamma} \sup_{S\times A}
\abs{r_1-r_2}.
\end{align}
The case of $V$-functions follows by taking the expectation over actions according to $\pi$.
\end{proof}

\begin{proposition}
\label{prop:pifgap}
Let $f\from S\times A\to \R$ be any function, and define a policy $\pi_f$
by $\pi_f(s)\deq \argmax_a f(s,a)$. Let $r\from S\times A\to \R$ be some
bounded reward function. Let $Q^\star$ be its optimal $Q$-function, and let $Q^{\pi_f}$ be the $Q$-function of $\pi_f$ for
reward $r$.

Then
\begin{equation}
\sup_{S\times A} \abs{f-Q^\star}\leq \frac{2}{1-\gamma}
\sup_{S\times A} \abs{f-Q^{\pi_f}} 
\end{equation}
and
\begin{equation}
\sup_{S\times A} \abs{Q^{\pi_f}-Q^\star}\leq \frac{3}{1-\gamma}
\sup_{S\times A} \abs{f-Q^{\pi_f}} 
\end{equation}
\end{proposition}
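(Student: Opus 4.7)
The plan is to introduce $\eps\deq \sup_{S\times A}\abs{f-Q^{\pi_f}}$ and $\delta\deq \sup_{S\times A}\abs{f-Q^\ast}$, and first prove the one-step contraction bound $0\leq Q^\ast(s,a)-Q^{\pi_f}(s,a)\leq \gamma(\delta+\eps)$ uniformly in $(s,a)$. Both inequalities of the proposition then follow from this by two applications of the triangle inequality, with the second inequality derived from the first.

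For the contraction, I would subtract the Bellman equations
\[
Q^\ast(s,a)=r(s,a)+\gamma\,\E_{s'\mid s,a}\bigl[\max_{a'}Q^\ast(s',a')\bigr],\qquad
Q^{\pi_f}(s,a)=r(s,a)+\gamma\,\E_{s'\mid s,a}\bigl[Q^{\pi_f}(s',\pi_f(s'))\bigr]
\]
so that the rewards cancel, leaving $Q^\ast(s,a)-Q^{\pi_f}(s,a)=\gamma\,\E_{s'\mid s,a}\bigl[\max_{a'}Q^\ast(s',a')-Q^{\pi_f}(s',\pi_f(s'))\bigr]$. The crucial step exploits $\pi_f=\argmax_a f$: this gives $f(s',\pi_f(s'))=\max_{a'}f(s',a')$, and combined with $\abs{f-Q^{\pi_f}}\leq \eps$ yields $Q^{\pi_f}(s',\pi_f(s'))\geq \max_{a'}f(s',a')-\eps$. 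The elementary inequality $\max_a A(a)-\max_a B(a)\leq \max_a(A(a)-B(a))$ then gives $\max_{a'}Q^\ast(s',a')-\max_{a'}f(s',a')\leq \delta$, so the integrand is at most $\delta+\eps$. The lower bound $Q^\ast\geq Q^{\pi_f}$ is the standard fact that $\pi^\ast$ dominates every other policy.

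From this contraction, a first triangle inequality gives $\delta\leq \eps+\sup\abs{Q^{\pi_f}-Q^\ast}\leq \eps+\gamma(\delta+\eps)$, and solving yields $\delta\leq (1+\gamma)\eps/(1-\gamma)\leq 2\eps/(1-\gamma)$, which is the first claim. A second triangle inequality then gives $\sup\abs{Q^{\pi_f}-Q^\ast}\leq \eps+\delta\leq \eps+\tfrac{2\eps}{1-\gamma}=\tfrac{(3-\gamma)\eps}{1-\gamma}\leq \tfrac{3\eps}{1-\gamma}$, which is the second claim. The main delicate point is the twin use of $\max$ in the contraction step: without the greedy-policy identity $f(s',\pi_f(s'))=\max_{a'}f(s',a')$ there is no way to link the residual $\eps$ (measured between $f$ and $Q^{\pi_f}$) to a bound involving $Q^\ast$, since $f$ itself need not satisfy any Bellman equation.
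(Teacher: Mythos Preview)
Your proof is correct, but the route differs from the paper's. You work directly with the one-step contraction $Q^\ast-Q^{\pi_f}\leq \gamma(\delta+\eps)$ obtained by subtracting the two Bellman equations and inserting $\max_{a'}f(s',a')$ as a pivot. The paper instead observes that, because $\pi_f$ is greedy for $f$, the function $f$ is itself the \emph{optimal} $Q$-function for a perturbed reward $r-\eps'$ with $\eps'(s,a)=\eps(s,a)-\gamma\,\E_{s'|s,a}[\eps(s',\pi_f(s'))]$, where $\eps=Q^{\pi_f}-f$; it then invokes the Lipschitz property of $r\mapsto Q^\ast$ (their Proposition~\ref{prop:rtoQ}) to get $\sup|f-Q^\ast|\leq \tfrac{1}{1-\gamma}\sup|\eps'|\leq \tfrac{2}{1-\gamma}\sup|\eps|$. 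Both proofs finish the second claim the same way, via $\sup|Q^{\pi_f}-Q^\ast|\leq \sup|f-Q^{\pi_f}|+\sup|f-Q^\ast|$.

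Your argument is more self-contained (no auxiliary proposition needed) and in fact yields the slightly sharper constant $\tfrac{1+\gamma}{1-\gamma}$ for the first inequality before you relax it to $\tfrac{2}{1-\gamma}$. The paper's argument, on the other hand, carries the conceptual payoff that any $f$ whose greedy policy is $\pi_f$ is exactly optimal for \emph{some} reward close to $r$; this viewpoint can be reused elsewhere. One minor point worth making explicit in your write-up: the solving step $\delta\leq \eps+\gamma(\delta+\eps)\Rightarrow \delta\leq \tfrac{1+\gamma}{1-\gamma}\eps$ presupposes $\delta<\infty$, which holds because finite $\eps$ forces $f$ to be bounded (since $Q^{\pi_f}$ is), hence $\delta$ is finite.
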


\NDY{Maybe comment on: The classical result is with $\abs{f-Tf}$ on the right for the
first claim, and $\abs{f-Q^\star}$ on the right for the second claim. 
If we follow the classical argument and prove the second claim via the first,  we get $1/(1-\gamma)^2$... Maybe this
is somewhere in Bertsekas, but it was faster to rewrite a proof...}

\begin{proof}
Define $\eps(s,a)\deq Q^{\pi_f}(s,a)-f(s,a)$.

The $Q$-function $Q^{\pi_f}$ satisfies the Bellman equation
\begin{equation}
Q^{\pi_f}(s,a)=r(s,a)+\gamma \E_{s'|(s,a)} Q^{\pi_f}(s',\pi_f(s'))
\end{equation}
for any $(s,a)\in S\times A$. Substituting $Q^{\pi_f}=f+\eps$, this rewrites as
\begin{align}
f(s,a)&=r(s,a)-\eps(s,a)+\gamma \E_{s'|(s,a)}
\left[f(s',\pi_f(s'))+\eps(s',\pi_f(s'))\right]
\\&=r(s,a)-\eps'(s,a)+\gamma \E_{s'|(s,a)} f(s',\pi_f(s'))
\\&=r(s,a)-\eps'(s,a)+\gamma \E_{s'|(s,a)} \max_{a'} f(s',a')
\label{eq:bellmanf}
\end{align}
by definition of $\pi_f$,
where we have set
\begin{equation}
\eps'(s,a)\deq \eps(s,a)-\gamma \E_{s'|(s,a)} \eps(s',\pi_f(s')).
\end{equation}

\eqref{eq:bellmanf} is the optimal Bellman equation for the reward $r-\eps'$.
Therefore, $f$ is the optimal $Q$-function for the reward $r-\eps'$.
Since $Q^\star$ is the optimal $Q$-function for reward $r$, by
Proposition~\ref{prop:rtoQ}, we have
\begin{equation}
\sup_{S\times A}\abs{f-Q^\star}\leq \frac{1}{1-\gamma} \sup_{S\times A} \abs{\eps'}
\end{equation}

By construction of $\eps'$, $\sup_{S\times A} \abs{\eps'}\leq
2\sup_{S\times A}\abs{\eps}=2\sup_{S\times A}
\abs{f-Q^{\pi_f}}$. This proves the first claim.

The second claim follows by the triangle inequality
$\abs{Q^{\pi_f}-Q^\star}\leq \abs{Q^{\pi_f}-f}+\abs{f-Q^\star}$ and
$\frac{2}{1-\gamma}+1\leq \frac{3}{1-\gamma}$.
\end{proof}

\begin{proof}[Proof of Theorem~\ref{thm:approx}]
By construction of the Kantorovich--Rubinstein norm, the second claim of
Theorem~\ref{thm:approx} is a particular case of the third claim, with $\norm{f}_A\deq
\max(\norm{f}_\infty,\lipnorm{f})$ and $\norm{\mu}_B\deq \KRnorm{\mu}$.

Likewise, since $m$ is the density of $M$ with respect to $\rho$, the first claim is an instance of the third, by taking
$\norm{f}_A\deq \norm{f}_\infty$ and $\norm{\mu}_B\deq \norm{\frac{\d
\mu}{\d \rho}}_{L^1(\rho)}$.
Therefore, we only prove the third claim.

Let $z\in Z$ and let $r\from S\times A\to \R$ be any reward function. By
Proposition~\ref{prop:Qsucc} with $G=S\times A$ and $\phi=\Id$, the $Q$-function of policy $Q^{\pi_z}$ for
this reward is
\begin{equation}
Q^{\pi_z}(s,a)=\int r(s',a') \,M^{\pi_z}(s,a,\d s',\d a').
\end{equation}

Let $\eps_z(s,a,\d s',\d a')$ be the difference of measures between the model
$\transp{F}B\rho$ and $M^{\pi_z}$:
\begin{align}
\eps_z(s,a,\d s',\d a')&\deq 
M^{\pi_z}(s,a,\d s',\d a')
-
\hat M^z(s,a,\d s',\d a')
\\&=
M^{\pi_z}(s,a,\d s',\d a')-
\transp{F(s,a,z)}B(s',a')\rho(\d s',\d
a').
\end{align}
We want to control the optimality gap in terms of
$\sup_{s,a}\norm{\eps_z(s,a,\cdot)}_B$.

By definition of $\eps_z$, 
\begin{align}
Q^{\pi_z}(s,a)&=\int r(s',a')\transp{F(s,a,z)}B(s',a')\rho(\d s',\d
a')
+\int r(s',a') \,\eps_z(s,a,\d s',\d a')
\\&=\transp{F(s,a,z)}z_R+\int r(s',a') \,\eps_z(s,a,\d s',\d a')
\end{align}
since $z_R=\int r(s',a')B(s',a')\rho(\d s',\d
a')$. Therefore,
\begin{align}
\abs{Q^{\pi_z}(s,a)-\transp{F(s,a,z)}z_R}
&= \abs{\int r(s',a')
\,\eps_z(s,a,\d s',\d a')}
\\&\leq \norm{r}_A \norm{\eps_z(s,a,\cdot)}_B
\end{align}
for any reward $r$ and any $z\in Z$ (not necessarily $z=z_R$).

Let $Q^\star$ be the optimal $Q$-function for reward $r$.
Define $f(s,a)\deq \transp{F(s,a,z_R)}z_R$. By definition,
the policy $\pi_{z_R}$ is equal to $\argmax_a f(s,a)$. Therefore, by
Proposition~\ref{prop:pifgap},
\begin{equation}
\sup_{S\times A} \abs{Q^{\pi_{z_R}}-Q^{\star}}\leq \frac{3}{1-\gamma}
\sup_{S\times A} \abs{f-Q^{\pi_{z_R}}}.
\end{equation}
and
\begin{equation}
\sup_{S\times A} \abs{f-Q^{\star}}\leq \frac{2}{1-\gamma}
\sup_{S\times A} \abs{f-Q^{\pi_{z_R}}}.
\end{equation}
But by the above, 
\begin{align}
\sup_{S\times A} \abs{f-Q^{\pi_{z_R}}} 
&=\sup_{S\times A} \abs{\transp{F(s,a,z_R)}z_R-Q^{\pi_{z_R}}(s,a)} 
\\&\leq \norm{r}_A \sup_{S\times A}
\norm{\eps_{z_R}(s,a,\cdot)}_B.
\end{align}

Therefore, for any reward function $r$, 
\begin{equation}
\sup_{S\times A} \abs{Q^{\pi_{z_R}}-Q^\star} \leq
\frac{3\norm{r}_A}{1-\gamma} \sup_{S\times A}
\norm{\eps_{z_R}(s,a,\cdot)}_B.
\end{equation}
This inequality transfers to the value functions, hence the result.
In addition, using again $f(s,a)=\transp{F(s,a,z_R)}z_R$, we obtain
\begin{equation}
\sup_{S\times A} \abs{\transp{F(s,a,z_R)}z_R-Q^\star(s,a)} \leq
\frac{2\norm{r}_A}{1-\gamma} \sup_{S\times A}
\norm{\eps_{z_R}(s,a,\cdot)}_B.
\end{equation}
\end{proof}

\begin{proposition}[Pointwise optimality gap]
\label{prop:pointwisegap}
Assume the state space is finite, and view rewards and $Q$-functions as
vectors over state-actions.

Let $r_1$ and $r_2$ be two reward functions, and let $\pi_1$ and $\pi_2$
be optimal policies for $r_1$ and $r_2$ respectively. Let $P_1$ and $P_2$
be the stochastic transition matrices over state-actions induced by
$\pi_1$ and $\pi_2$.

Then the optimality gap of policy $\pi_2$ on reward $r_1$ is at most
\begin{equation}
0\leq Q^{\pi_1}_{r_1}-Q^{\pi_2}_{r_1} \leq \left(
(\Id-\gamma P_1)^{-1}-(\Id-\gamma P_2)^{-1}
\right)(r_1-r_2)
\end{equation}
where the equality holds componentwise viewing the $Q$-functions as
vectors over state-actions.
\end{proposition}

\begin{proof}
This is a classical result. The inequality $0\leq
Q^{\pi_1}_{r_1}-Q^{\pi_2}_{r_1}$ is trivial since $\pi_1$ is optimal for
$r_1$ and the optimal policy is optimal at every state-action.
Denote $M_1\deq (\Id-\gamma
P_1)^{-1}$ and likewise for $M_2$. Then for any reward function $r$ one
has $Q^{\pi_1}_r=M_1r$ and likewise for $M_2$. Therefore
\begin{align}
Q^{\pi_1}_{r_1}-Q^{\pi_2}_{r_1}
&=M_1 r_1-M_2 r_1
\\&= M_1 r_1 -M_1 r_2 + M_1 r_2-M_2 r_1
\\& \leq M_1 r_1 -M_1 r_2 +M_2 r_2 -M_2 r_1 \qquad\text{since $\pi_2$ is
optimal for $r_2$}
\\&=(M_1-M_2)(r_1-r_2).
\end{align}
\end{proof}

\begin{proof}[Proof of Theorem~\ref{thm:pointwiseapprox}]
Let $f\from S\times A\to \R$ be any function. Define the policy
$\pi_f(s)\deq \argmax_a f(s,a)$. Define $r'\deq (\Id-\gamma P_{\pi_f})f$.
The equality $f=r'+\gamma P_{\pi_f}f$ can be rewritten as
\begin{equation}
f(s,a)=r'(s,a)+\gamma \E_{s'\sim P(\d s'|s,a)} f(s',\pi_f(s')).
\end{equation}
But by definition of $\pi_f$, $\pi_f(s')=\argmax_{a'} f(s',a')$.
Therefore
\begin{equation}
f(s,a)=r'(s,a)+\gamma \E_{s'\sim P(\d s'|s,a)} \max_{a'} f(s',a')
\end{equation}
namely, $f$ is the optimal $Q$-function for reward $r'$, with $\pi_f$ the
corresponding optimal policy.

Let $r$ be any reward function and let
$Q^{\pi_f}$ be the $Q$-function of $\pi_f$ for $r$. By definition, it
satisfies the Bellman equation $Q^{\pi_f}=r+\gamma P_{\pi_f} Q^{\pi_f}$,
namely,
$r=(\Id-\gamma P_{\pi_f})Q^{\pi_f}$.
Therefore,
\begin{equation}
r- r' = (\Id-\gamma P_{\pi_f})(Q^{\pi_f}-f).
\end{equation}

For any policy $\pi$, denote $M_\pi\deq (\Id-\gamma P_\pi)^{-1}$. ($M_\pi$ is the successor measure $M^\pi$ seen
as a matrix.)
The $Q$-function of $\pi$ for reward $r$ is $M_\pi r$.

Since $\pi^\star$ is optimal for $r$, and $\pi_f$ is optimal for $r'$,
Proposition~\ref{prop:pointwisegap} yields
\begin{align}
0\leq Q^\star-Q^{\pi_f} &\leq
(M_{\pi^\star}-M_{\pi_f})(r-r')
\\
&= (M_{\pi^\star}-M_{\pi_f})(\Id-\gamma P_{\pi_f})(Q^{\pi_f}-f).
\end{align}
Since $M_{\pi_f}$ is the inverse of $\Id-\gamma P_{\pi_f}$ and likewise
for $\pi^\star$, we have
\begin{align}
(M_{\pi^\star}-M_{\pi_f})(\Id-\gamma P_{\pi_f})
&=M_{\pi^\star}(\Id-\gamma P_{\pi_f})-\Id
\\
&=M_{\pi^\star}(\Id-\gamma P_{\pi_f}-(\Id-\gamma P_{\pi^\star}))
\\&=\gamma M_{\pi^\star}(P_{\pi^\star}-P_{\pi_f})
\end{align}
and therefore
\begin{equation}
0\leq Q^\star-Q^{\pi_f} \leq \gamma M_{\pi^\star} (P_{\pi^\star}-P_{\pi_f})
(Q^{\pi_f}-f).
\end{equation}

Now set
\begin{equation}
f(s,a)\deq \transp{F(s,a,z_R)}z_R
\end{equation}
or in matrix notation, $f=\transp{F(z_R)}z_R$.
Then $\pi_f=\pi_{z_R}$ by definition. Thus
$Q^{\pi_f}=M_{\pi_{z_R}}r=m^{\pi_{z_R}}\diag(\rho)r$ in matrix
notation. Moreover, $z_R=\E_\rho [B(s,a)r(s,a)]=B\diag(\rho)r$ in matrix
notation. So $f=\transp{F(z_R)}B\diag(\rho)r$.
Therefore, 
\begin{equation}
Q^{\pi_f}-f=(m^{\pi_{z_R}}-\transp{F(z_R)}B)\diag(\rho)r
\end{equation}
and
\begin{equation}
Q^\star-Q^{\pi_f} \leq \gamma M^{\star}
(P_{\pi^\star}-P_{\pi_f})\left(m^{\pi_{z_R}}-\transp{F(z_R)}B\right)\diag(\rho)r
\end{equation}
and using $M_{\pi^\star}=\sum_{t\geq 0} \gamma^t P^t_{\pi_\star}$ provides the
required inequality.

As a remark, the same proof works on continuous state spaces, by viewing
all matrices as linear operators over functions on $S\times A$.
\end{proof}

\begin{proof}[Proof of Proposition~\ref{prop:approxr}]
This is just a triangle inequality.

\begin{align}
\| V_r^{\pi_{\hat z_R}} - V^\star_r \|_\infty 
& \leq
\| V_{r}^{\pi_{\hat z_R}} - V_{\hat{r}}^{\pi_{\hat z_R}} \|_\infty
+
\| V_{\hat{r}}^{\pi_{\hat z_R}} - V^\star_{\hat{r}} \|_\infty 
+ \| V_{\hat{r}}^{\star} - V^\star_{r} \|_\infty  \\
& \leq \frac{\| r - \hat{r}\|_\infty}{1-\gamma} + \epsilon_{FB} +
\frac{\| r - \hat{r}\|_\infty}{1-\gamma}.
\end{align}
The last inequality follows from two facts: by assumption, the difference between the value function of $\pi_{\hat
z_R}$ and the optimal value function $V^\star_{\hat r}$ is at most
$\eps_{FB}$, then by Proposition~\ref{prop:rtoQ}, the difference between
$V^\star_{\hat r}$ and $V^\star_r$ as well as the difference between
$V_{r}^{\pi_{\hat z_R}}$ and $V_{\hat{r}}^{\pi_{\hat z_R}}$ are bounded by
$\frac{1}{1-\gamma}\sup_{S\times A} \abs{\hat r-r}$. 
%\todo{incorrect!!factor 2 missing}
\end{proof}

\begin{proof}[Proof of Theorem~\ref{thm:approxzR}]
We proceed by building a reward function $\hat r$ corresponding to $\hat
z_R$. Then we will bound $\hat r-r$ and apply
Proposition~\ref{prop:approxr}.

First, by Remark~\ref{rem:normB}, up to reducing $d$, we can assume that $B$ is
$L^2(\rho)$-orthonormal.

For any function $\phi\from (s,a)\to \R$, define $z_\phi\deq
\E_{(s,a)\sim \rho} [\phi(s,a)B(s,a)]$. %=\langle \phi,B\rangle_{L^2(\rho)}$.
For each $z\in Z$, define $\phi_z$ via $\phi_z(s,a)\deq
\transp{B(s,a)}z$. Then, if $B$ is $L^2(\rho)$-orthonormal, we have
$z_{\phi_z}=z$. (Indeed,
$z_{\phi_z}=\E[(\transp{B(s,a)}z)B(s,a)]=\E[B(s,a)(\transp{B(s,a)}z)]=\left(
\E [B(s,a)\transp{B(s,a)}]\right)z$.)

Define the function
\begin{equation}
\hat r\deq r+\phi_{\hat z_R-z_R}
\end{equation}
using the functions $\phi_z$ defined above. By construction, $z_{\hat
r}=z_R+z_{\phi_{\hat z_R-z_R}}=\hat z_R$.
Therefore, the policy
$\pi_{\hat z_R}$ associated to $\hat z_R$ is the policy associated to the
reward $\hat r$.

We will now apply Proposition~\ref{prop:approxr} to $r$ and $\hat r$.
For this, we need to bound $\norm{\phi_{\hat z_R-z_R}}_\infty$.

Let $B_1,B_2,\ldots,B_d$ be the components of $B$ as functions on
$S\times A$.
For any $z\in Z$, we have
\begin{equation}
\norm{\phi_z}^2_{L^2(\rho)}=\norm{\sum_i z_i B_i}^2_{L^2(\rho)}=\sum_i
z_i^2=\norm{z}^2
\end{equation}
since the $B_i$ are $L^2(\rho)$-orthonormal. Moreover, by construction,
$\phi_z$ lies in the linear span $\langle B\rangle$ of the functions
$(B_i)$. Therefore
\begin{equation}
\norm{\phi_z}_\infty \leq \zeta(B)
\norm{\phi_z}_{L^2(\rho)}=\zeta(B)\norm{z}
\end{equation}
by the definition of $\zeta(B)$ (Definition~\ref{def:skew}).

Therefore,
\begin{equation}
\norm{\phi_{\hat z_R-z_R}}_\infty\leq \zeta(B) \norm{\hat z_R-z_R}.
\end{equation}

Let us now bound
$\hat z_R-z_R$:
\begin{align}
\E\left[\norm{\hat z_R-z_R}^2\right]&=
\E\left[\E\left[\norm{\hat z_R-z_R}^2\mid (s_i,a_i)\right]\right]
\\&=
\E\left[\E\left[\norm{\hat z_R-\E[\hat z_R \mid
(s_i,a_i)]}^2+\norm{\E[\hat z_R \mid (s_i,a_i)]-z_R}^2\mid (s_i,a_i)\right]\right]
\\&=
\E\left[\E\left[\norm{\frac1N \sum_i (\hat r_i-r(s_i,a_i))B(s_i,a_i)}^2
+\norm{\frac1N \sum_i r(s_i,a_i)B(s_i,a_i)-z_R}^2\mid
(s_i,a_i)\right]\right]
\end{align}

The first term satisfies
\begin{align}
\E\left[\norm{\frac1N \sum_i (\hat r_i-r(s_i,a_i))B(s_i,a_i)}^2 \mid
(s_i,a_i)\right]
&=\frac{1}{N^2} \sum_i \E\left[
(\hat r_i-r(s_i,a_i))^2 \norm{B(s_i,a_i)}^2
\right]
\\&\leq \frac1{N^2} \sum_i v \norm{B(s_i,a_i)}^2
\end{align}
because the $\hat r_i$ are independent conditionally to $(s_i,a_i)$, and
because $B$ is deterministic. The expectation of this over $(s_i,a_i)$ is
\begin{equation}
\E\left[
\frac1{N^2} \sum_i v \norm{B(s_i,a_i)}^2\right]=\frac{v}{N} \E_{(s,a)\sim \rho}
\norm{B(s,a)}^2=\frac{v}{N} \norm{B}^2_{L^2(\rho)}
\end{equation}
which is thus a bound on the first term.

The second term satisfies
\begin{equation}
\E\left[
\norm{\frac1N \sum_i r(s_i,a_i)B(s_i,a_i)-z_R}^2\right]
%\\
=\frac{1}{N} \norm{r(s,a)B(s,a)-\E_\rho [r(s,a)B(s,a)]}^2_{L^2(\rho)}
\end{equation}
since the $(s_i,a_i)$ are independent with distribution $\rho$. By the
Cauchy--Schwarz inequality (applied to each component of $B$), this is at
most
\begin{equation}
\frac1N \norm{r(s,a)-\E_{\rho} r}^2_{L^2(\rho)} \norm{B}^2_{L^2(\rho)}.
\end{equation}

Therefore,
\begin{equation}
\E\left[\norm{\hat z_R-z_R}^2\right]\leq \left(v+\norm{r(s,a)-\E_{\rho}
r}^2_{L^2(\rho)}\right)\frac{\norm{B}^2_{L^2(\rho)}}{N}.
\end{equation}

Since $B$ is orthonormal in $L^2(\rho)$, we have
$\norm{B}^2_{L^2(\rho)}=d$. Putting everything together, we find
\begin{equation}
\E\left[
\norm{\hat r-r}_\infty^2
\right]
\leq \frac{\zeta(B)\,d}{N} \left(v+\norm{r(s,a)-\E_{\rho}
r}^2_{L^2(\rho)}\right).
\end{equation}

Therefore, by the Markov inequality, for any $\delta>0$, with probability $1-\delta$,
\begin{equation}
\norm{\hat r-r}_\infty\leq \sqrt{
\frac{\zeta(B)\,d}{N\delta} \left(v+\norm{r(s,a)-\E_{\rho}
r}^2_{L^2(\rho)}\right)
}
\end{equation}
hence the conclusion by Proposition~\ref{prop:approxr}.
\end{proof}

\begin{proof}[Proof of Theorem~\ref{thm:forwardbackwardinterpretation}]
Let
\begin{equation}
m(s,a,s',a')\deq \sum_{t\geq 0} \gamma^t
\frac{P_t(\d s',\d a'|s,a,\pi_z)}{\rho(\d s',\d a')}
\end{equation}
so that
\begin{equation}
\ell(F,B)=\int \abs{\transp{F(s,a,z)}B(s',a')-m(s,a,s',a')}^2 \rho(\d
s,\d a)\rho(\d s',\d a').
\end{equation}

Let us first take the derivative with respect to the parameters of $F$.
This is $0$ by assumption, so we find
\begin{align}
0&=\int \partial_\theta \transp{F(s,a,z)}B(s',a')
\left(\transp{F(s,a,z)}B(s',a')-m(s,a,s',a')\right)\rho(\d
s,\d a)\rho(\d s',\d a')
\\&=\int \partial_\theta \transp{F(s,a,z)} G(s,a) \rho(\d s,\d a)
\end{align}
where
\begin{equation}
G(s,a)\deq \int B(s',a')
\left(\transp{F(s,a,z)}B(s',a')-m(s,a,s',a')\right)\rho(\d s',\d a')
\end{equation}

Since the model is overparameterized, we can realize any $L^2$ function
$f(s,a)$ as the derivative $\partial_\theta F(s,a,z)$ for some direction
$\theta$. Therefore, 
the equation $0=\int \partial_\theta \transp{F(s,a,z)}G(s,a)\rho(\d s,\d
a)$ implies that
$G(s,a)$ is $L^2(\rho)$-orthogonal to any function
$f(s,a)$
in $L^2(\rho)$. Therefore, $G(s,a)$ vanishes $\rho$-almost everywhere,
namely
\begin{equation}
\int B(s',a')
\transp{F(s,a,z)}B(s',a')\rho(\d s',\d a')
=
\int B(s',a') m(s,a,s',a')\rho(\d s',\d a')
\end{equation}

Now, since $\transp{F(s,a,z)}B(s',a')$ is a real number,
$\transp{F(s,a,z)}B(s',a')=\transp{B(s',a')}F(s,a,z)$. Therefore, the
right-hand-side above rewrites as
\begin{equation}
\int B(s',a') \transp{B(s',a')}F(s,a,z) \rho(\d s',\d a')=(\Cov
B)F(s,a,z)
\end{equation}
so that
\begin{equation}
(\Cov B)F(s,a,z)=\int B(s',a') m(s,a,s',a')\rho(\d s',\d a').
\end{equation}

Unfolding the definition of $m$ yields the statement for $F$. The proof
for $B$ is similar.
\end{proof}

\begin{proof}[Proof of Theorem~\ref{thm:fixedbackward}]

According to the proof of Theorem~\ref{thm:forwardbackwardinterpretation}, if $F$ achieves a local extremum of $\ell(F, B)$ given a fixed $B$, we have

\begin{equation}
(\Cov B)F(s,a,z)=\int B(s',a') m^{\pi_z}(s,a,s',a')\rho(\d s',\d a')
\end{equation}
where $m^{\pi_z}(s,a,s',a') = \sum_{t\geq 0} \gamma^t
\frac{P_t(\d s',\d a'|s,a,\pi_z)}{\rho(\d s',\d a')}$ is the successor state density induced by the policy $\pi_z$.

Let $r\from S \times A \rightarrow \R$ a bounded reward function that
lies in the span of $B$ i.e there exists $\omega \in \R^d$ such that
$r(s, a) = B(s, a)^\top \omega$ for any state-action pair $(s, a)$. This
implies that $(\Cov B) \omega =  \E_{(s,a)\sim \rho} [
B(s,a)\transp{B(s,a)}\omega]=\E_{(s, a) \sim \rho}[ B(s, a) r(s, a) ] = z_R$,  by definition of $z_R$.

Therefore,
\begin{align}
F(s, a, z_R)^\top z_R & = F(s, a, z_R)^\top (\Cov B) \omega  \\
& = \left( (\Cov B) F(s, a, z_R) \right)^\top \omega \\
& = \left(  \int B(s',a')^\top   m^{\pi_{z_R}}(s,a,s',a')\rho(\d s',\d a') \right) \omega \\
& = \int B(s',a')^\top   \omega~m^{\pi_{z_R}}(s,a,s',a') \rho(\d s',\d a') \\
 & = \int r(s', a') m^{\pi_{z_R}}(s,a,s',a')\rho(\d s',\d a') \\
 & = Q^{\pi_{z_R}} (s, a)
\end{align}
Therefore, $\pi_{z_R}$ is the greedy policy with respect to its own Q-value. We conclude that $\pi_{z_R}$ is optimal for $r$.

Now, let $r\from  S \times A \rightarrow \R$ be an arbitrary bounded
reward function, and let $r_B$ be the $L^2(\rho)$-projection of $r$ onto the span of $B$. Both $r$ and $r_B$ share the same $z_R = \E_{(s, a) \sim \rho}[r(s, a) B(s, a)] = \E_{(s, a) \sim \rho}[r_B(s, a) B(s, a)]$.  According to the first part of our proof, $\pi_{z_R}$ is optimal for $r_B$ since $r_B$ lies in the span of $B$. 
Denote by the subscript $r$ in $V_r$ the reward function that a value
function corresponds to. By Proposition~\ref{prop:pointwisegap},
\begin{align}
0\leq V_r^\star - V_r^{\pi_{z_R}} & \leq 
(\Id-\gamma
P_{\pi^\star})^{-1}(r -
r_B)-
(\Id-\gamma
P_{\pi_{z_R}})^{-1}(r -
r_B)
\end{align}
hence, taking norms,
\begin{equation}
\norm{V^{\pi_{z_R}} - V^\star}_\infty \leq \norm{(\Id-\gamma
P_{\pi^\star})^{-1}(r -
r_B)}_\infty
+\norm{(\Id-\gamma
P_{\pi_{z_R}})^{-1}(r -
r_B)}_\infty
\end{equation}
as needed.

The bound with $\frac{2}{1-\gamma}$ follows by noting that
$(\Id-\gamma P_{\pi})^{-1}$ is bounded by
$\frac{1}{1-\gamma}$ in $L^\infty$ norm for any policy $\pi$.
\end{proof}

\begin{proof}[Proof of Proposition~\ref{prop:linearz}]
From the definition \eqref{eq:hatzR} of $\hat z_R$ and the definition
\eqref{eq:linearhatr} of $\hat r$, we find
\begin{align}
\hat z_R &=\E_\rho [B(s,a)\hat r(s,a)]
\\&=\E_\rho [B(s,a) \transp{B(s,a)}w]
\\&=(\Cov_\rho B)w
\end{align}
hence the result given the expression \eqref{eq:linearhatr} for $w$.

If $\rho=\rhotest$, then the covariances cancel out: we find $\hat
z_R=(\Cov_\rho B)w=(\Cov_\rho B)(\Cov_{\rhotest}
B)^{-1}\E_{\rhotest}[r(s,a)B(s,a)]=\E_{\rhotest}[r(s,a)B(s,a)]=\E_{\rho}[r(s,a)B(s,a)]=z_R$.

If $r$ is linear in $B$, then the linear regression model does not
depend on the data distribution $\rhotest$ used: if $r(s,a)=\transp{B(s,a)}w_0$ then
$w=w_0$ for any $\rhotest$, as long as
$\Cov_{\rhotest} B$ is invertible.
In that case, both $z_R$ and $\hat z_R$ are equal to $(\Cov_\rho B)w_0$.
\end{proof}

\newpage
\section{Experimental Setup}
\label{sec:setup}

In this section we provide additional information about our experiments. 

\subsection{Environments}
\label{sec:env}
\begin{itemize}
\item \textbf{Discrete maze:} is the $11\times11$ classical tabular gridworld with foor rooms. States are represented by one-hot unit vectors, $S = \{0, 1\}^{121}$.  There are five available actions , $A = \{$\texttt{left}, \texttt{right}, \texttt{up}, \texttt{down}, \texttt{do nothing} $\}$. The dynamics are deterministic and the walls are impassable. 
\item \textbf{Continuous maze:}  is a two dimensional environment with impassable walls. States are represented by their Cartesian coordinates $(x,y) \in  S=[0,1]^2$. There are five available actions, $A = \{$\texttt{left}, \texttt{right}, \texttt{up}, \texttt{down}, \texttt{do nothing} $\}$. The execution of one of the actions moves the agent $0.1$ units in the desired direction, and normal random noise with zero mean and standard
deviation $0.01$ is added to the position of the agent (that is, a move
along the x axis would be $x' = x \pm 0.1 +  \mathcal{N} (0, 0.01)$, where
$  \mathcal{N} (0, 0.01)$ is a normal variable with mean 0 and standard
deviation $0.01$). If after a move the agent ends up outside of $[0,
1]^2$, the agent's position is clipped (e.g if $x < 0$ then we set
$x=0$). If a move make the agent cross an interior wall, this move is undone. For all algorithms, we convert a state $s=(x, y)$ into feature vector $\phi(s) \in \mathbb{R}^{441}$ by computing  the activations of a regular $21\times21$ grid of radial basis functions at the point $(s, y)$. Especially, we use Gaussian functions: $\phi(s) =\left (  \exp( - \frac{(x - x_i)^2 + (y - y_i)^2}{\sigma}), \ldots,  \exp( - \frac{(x - x_{441})^2 + (y - y_{441})^2}{ 2 \sigma^2}) \right)$  where $(x_i, y_i)$ is the center of the $i^{th}$ Gaussian and $\sigma=0.05$.
\item  \textbf{FeatchReach:} is a variant of the simulated robotic arm environment from~\cite{plappert2018multi} using discrete actions instead of continuous actions.  States are 10-dimensional vectors consisting of positions and velocities of robot joints. We discretise the original 3-dimensional action space into 6 possible actions using action stepsize of 1 (The same way as in \textcolor{blue}{https://github.com/paulorauber/hpg}, the implementation of hindsight policy gradient~\cite{rauber2018hindsight}). The goal space is 3-dimensional space representing of the position of the object to reach.

\item \textbf{Ms.\ Pacman:}  is a variant of the Atari 2600 game Ms.\ Pacman~\cite{bellemare2013arcade}, where an episode ends when the agent is captured by a monster~\cite{rauber2018hindsight}. 
States are obtained by processing the raw visual input directly from the screen. Frames are preprocessed by cropping, conversion to grayscale and downsampling to $84\times84$ pixels. A state $s_t$ is the concatenation of $(x_{t-12}, x_{t-8}, x_{t-4}, x_t)$ frames, i.e. an $84\times84\times4$ tensor.  
An action repeat of 12 is used. As Ms.\ Pacman is not originally a
multi-goal domain, we define the set of goals as the set of the 148
reachable coordinate pairs
$(x,y)$ on the screen; these can be reached only by learning to avoid monsters. In contrast with~\cite{rauber2018hindsight}, who use a heuristic to find the agent's position from the screen's pixels, we use the Atari annotated RAM interface wrapper~\cite{anand2019unsupervised}.
\end{itemize}

\subsection{Architectures}
\label{sec:archi}
We use the same architecture for discrete maze, continuous maze and
FeatchReach. Both forward and backward networks are represented by a
feedforward neural network with three hidden layers, each with 256 ReLU
units. The forward network receives a concatenation of a state and a $z$
vector as input and has $|A| \times d$ as output dimension. The backward
network receives a state as input (or gripper's position for FeatchReach)
and has $d$ as output dimension. For goal-oriented DQN, the $Q$-value
network is also a feedforward neural network with three hidden layers, each with 256 ReLU units. It receives a concatenation of a state and a goal as input and has $|A|$ as output dimension.

For Ms.\ Pacman, the forward network is represented by a convolutional neural network given by a convolutional layer with 32 filters ($8\times8$, stride 4); convolutional layer with 64 filters ($4 \times 4$, stride 2); convolutional layer with 64 filters ($3 \times 3$, stride 1); and three fully-connected layers, each with 256 units. We use ReLU as activation function. The $z$ vector is concatenated with the output of the third convolutional layer. The output dimension of the final linear layer is $|A| \times d$. 
The backward network acts only on agent's position, a 2-dimensional input. It is represented by
a feedforward neural network with three hidden layers, each with 256 ReLU
units. The output dimension is $d$. For goal-oriented DQN, the $Q$-value network is represented by a convolutional neural network with the same architecture as the one of the forward network. The goal's position is concatenated with the output of the third convolutional layer. The output dimension of the final linear layer is $|A|$.  

\subsection{Implementation Details}
\label{sec:implem}

For all environments, we run the algorithms for 800 epochs. Each epoch
consists of 25 cycles where we interleave between gathering some amount
of transitions, to add to the replay buffer $\mathcal{D}$ (old transitions are thrown when we reach the maximum of its size), and performing 40 steps of
stochastic gradient descent on the model parameters. To collect
transitions, we generate episodes using some behavior policy. For both
mazes, we use a uniform policy while for FetchReach and Ms.\ Pacman, we
use an $\epsilon$-greedy policy ($\eps=0.2$) with respect to the current approximation $F(s, a, z)^\top z$ for a sampled $z$. At evaluation time, $\eps$-greedy policies are also used, with a smaller
$\eps=0.02$ for all environments except from discrete maze where we use Boltzmann policy with temperature $\tau =1$. We train each model for three different random seeds.

For generality, we will keep using the notation $B(s, a)$ while in our
experiments $B$ acts only on $\phi(s, a)$, a part of the state-action
space. For discrete and continuous mazes, $\phi(s, a) = s$, for
FetchReach, $\phi(s, a)$ the position of arm's gripper and for Ms.\
Pacman, $\phi(s, a)$ is the $2$-dimensional position $(x,y)$ of the agent on the screen. 

We denote by $\theta$ and $\omega$ the parameters of forward and backward networks respectively and  $\theta^{-}$ and $\omega^{-}$ the parameters of their corresponding target networks. Both  $\theta^{-}$ and $\omega^{-}$ are updated after each cycle using Polyak averaging; i.e $\theta^{-} \leftarrow \alpha \theta^{-} + (1-\alpha) \theta$ and $\omega^{-} \leftarrow \alpha \omega^{-} + (1-\alpha) \omega$ where $\alpha=0.95$ is the Polyak coefficient.

During training, we sample $z$ from a rescaled Gaussian that we denote
$\nu$. With a pure Gaussian in large dimension, the norm of $z$ would be very concentrated
around a single
value. Instead, we first sample a $d$-dimensional standard Gaussian
variable $x \sim \mathcal{N}(0, \Id) \in \mathbb{R}^d $  and a scalar
centered Cauchy variable $u \in \mathbb{R}$ of scale $0.5$, then we set
$z = \sqrt{d}\, u\, \frac{x}{\norm{x}}$.
We use a Cauchy distribution to ensure that the norm of $z$ spans the
non-negative real numbers space while having a heavy tail. We also scale by $\sqrt{d}$ to ensure  that each component of $z$
has an order of magnitude of $1$.

Before being fed to $F$, $z$ is
preprocessed by $z\gets \frac{z}{ \sqrt{ 1 + \norm{z}_2^2 / d}}$; this way, $z$ ranges over a bounded set in $\R^d$, and this takes advantage of optimal policies being equal for a reward $R$ and for $\lambda R$ with $\lambda>0$.

To update network parameters, we compute an empirical loss by sampling 3 mini-batches, each of size $b=128$,  of transitions $\{(s_i, a_i, s_{i+1})\}_{i \in I} \subset \mathcal{D}$, of target state-action pairs  $\{(s'_i, a'_i)\}_{i \in I} \subset \mathcal{D}$ and of $\{z_i \}_{i \in I} \sim \nu$:
\begin{align}
 \mathscr{L} (\theta, \omega) & = \frac{1}{2 b^2} \sum_{i, j \in I^2} \left( F_\theta(s_i, a_i,z_i)^\top B_\omega(s'_j, a'_j)  - \gamma \sum_{a \in A} \pi_{z_i} (a \mid s_{i+1}) \cdot F_{\theta^{-}}(s_{i+1}, a ,z_i)^\top B_{\omega^{-}}(s'_j, a'_j)  \right)^2 \nonumber \\
 & \quad  - \frac{1}{b} \sum_{i \in I} F_\theta(s_i, a_i,z_i)^\top  B_\omega(s_i, a_i)  
\end{align}
where  we use the Boltzmann policy $\pi_{z_i}( \cdot \mid s_{i+1} ) = \texttt{softmax} (F_{\theta^{-}}(s_{i+1},
 \cdot, z_i)^\top z_i / \tau )$ with fixed temperature $\tau=200$ to avoid the instability and discontinuity caused by the argmax operator.
 
Since there is
unidentifiability between $F$ and $B$ (Appendix, Remark~\ref{rem:normB}),
we include a gradient to make $B$ closer to orthonormal, $\E_{(s,a)\sim \rho} B(s,a)\transp{B(s,a)}\approx \Id$: 
\begin{multline}
\frac{1}{4} \partial_\omega \norm{\E_{(s,a)\sim \rho}
B_\omega(s,a)\transp{B_{\omega}(s,a)} - \Id }^2 =\\ \E_{(s,a)\sim \rho, (s', a') \sim \rho} \partial_\omega B_\omega (s, a)^\top \left( B_\omega(s, a)^\top B_\omega(s', a') \cdot B_\omega (s', a') - B(s, a) \right)
\end{multline}
To compute an unbiased estimate of the latter gradient, we use the following auxiliary empirical loss: 

\begin{align}
\mathscr{L}_{\texttt{reg}} (\omega) & = \frac{1}{b^2} \sum_{i, j \in I^2} B_{\omega}(s_i, a_i)^\top \texttt{stop-gradient}(B_{\omega}(s'_j, a'_j) ) \cdot \texttt{stop-gradient}  (B_{\omega}(s_i, a_i)^\top B_{\omega}(s'_j, a'_j)) \nonumber \\
& \quad - 
  \frac{1}{b} \sum_{i \in I} B_{\omega}(s_i, a_i)^\top \texttt{stop-gradient}(B_{\omega}(s_i, a_i) )
\end{align}

Finally, we use the Adam optimizer and we update $\theta$ and $\omega$ by taking a gradient step on $\mathscr{L} (\theta, \omega)$ and $\mathscr{L} (\theta, \omega)  + \lambda \cdot \mathscr{L}_{\texttt{reg}} (\omega)$ respectively, where $\lambda$ is a regularization coefficient that we set to 1 for all experiments.

We summarize the hyperparameters used for FB algorithm and goal-oriented DQN in table~\ref{table: fb param} and~\ref{table: dqn param} respectively.

\begin{table*}[h!]
% \small
\vspace{5px} 
    \footnotesize\centering
%    \vspace{-10pt}
    \begin{tabular}{l|llll}
    \toprule
    Hyperparameters & Discrete Maze & Continuous Maze & FetchReach & Ms.\ Pacman \\
    \midrule
    number of cycles per epoch & 25 & 25 & 25 & 25 \\
    number of episodes per cycles & 4 & 4 & 2 & 2 \\
    number of timesteps per episode & 50& 30& 50& 50 \\
    number of updates per cycle & 40 & 40 & 40 & 40 \\
    exploration $\eps$ & 1 & 1 & 0.2 & 0.2 \\ 
    evaluation $\eps$ & Boltzman with $\tau=1$ & 0.02 & 0.02 & 0.02 \\
    temperature $\tau$ & 200 & 200 & 200 & 200 \\
    learning rate & 0.001 & 0.0005 & 0.0005 & 0.0001 if $d=100$ else 0.0005 \\
    mini-batch size & 128 & 128 & 128 & 128 \\
    regularization coefficient $\lambda$ & 1 & 1 & 1 & 1 \\
    Polyak coefficient $\alpha$ & 0.95 & 0.95 & 0.95 & 0.95 \\
    discount factor $\gamma$ & 0.99 & 0.99 & 0.9 & 0.9 \\
    replay buffer size & $10^6$ & $10^6$ & $10^6$ & $10^6$ \\
    \bottomrule
     \end{tabular}
     \caption{\label{table: fb param} Hyperparameters of the FB algorithm}
\end{table*}

\begin{table*}[h!]
% \small
    \centering\footnotesize
%    \vspace{-10pt}
    \begin{tabular}{l|llll}
    \toprule
    Hyperparameters & Discrete Maze & Continuous Maze & FetchReach & Ms.\ Pacman \\
    \midrule
    number of cycles per epoch & 25 & 25 & 25 & 25 \\
    number of episodes per cycles & 4 & 4 & 2 & 2 \\
    number of timesteps per episode & 50& 30& 50& 50 \\
    number of updates per cycle & 40 & 40 & 40 & 40 \\
    exploration $\eps$ & 0.2& 0.2 & 0.2 & 0.2 \\ 
    evaluation $\eps$ & Boltzman with $\tau=1$ & 0.02 & 0.02 & 0.02 \\
    learning rate & 0.001 & 0.0005 & 0.0005 & 0.0005 \\
    mini-batch size & 128 & 128 & 128 & 128 \\
    Polyak coefficient $\alpha$ & 0.95 & 0.95 & 0.95 & 0.95 \\
    discount factor $\gamma$ & 0.99 & 0.99 & 0.9 & 0.9 \\
    replay buffer size & $10^6$ & $10^6$ & $10^6$ & $10^6$ \\
    ratio of hindsight replay &- &- &- & 0.8 \\
    \bottomrule
     \end{tabular}
     \caption{\label{table: dqn param}  Hyperparameters of the goal-oriented DQN algorithm}
\end{table*}

\subsection{Experimental results}
\label{sec:addit_results}
In this section, we provide additional experimental results.
\subsubsection{Goal-Oriented Setup: Quantitative Comparisons}

\begin{figure}[h!]
    \centering
    \includegraphics[width=0.6\textwidth]{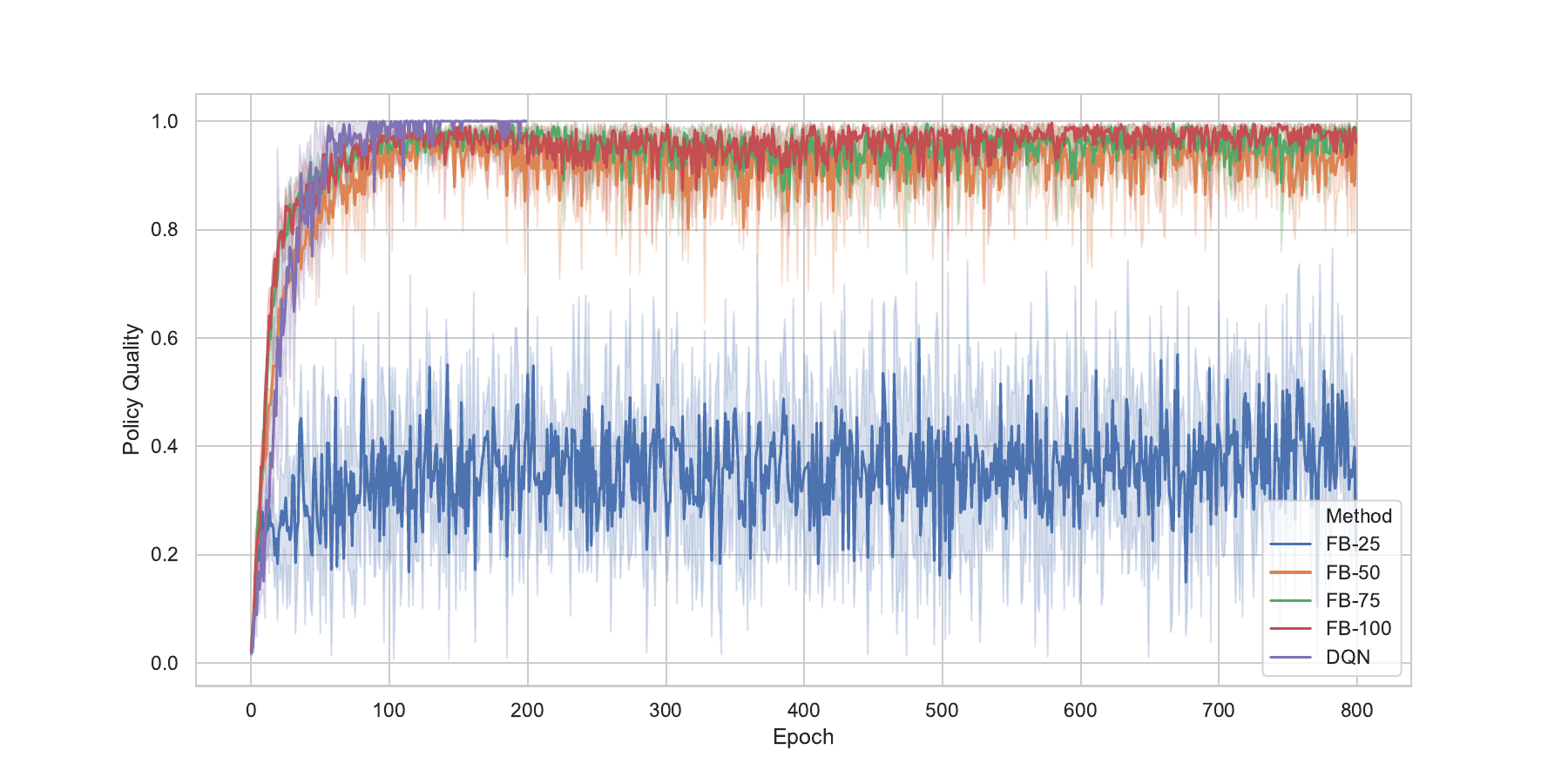} \hspace{10pt}
     \includegraphics[width=0.3\textwidth]{grid/final_score.pdf} 
    \caption{\textbf{Discrete maze}: Comparative performance of FB for different dimensions and DQN. \textbf{Left}: the policy quality averaged over 20 randomly selected goals as function of the training epochs. \textbf{Right}: the policy quality averaged over the goal space after 800 training epochs.}
\label{fig:discretemazeresults}
\end{figure}

\begin{figure}[h!]
    \centering
    \includegraphics[width=0.6\textwidth]{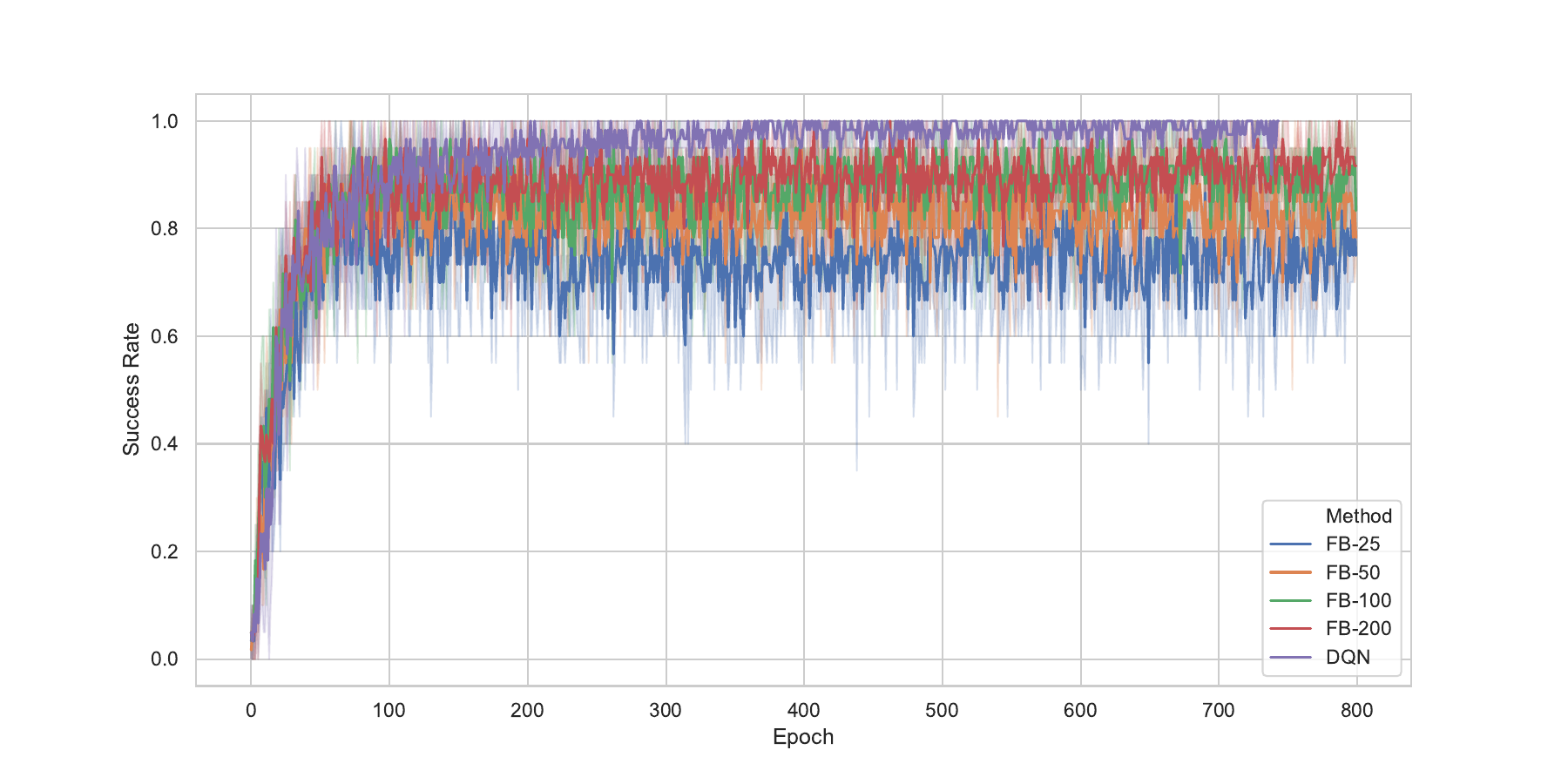} \hspace{10pt}
     \includegraphics[width=0.3\textwidth]{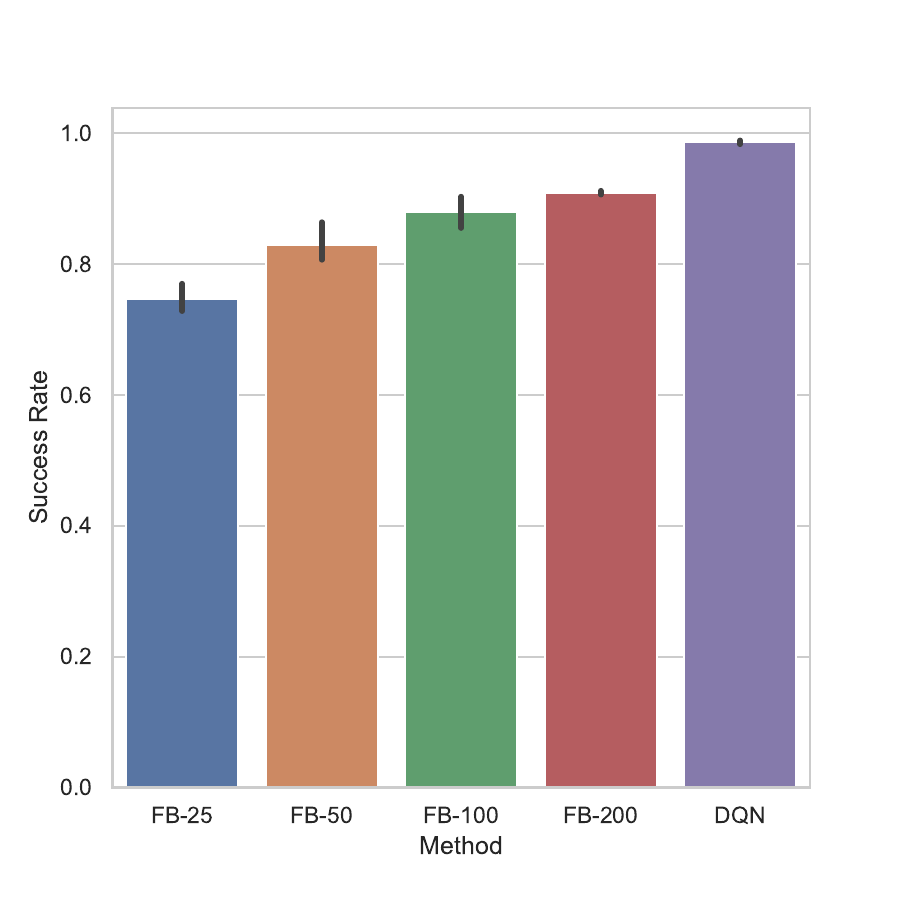} 
    \caption{\textbf{Continuous maze:} Comparative performance of FB for different dimensions and DQN. \textbf{Left}: the success rate averaged over 20 randomly selected goals as function of the training epochs. \textbf{Right}: the success rate averaged over 1000 randomly sampled goals after 800 training epochs.}
\end{figure}

\begin{figure}[h!]
    \centering
    \includegraphics[width=0.6\textwidth]{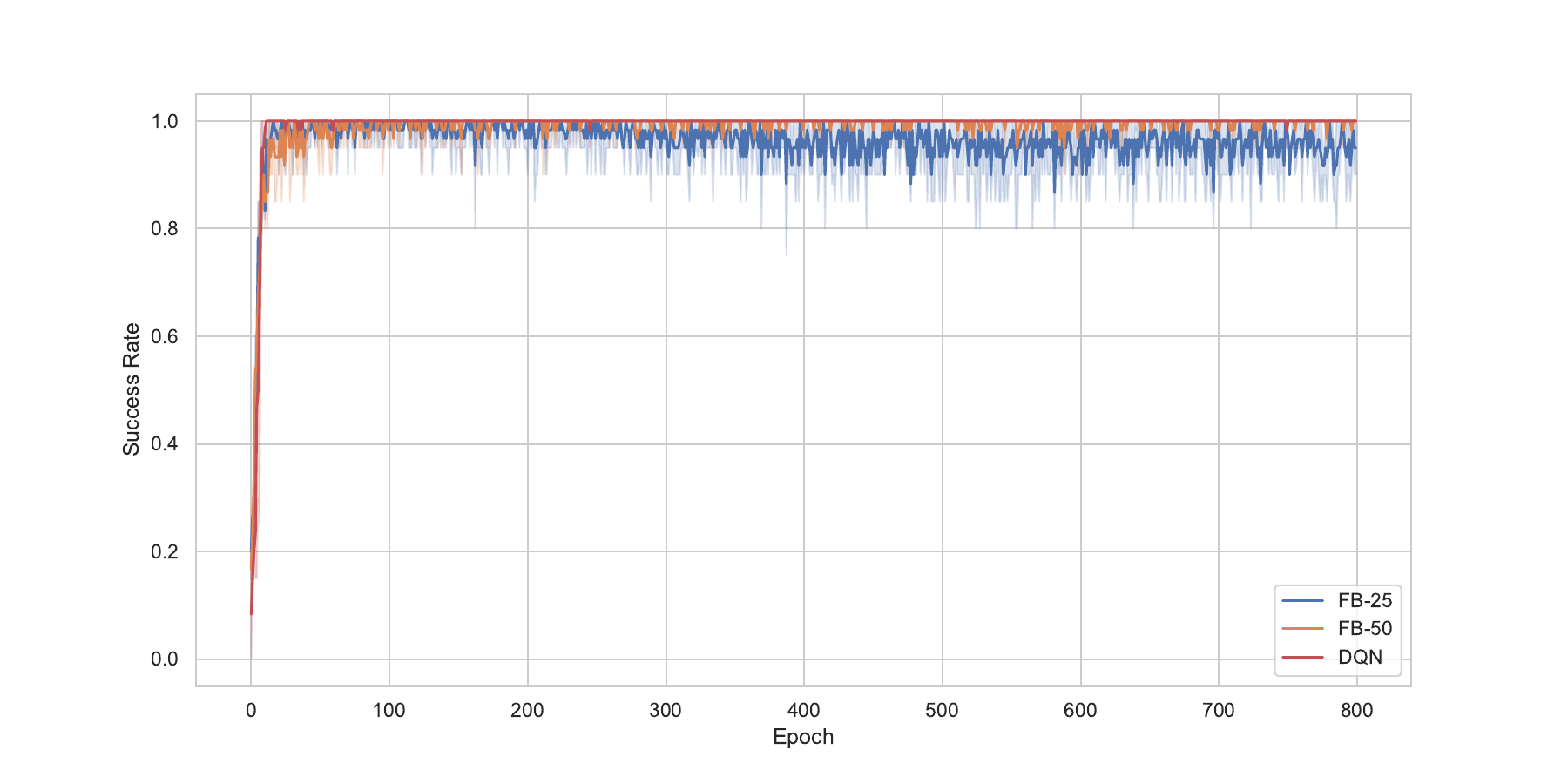} \hspace{10pt}
     \includegraphics[width=0.3\textwidth]{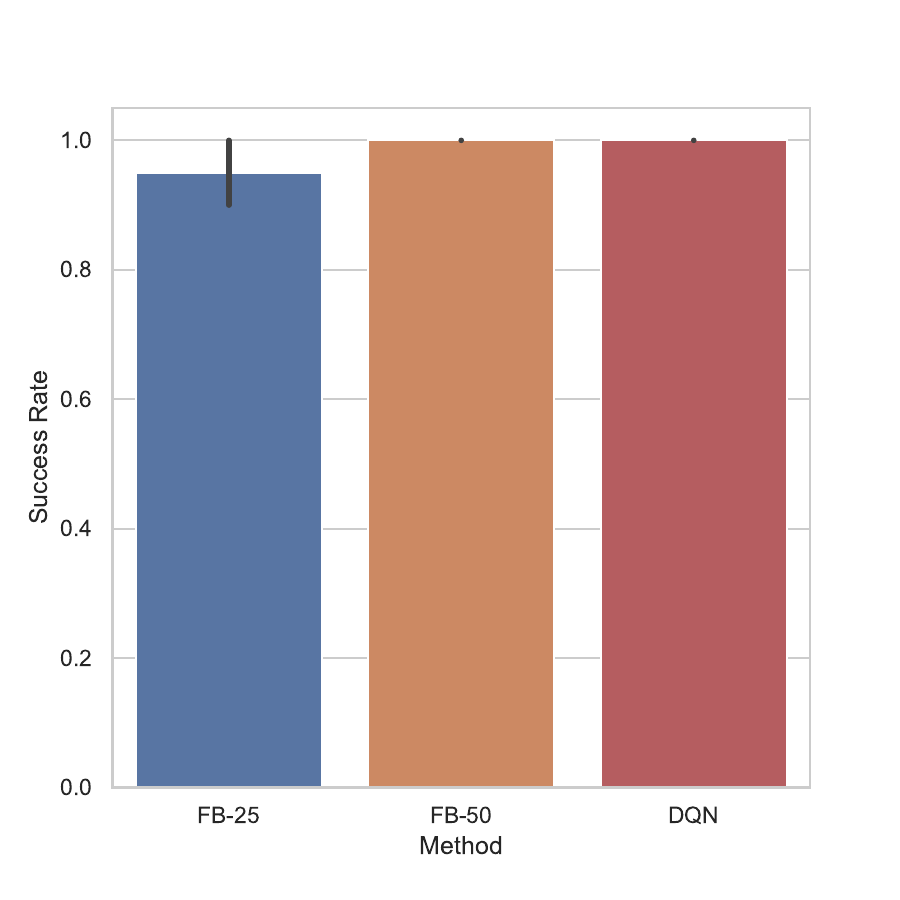}
    \caption{\textbf{FetchReach}: Comparative performance of FB for different dimensions and DQN. \textbf{Left}: the success rate averaged over 20 randomly selected goals as function of the training epochs. \textbf{Right}: the success rate averaged over 1000 randomly sampled goals after 800 training epochs.}
\end{figure}

\begin{figure}[h!]
    \centering
    \includegraphics[width=0.6\textwidth]{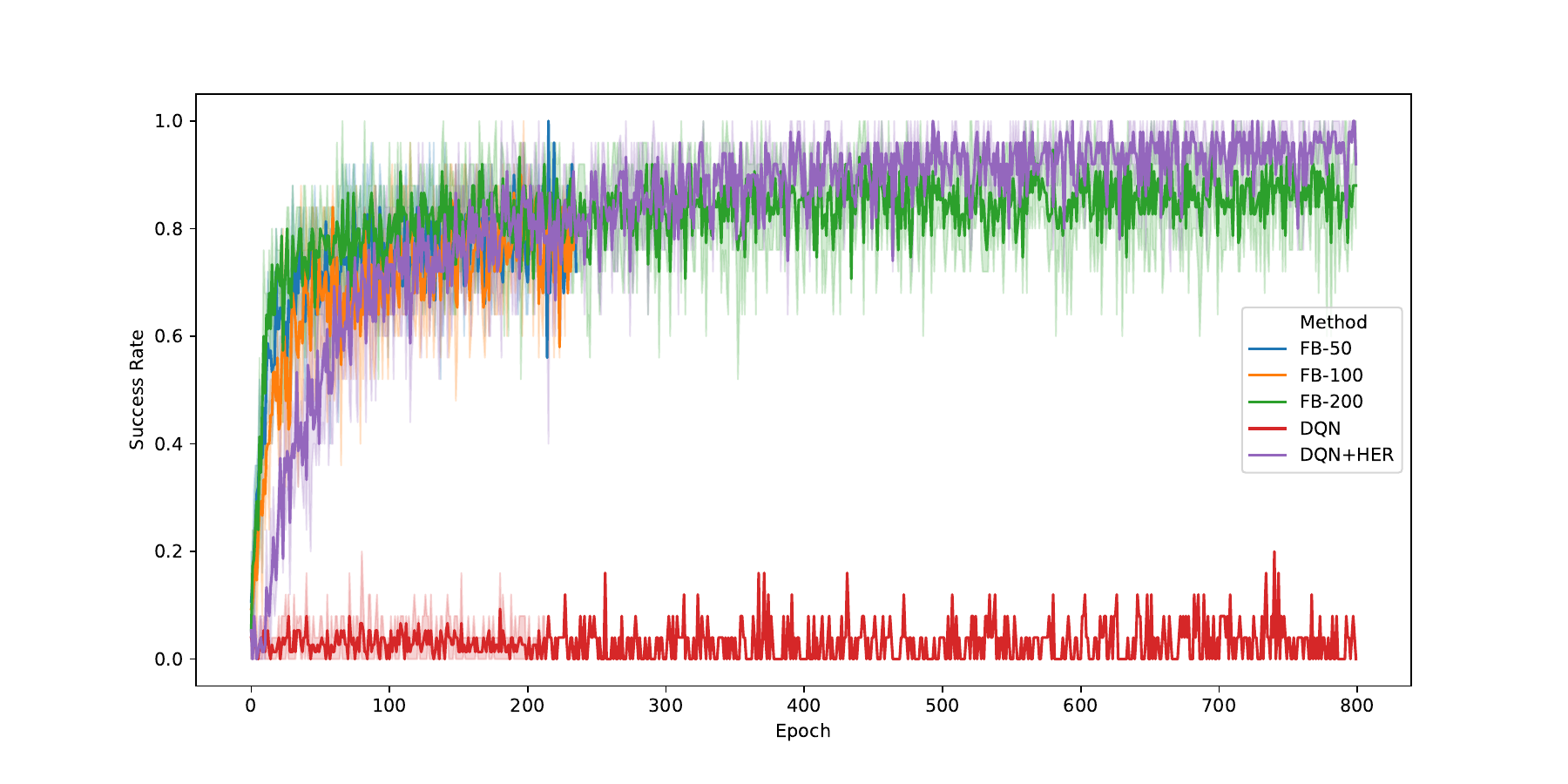} \hspace{10pt}
     \includegraphics[width=0.3\textwidth]{pacman/final_score.pdf}
    \caption{\textbf{Ms.\ Pacman}: Comparative performance of FB for different dimensions and DQN. \textbf{Left}: the success rate averaged over 20 randomly selected goals as function of the training epochs. \textbf{Right}: the success rate averaged over the 184 handcrafted goals after training epochs. Note that FB-50 and F-100 have been trained only for 200 epochs.}
\end{figure}

\begin{figure}[h!]
    \centering
    \includegraphics[width=0.48\textwidth]{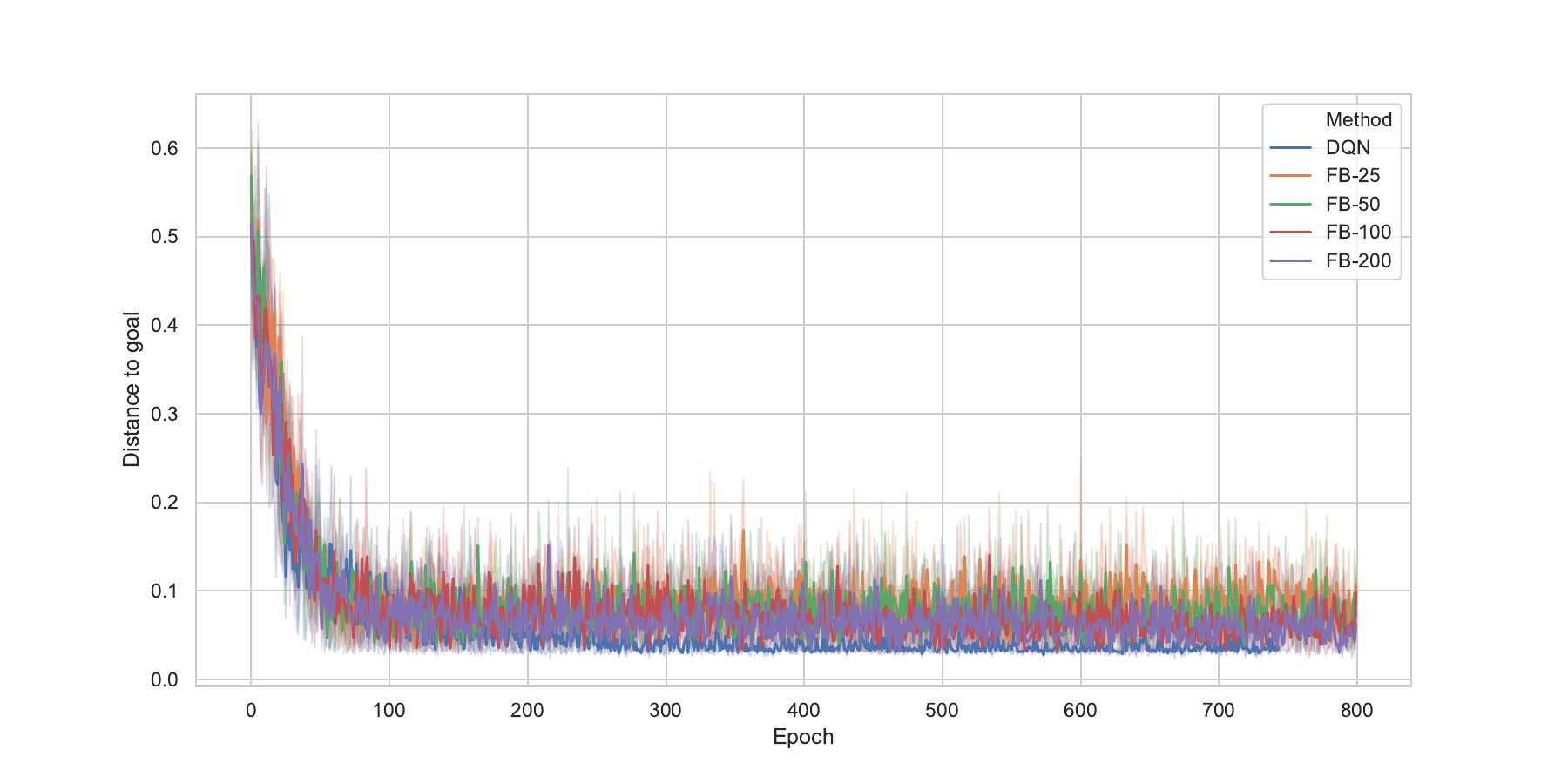} \hspace{10pt}
     \includegraphics[width=0.48\textwidth]{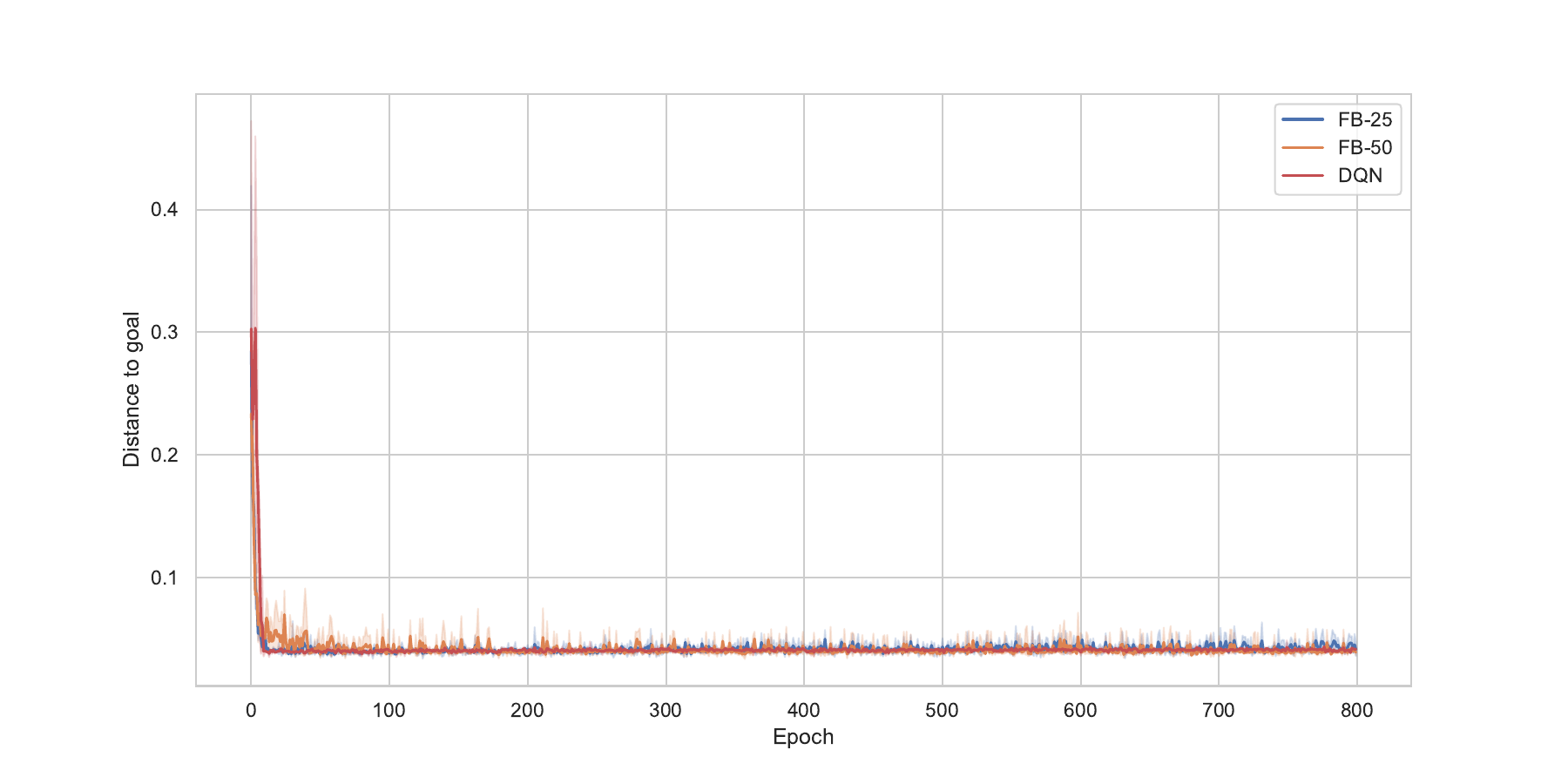}
    \caption{Distance to goal of FB for different dimensions and DQN as function of training epochs. \textbf{Left}: Continuous maze. \textbf{Right}: FetchReach.}
\end{figure}

\subsubsection{More Complex Rewards: Qualitative Results}

\begin{figure}[h!]
    \centering
    \includegraphics[width=1\textwidth]{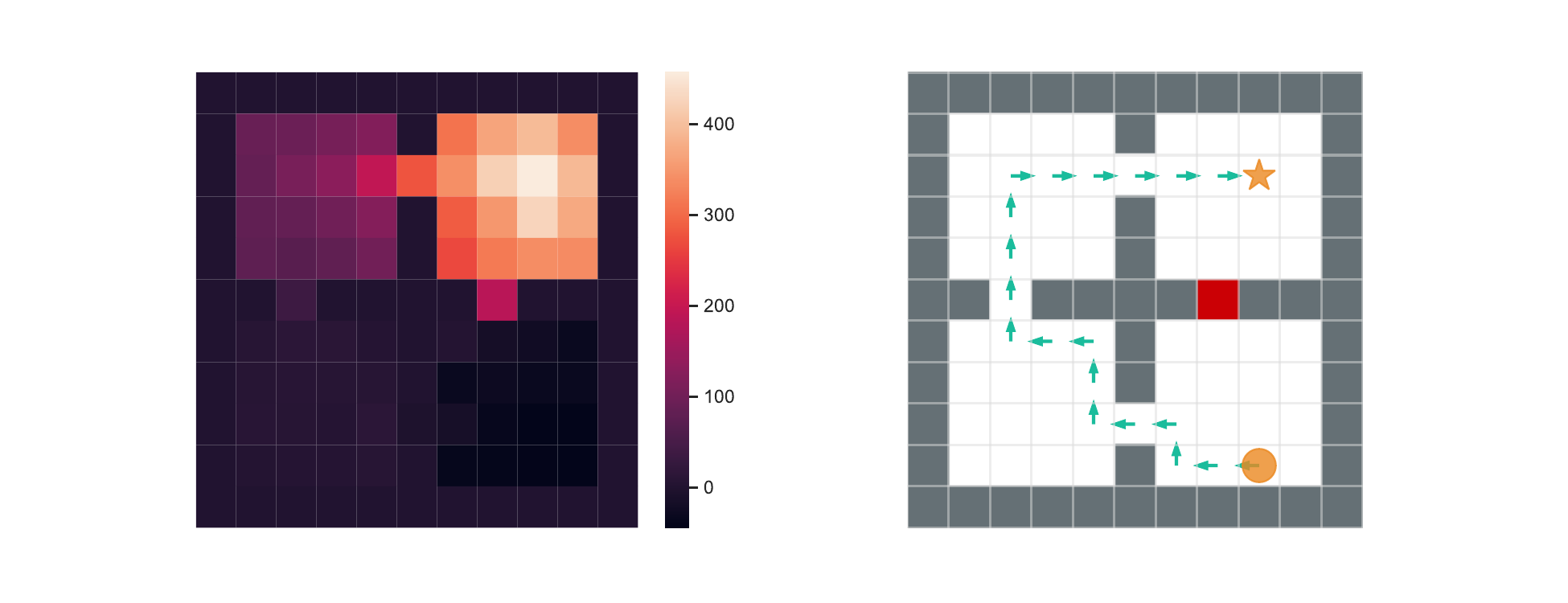} \hspace{10pt}
     \includegraphics[width=1\textwidth]{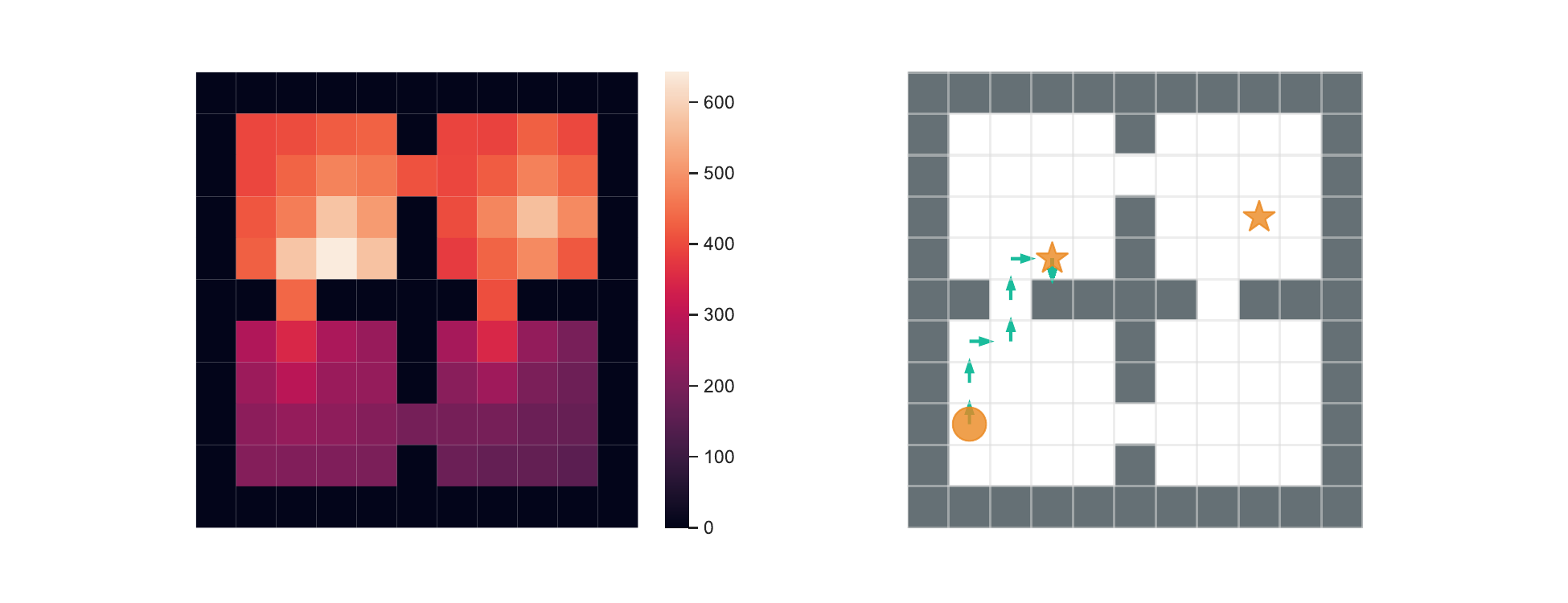}
      \includegraphics[width=1\textwidth]{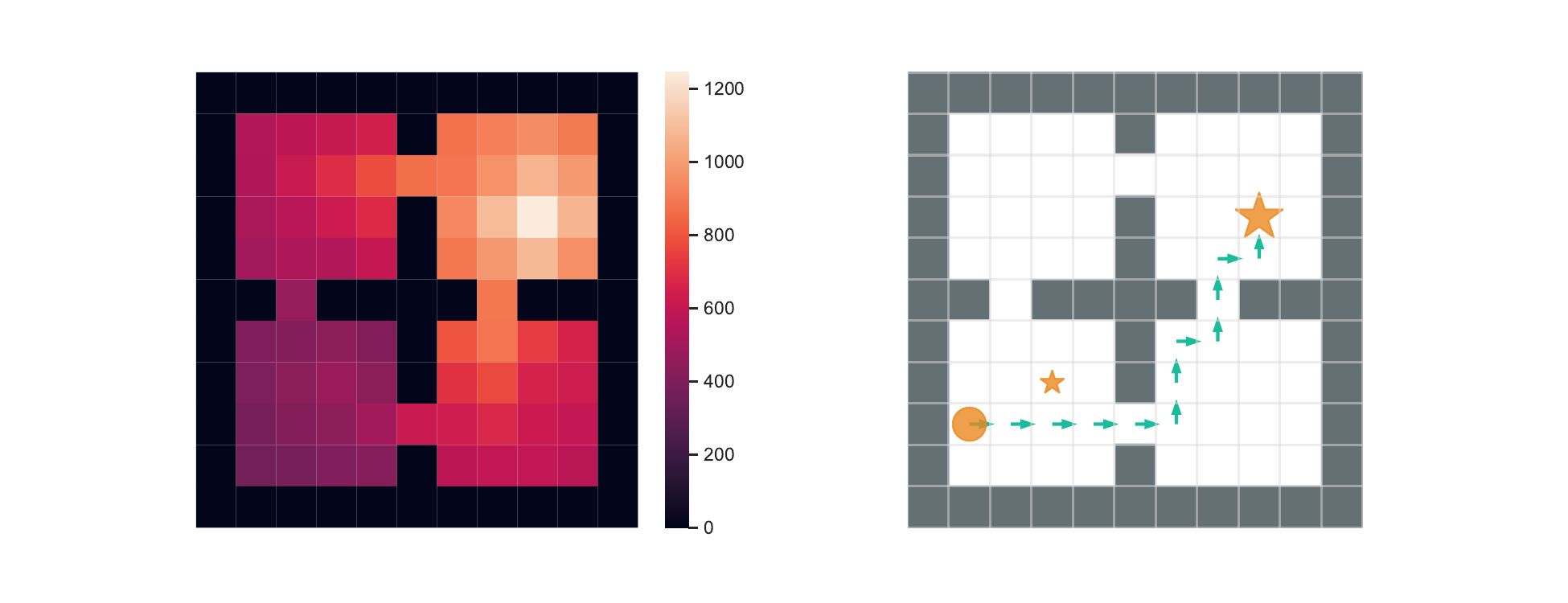}
    \caption{\textbf{Discrete Maze}: Heatmap
    plots of  $\max_{a \in A}F(s, a, z_R)^\top z_R$  (left) and trajectories of the Boltzmann policy with respect to $F(s, a, z_R)^\top z_R$ with temperature $\tau=1$ (right).
    \textbf{Top row}: for the task of reaching a target while avoiding a
    forbidden region, \textbf{Middle row}: for the task of reaching the closest goal among two equally rewarding positions, \textbf{Bottom row}: choosing between a small, close reward and a large, distant one. }
\end{figure}

\begin{figure}[h!]
    \centering
    \includegraphics[width=0.8\textwidth]{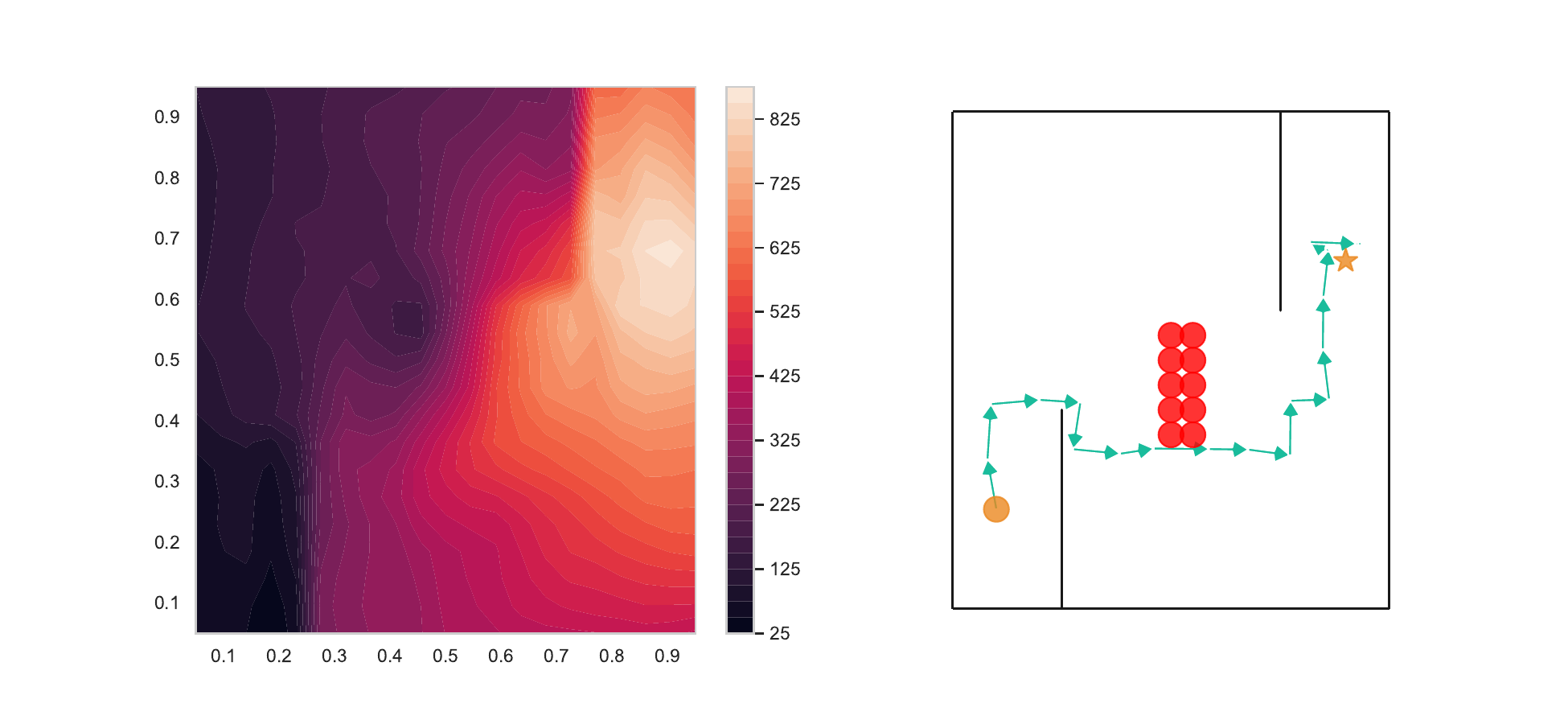}\hspace{10pt}
    \includegraphics[width=0.8\textwidth]{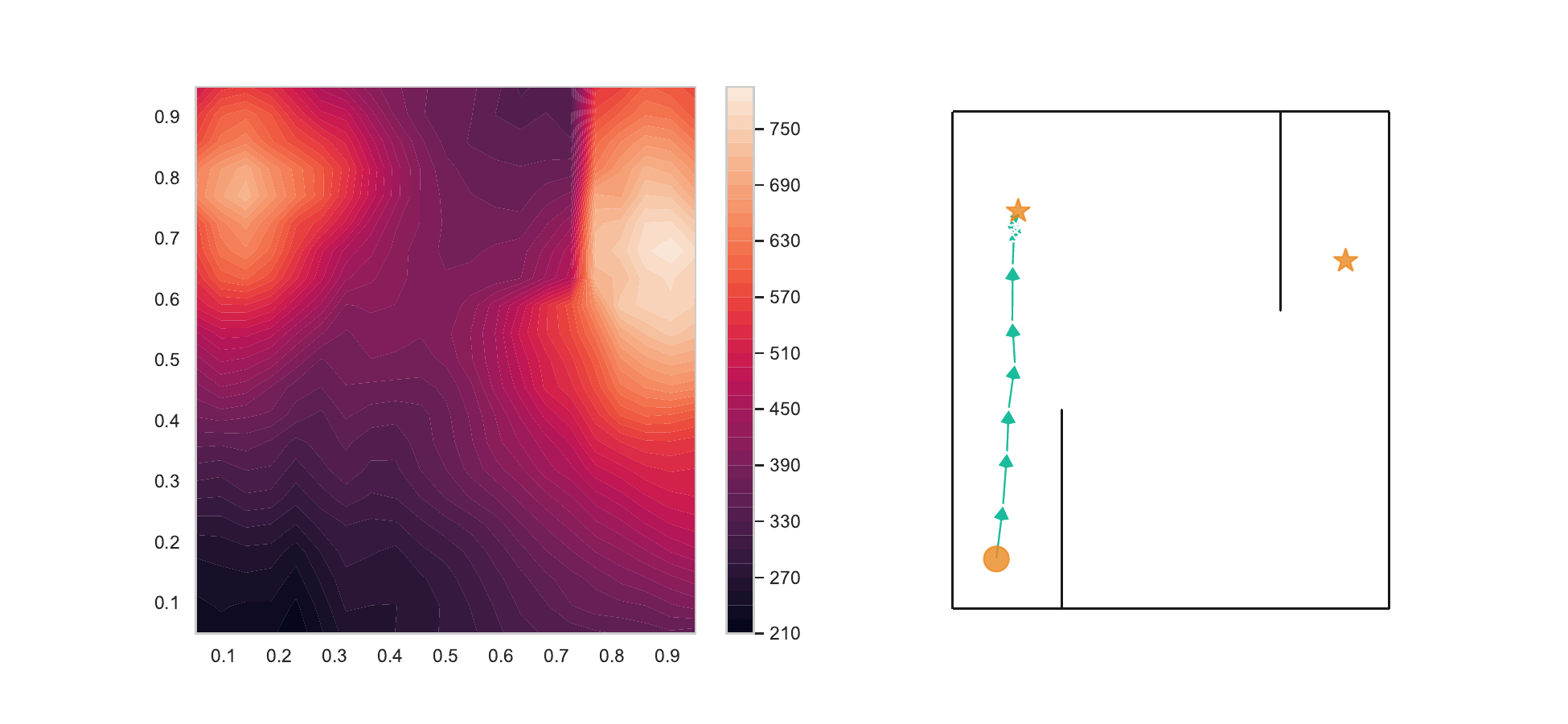} \hspace{10pt}
     \includegraphics[width=0.8\textwidth]{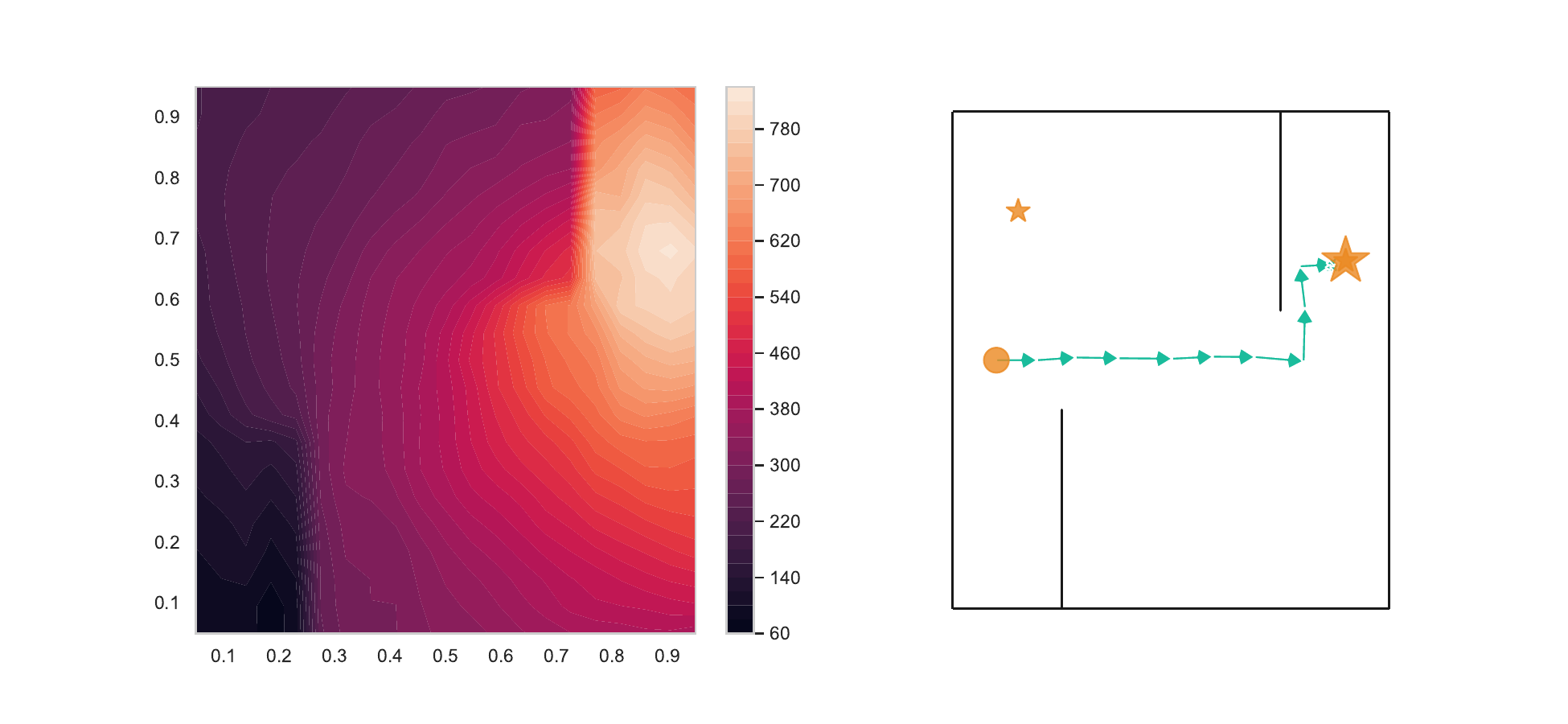} \hspace{10pt}
    \caption{\textbf{Continuous Maze}: Contour plots
    plot of  $\max_{a \in A}F(s, a, z_R)^\top z_R$  (left) and trajectories of the $\eps$ greedy policy with respect to $F(s, a, z_R)^\top z_R$ with $\eps=0.1$ (right).
    \textbf{Left}: for the task of reaching a target while avoiding a
    forbidden region, \textbf{Middle}: for the task of reaching the closest goal among two equally rewarding positions, \textbf{Right}: choosing between a small, close reward and a large, distant one..}
\end{figure}

\begin{figure}[h!]
    \centering
    \includegraphics[width=0.3\textwidth]{pacman/neg_FB-300.pdf}\hspace{10pt}
    \includegraphics[width=0.3\textwidth]{pacman/nearest_FB-300.pdf} \hspace{10pt}
     \includegraphics[width=0.3\textwidth]{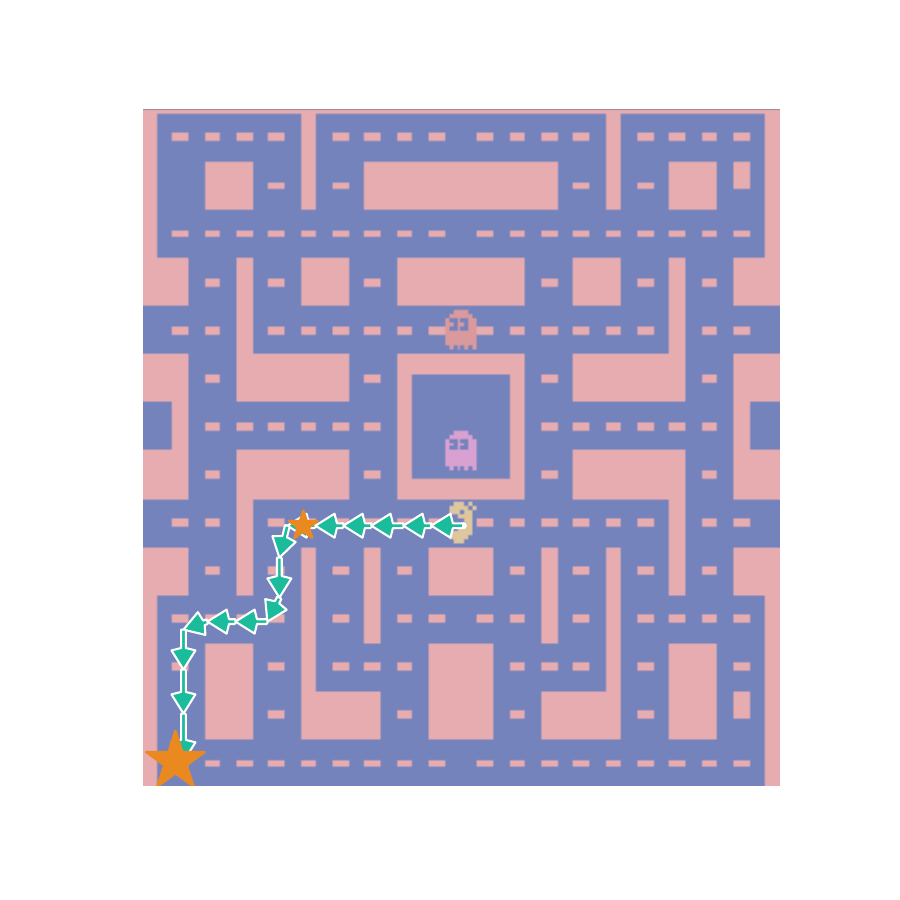} \hspace{10pt}
    \caption{ \textbf{Ms.\ Pacman}: Trajectories of the $\eps$ greedy policy with respect to $F(s, a, z_R)^\top z_R$ with $\eps=0.1$ (right).
    \textbf{Top row}: for the task of reaching a target while avoiding a
    forbidden region, \textbf{Middle row}: for the task of reaching the closest goal among two equally rewarding positions, \textbf{Bottom row}: choosing between a small, close reward and a large, distant one..}
\end{figure}

\begin{figure}[h!]
    \centering
    \includegraphics[width=0.19\textwidth,  trim = {1cm 1cm 1cm 1cm}]{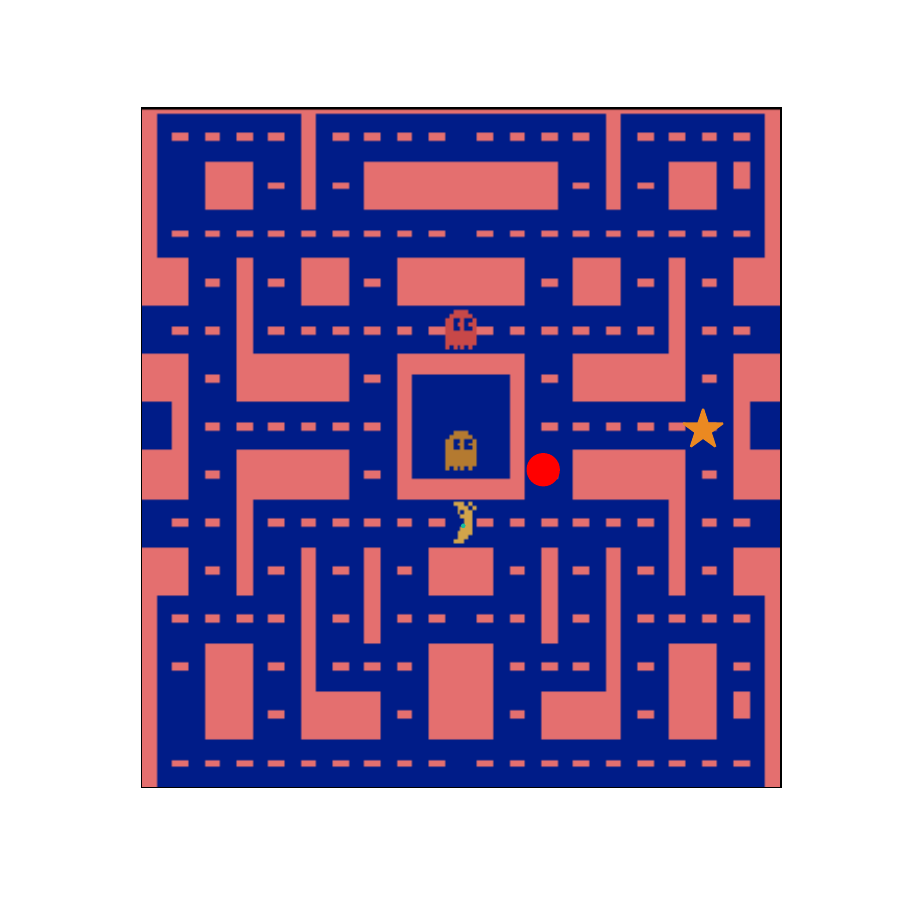}
    \includegraphics[width=0.19\textwidth,  trim = {1cm 1cm 1cm 1cm}]{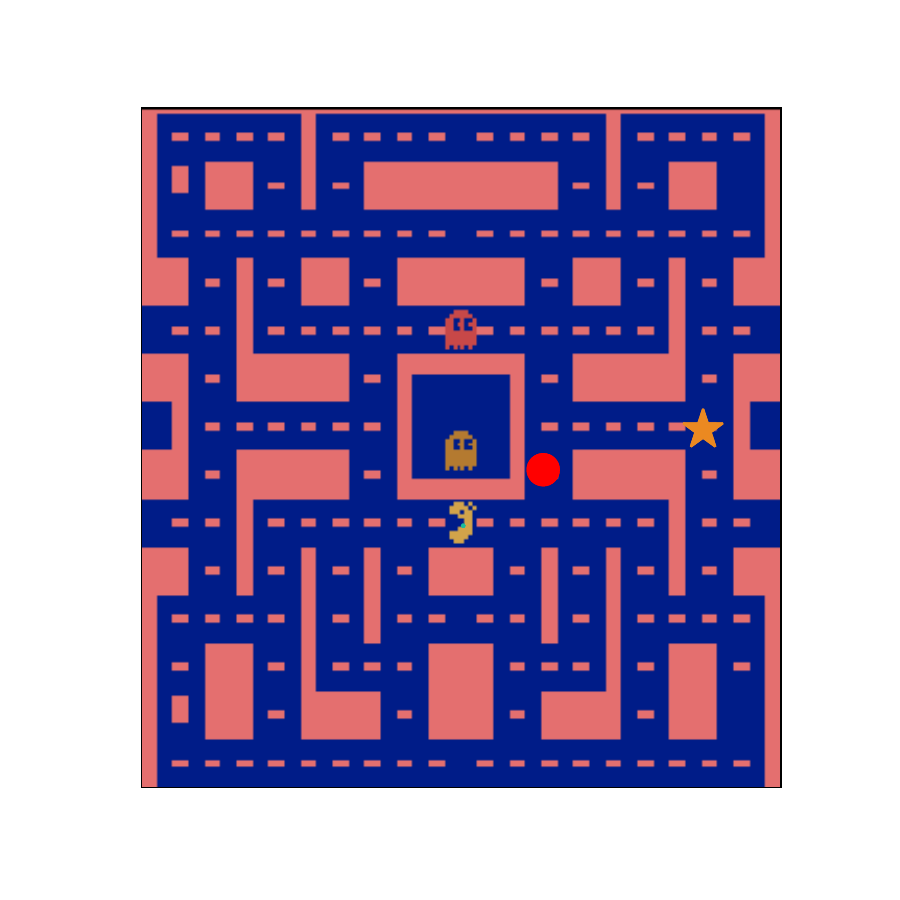} 
     \includegraphics[width=0.19\textwidth,  trim = {1cm 1cm 1cm 1cm}]{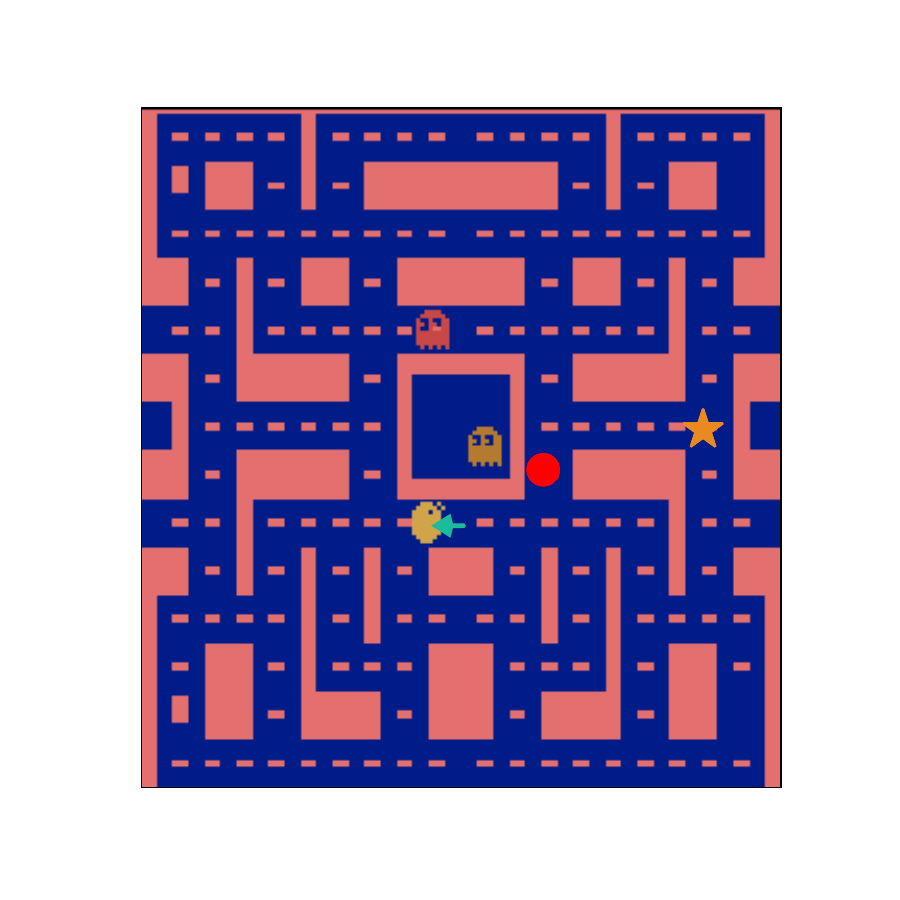}
     \includegraphics[width=0.19\textwidth,  trim = {1cm 1cm 1cm 1cm}]{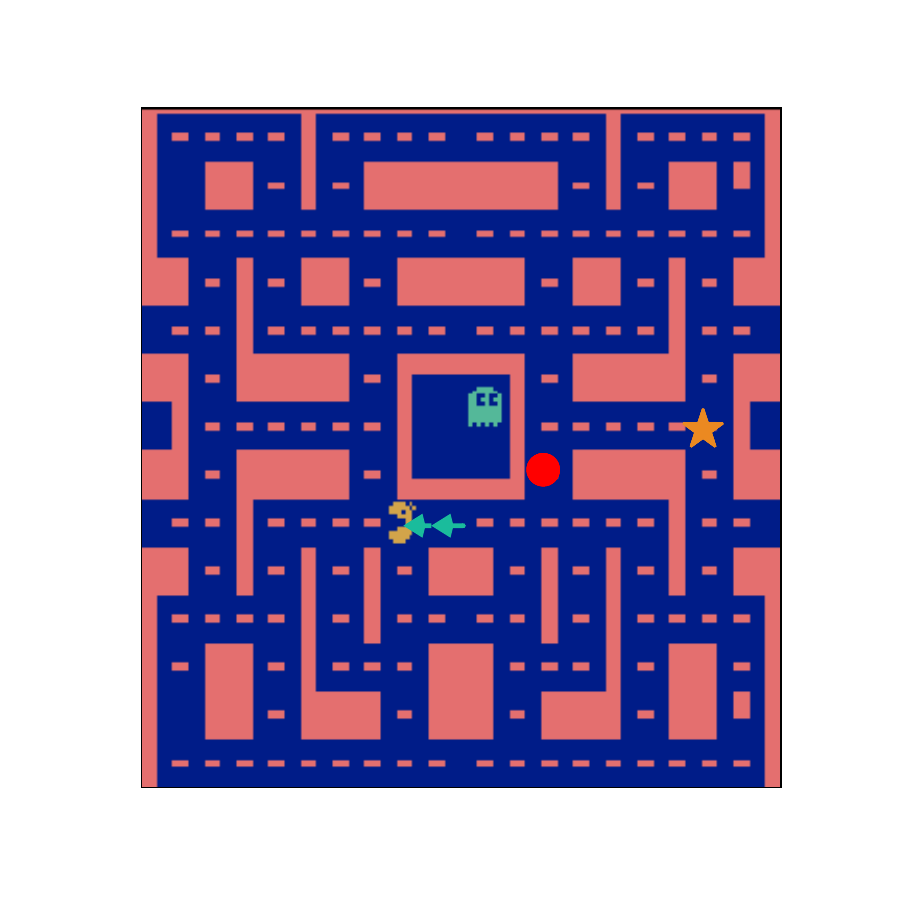} 
     \includegraphics[width=0.19\textwidth,  trim = {1cm 1cm 1cm 1cm}]{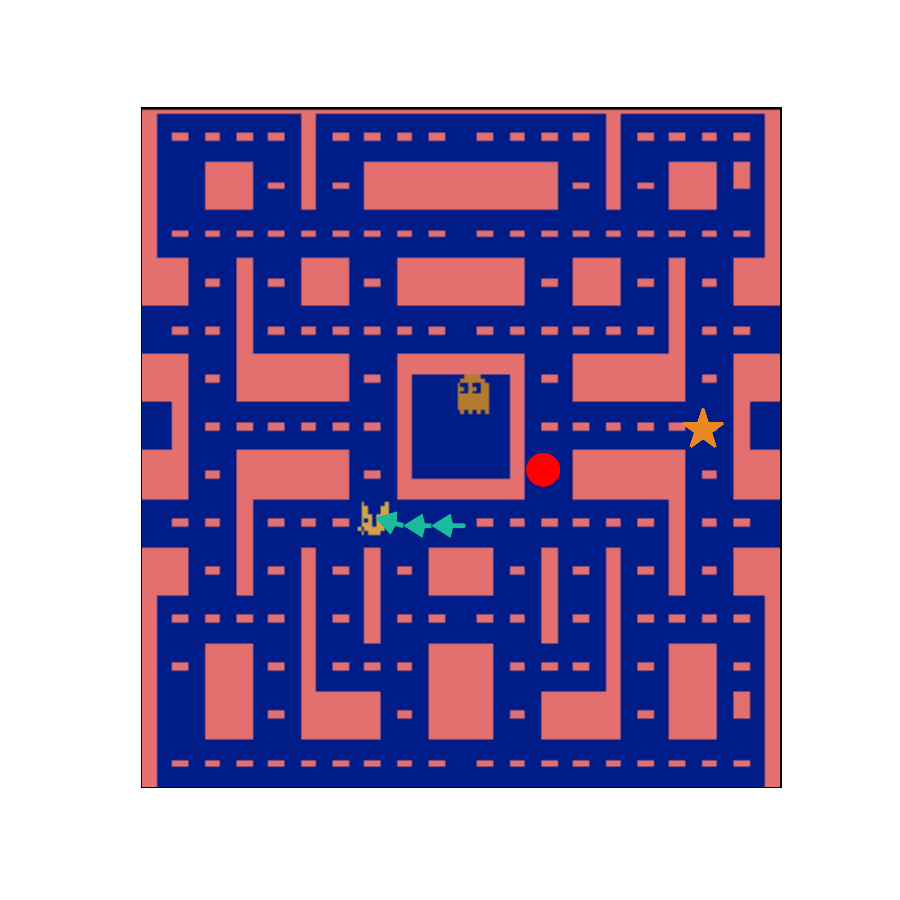} 
     \includegraphics[width=0.19\textwidth,  trim = {1cm 1cm 1cm 1cm}]{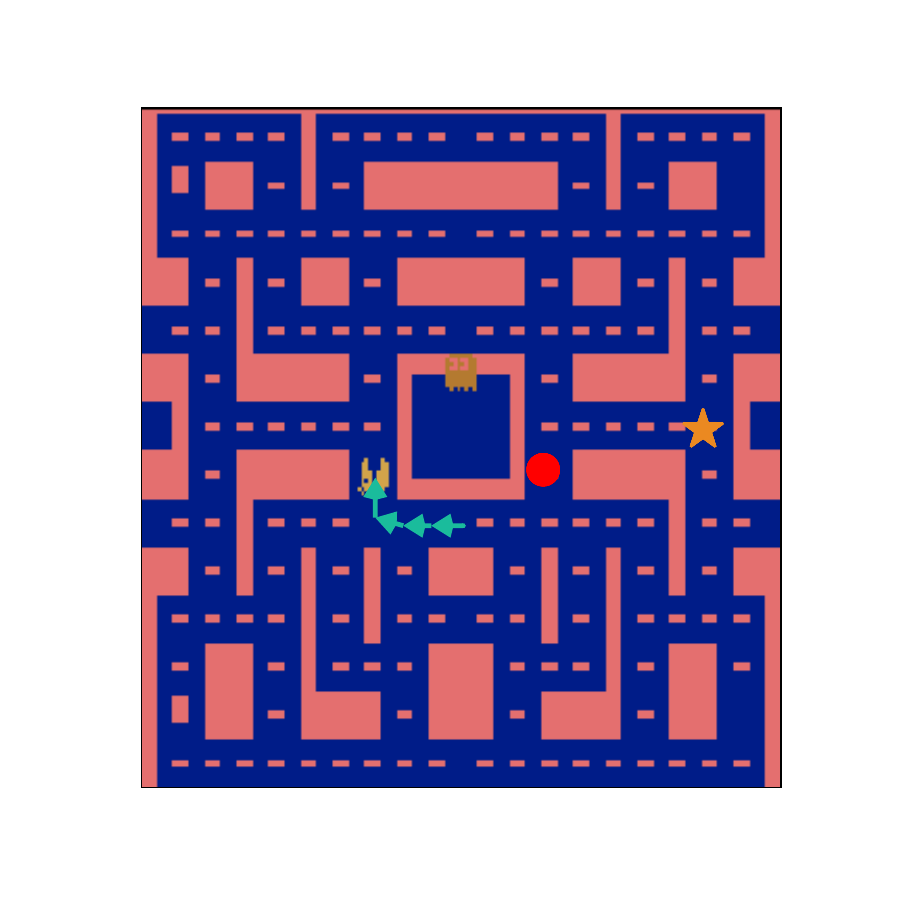}
    \includegraphics[width=0.19\textwidth,  trim = {1cm 1cm 1cm 1cm}]{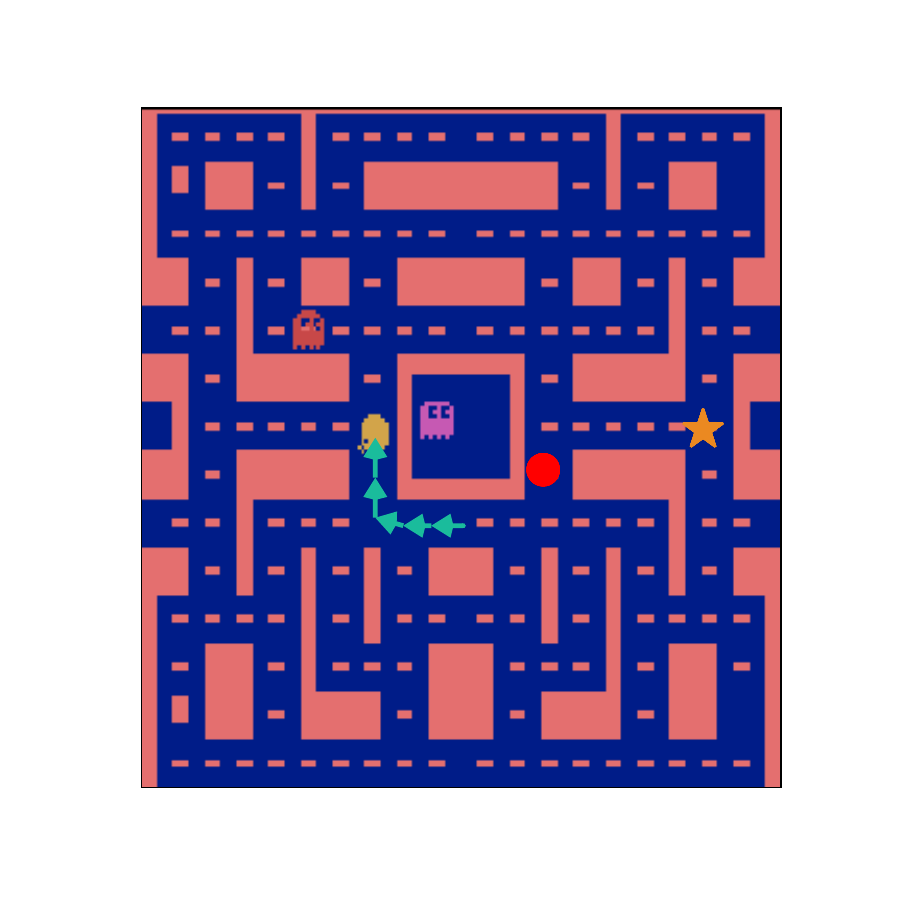} 
     \includegraphics[width=0.19\textwidth,  trim = {1cm 1cm 1cm 1cm}]{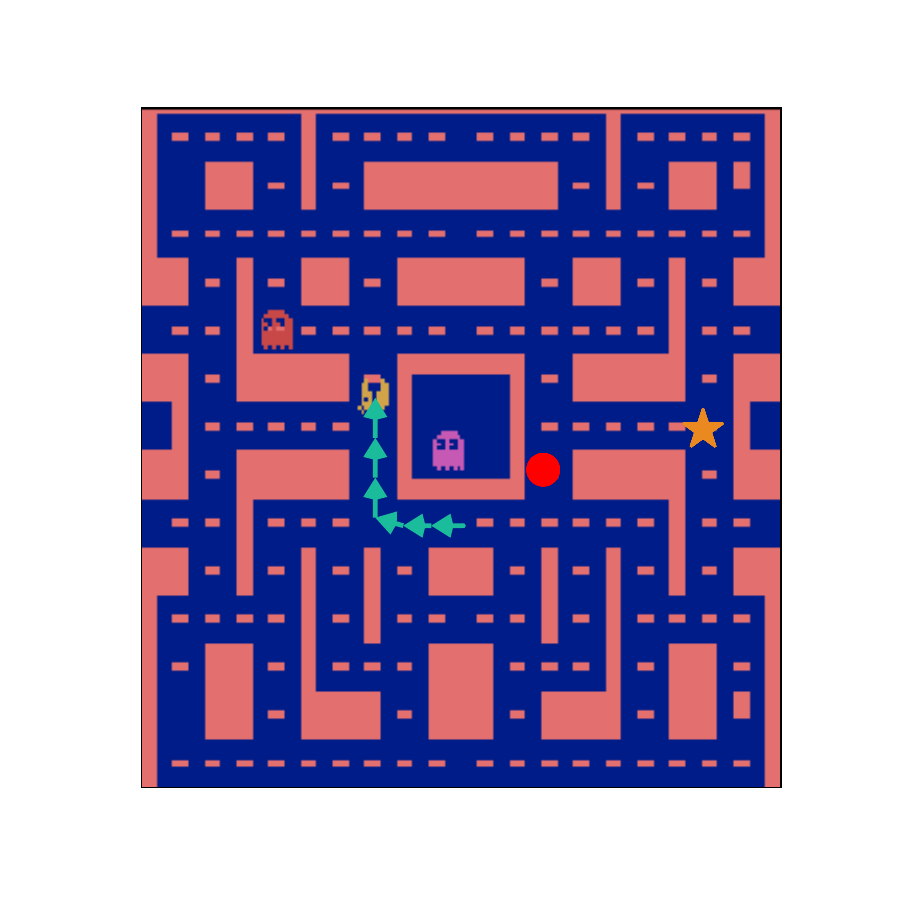}
     \includegraphics[width=0.19\textwidth,  trim = {1cm 1cm 1cm 1cm}]{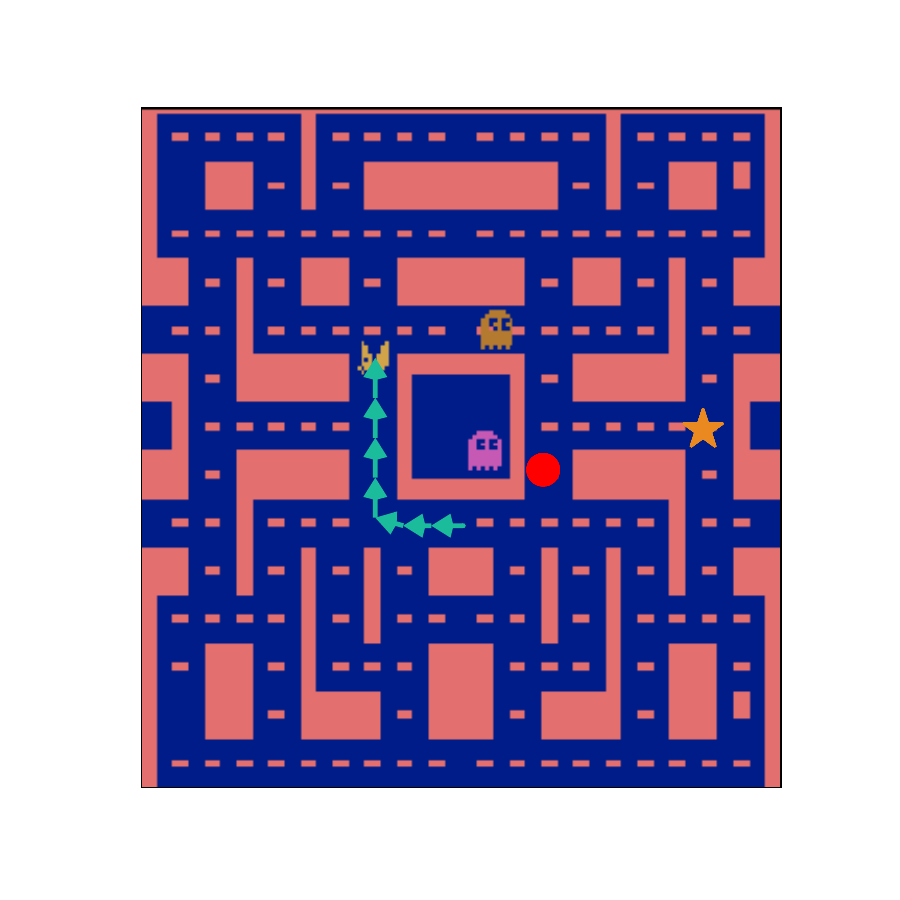} 
     \includegraphics[width=0.19\textwidth,  trim = {1cm 1cm 1cm 1cm}]{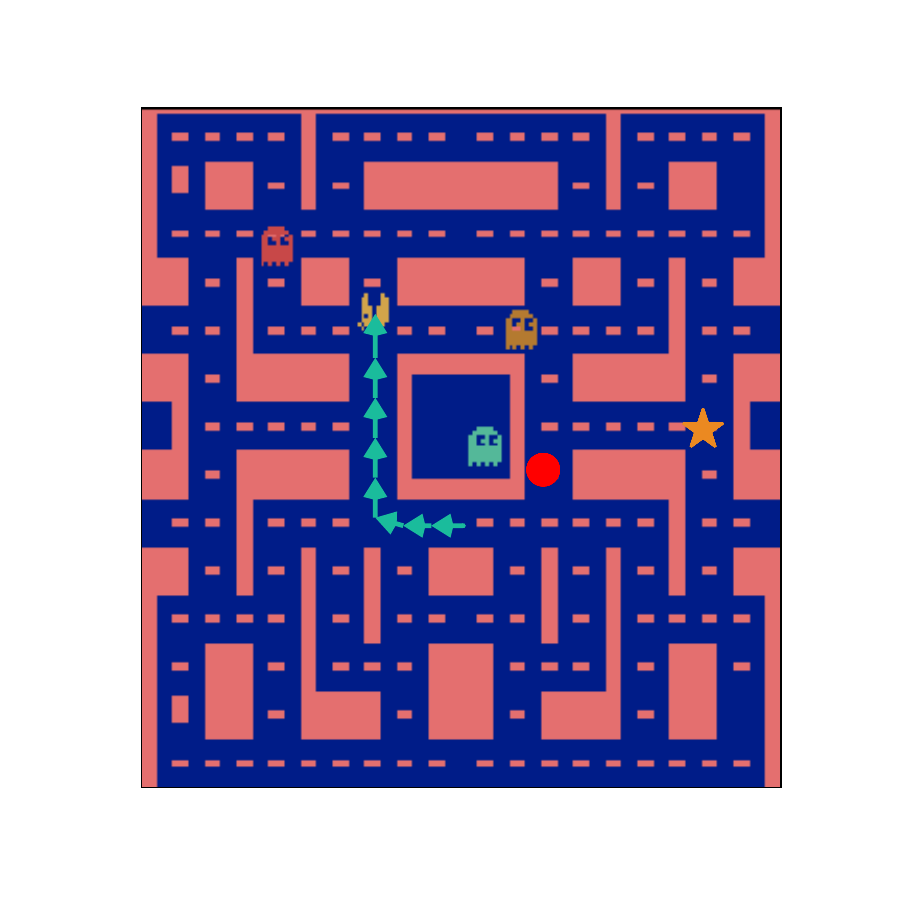} 
     \includegraphics[width=0.19\textwidth,  trim = {1cm 1cm 1cm 1cm}]{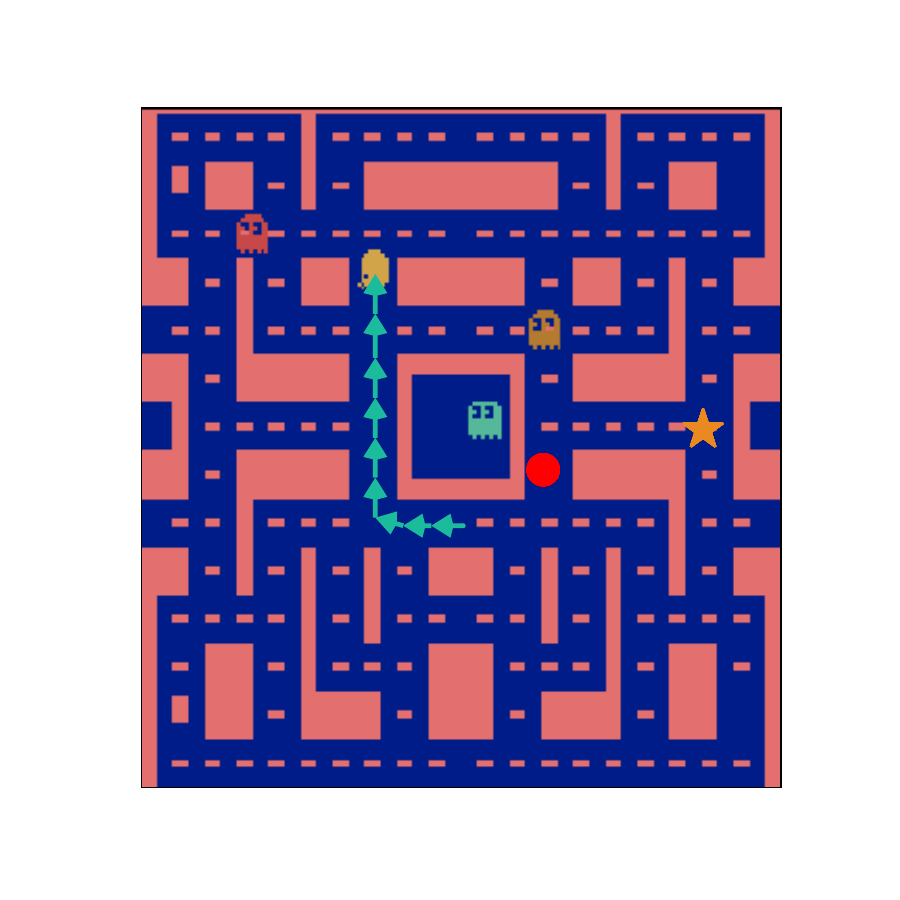}
    \includegraphics[width=0.19\textwidth,  trim = {1cm 1cm 1cm 1cm}]{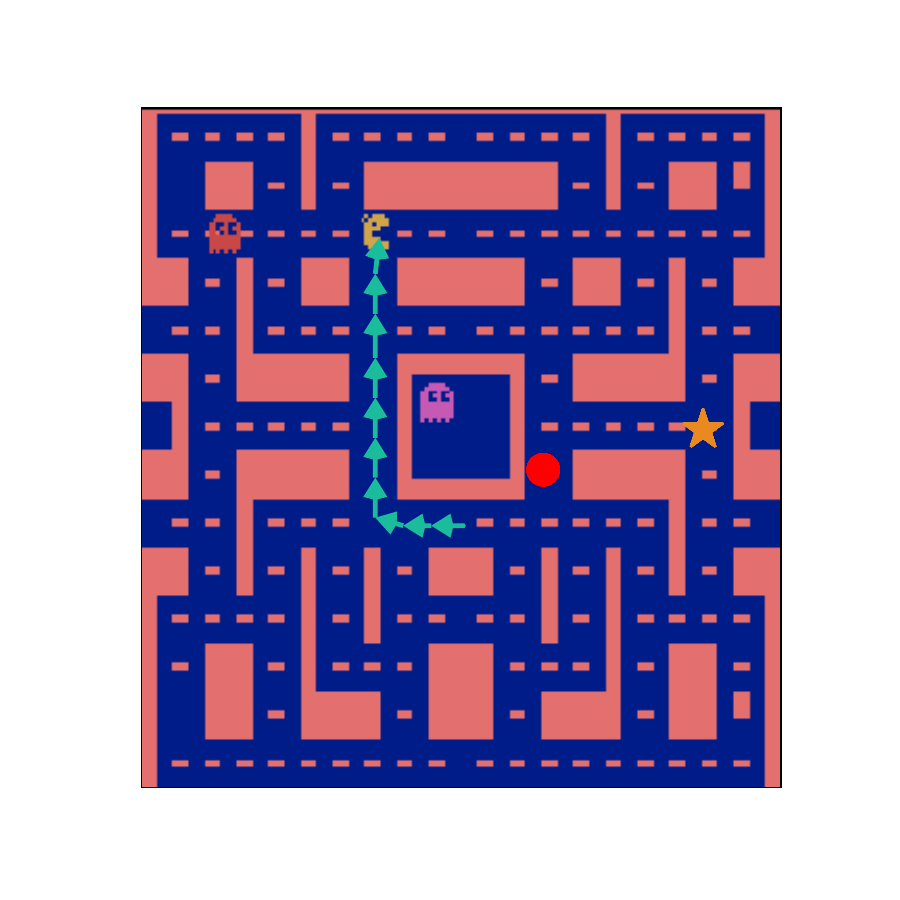} 
     \includegraphics[width=0.19\textwidth, trim = {1cm 1cm 1cm 1cm}]{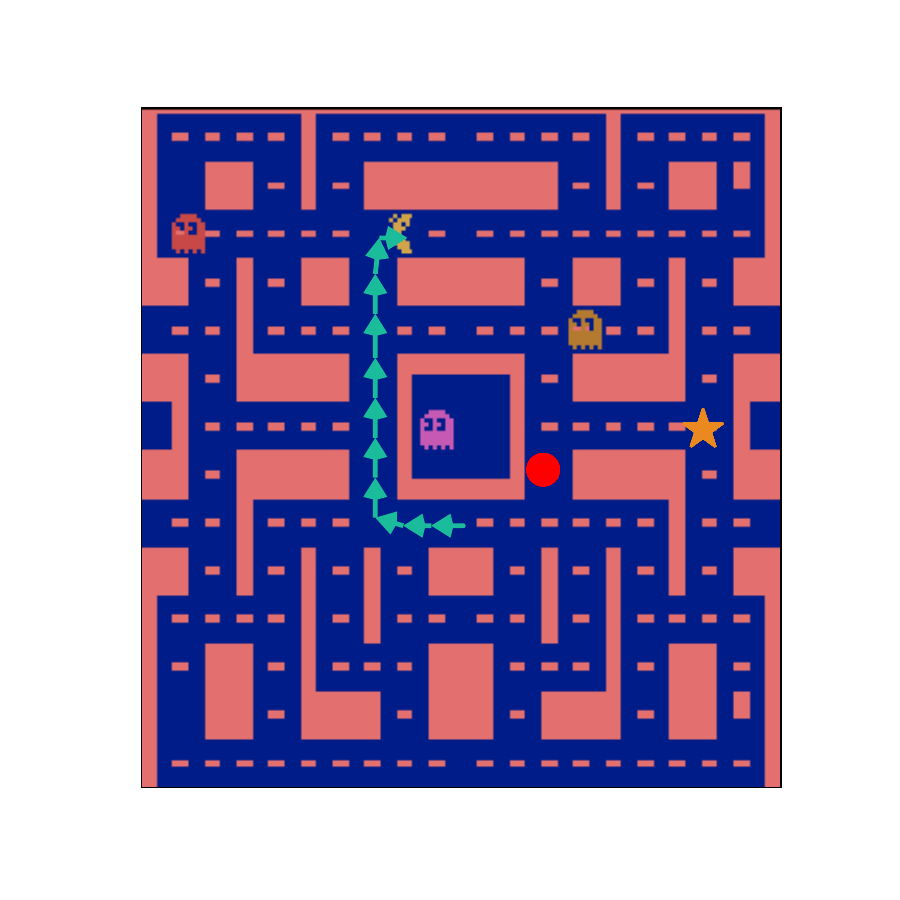}
     \includegraphics[width=0.19\textwidth,  trim = {1cm 1cm 1cm 1cm}]{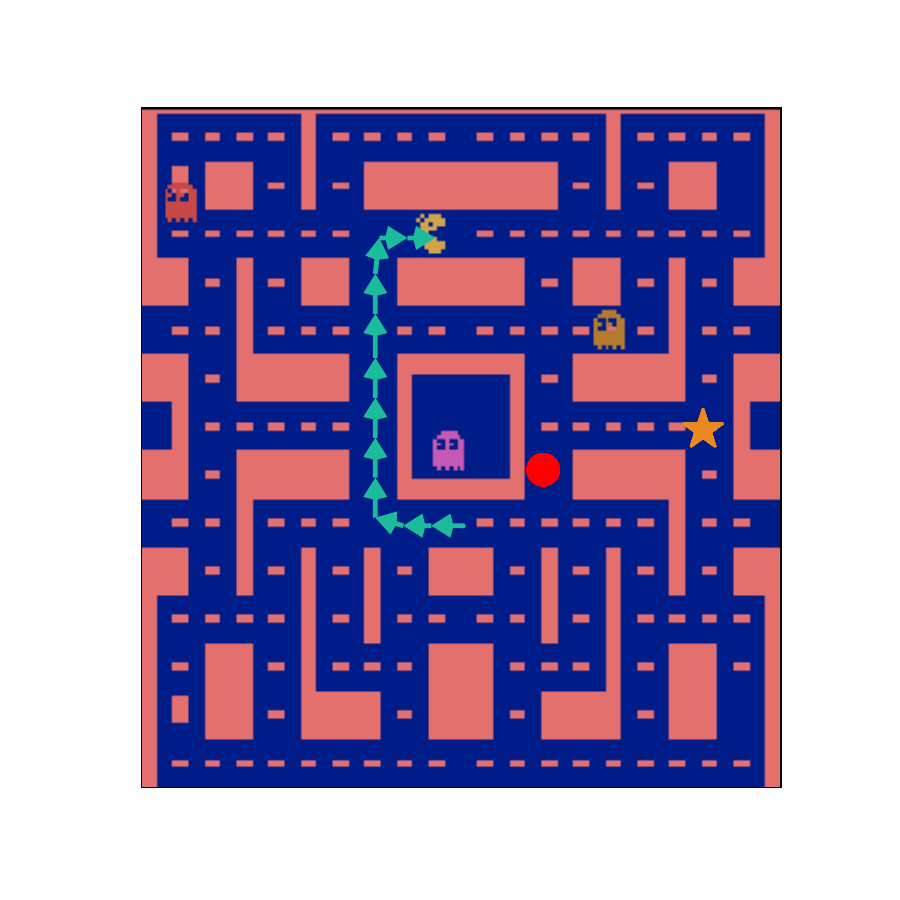} 
     \includegraphics[width=0.19\textwidth,  trim = {1cm 1cm 1cm 1cm}]{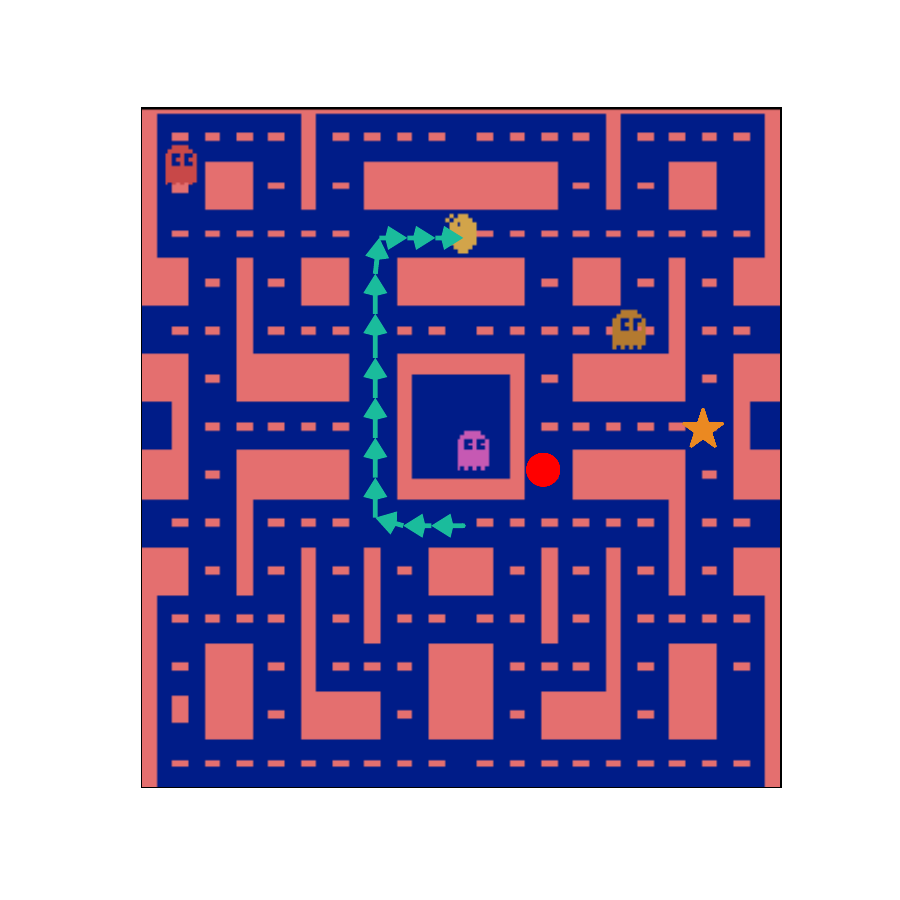} 
     \includegraphics[width=0.19\textwidth,  trim = {1cm 1cm 1cm 1cm}]{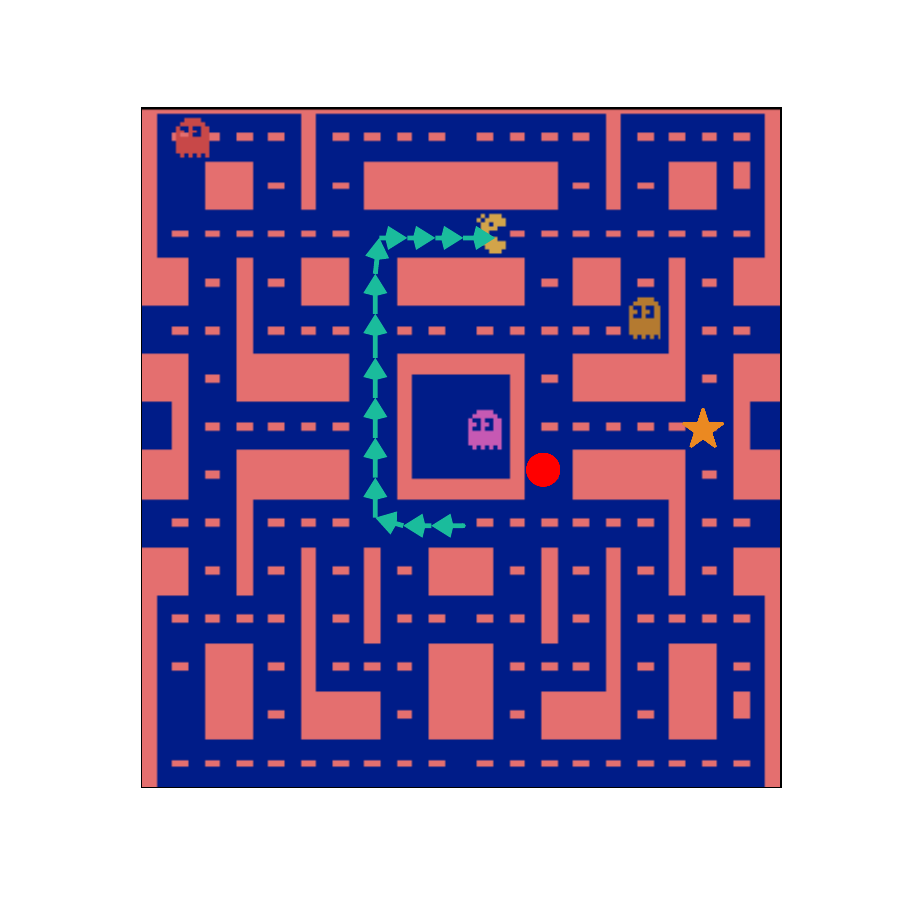}
    \includegraphics[width=0.19\textwidth,  trim = {1cm 1cm 1cm 1cm}]{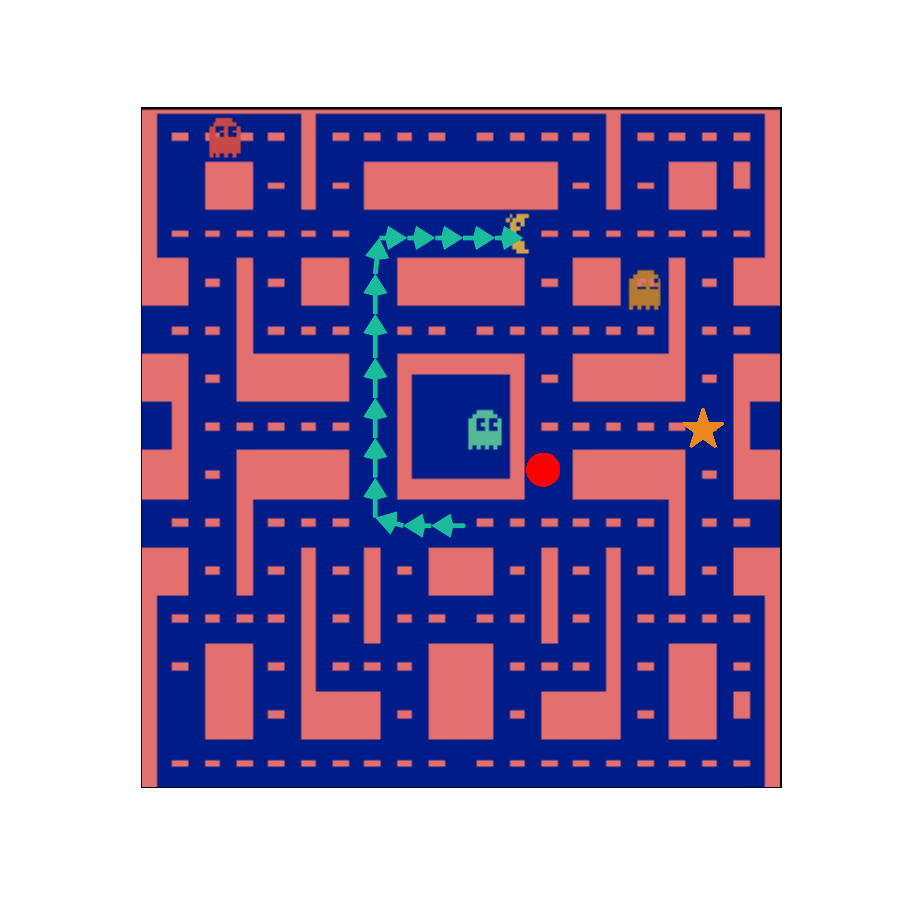} 
     \includegraphics[width=0.19\textwidth,  trim = {1cm 1cm 1cm 1cm}]{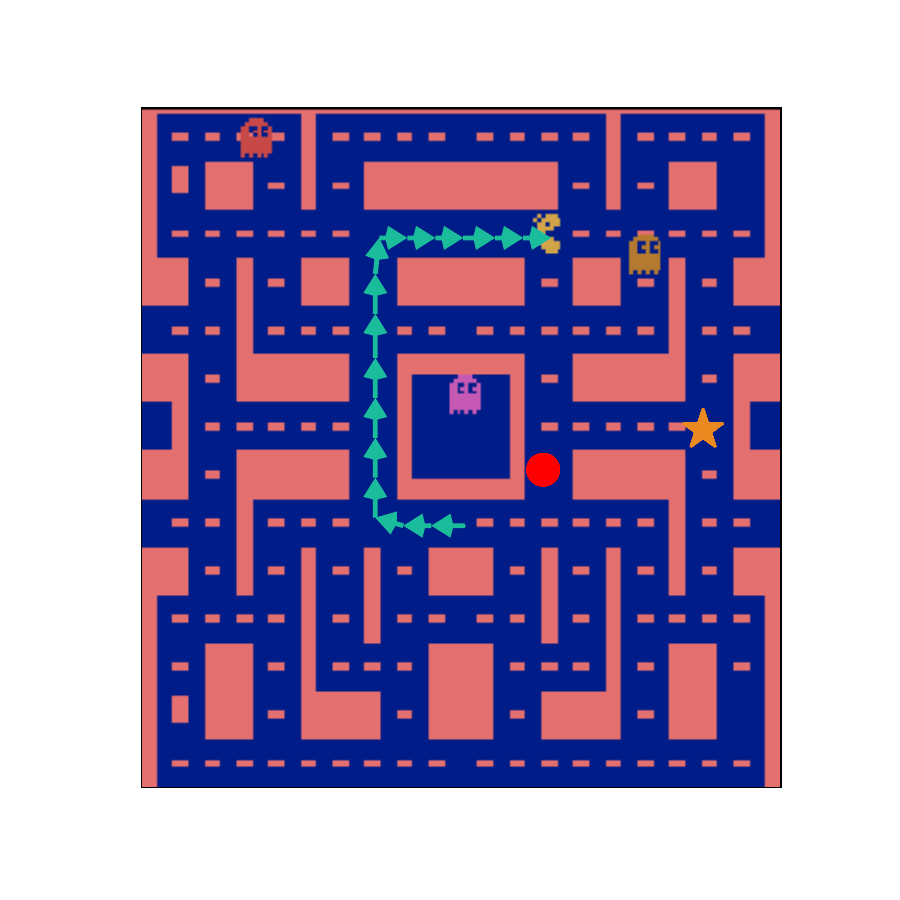}
     \includegraphics[width=0.19\textwidth,  trim = {1cm 1cm 1cm 1cm}]{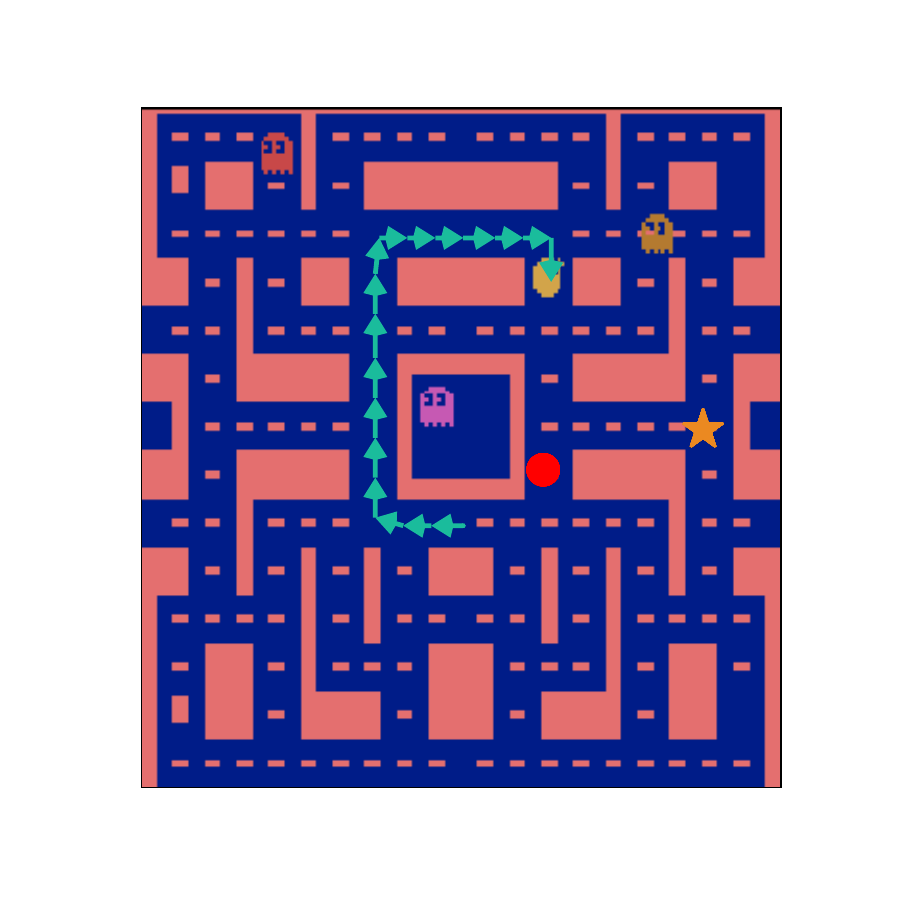} 
     \includegraphics[width=0.19\textwidth,  trim = {1cm 1cm 1cm 1cm}]{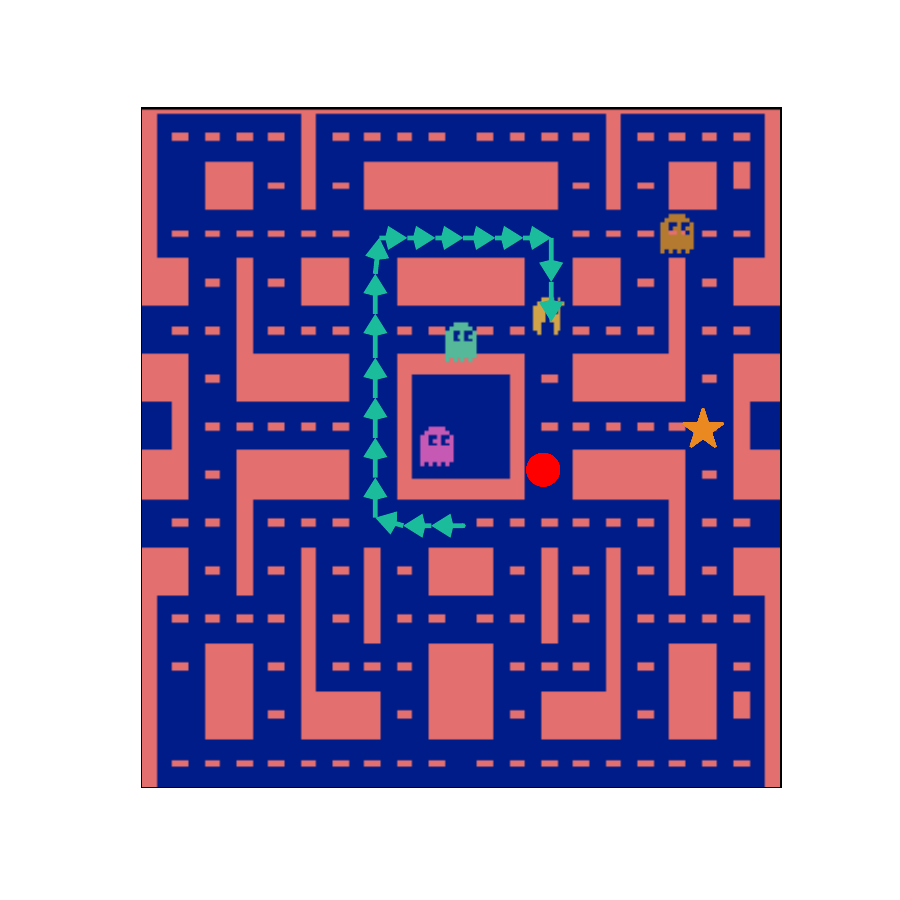}  
     \includegraphics[width=0.19\textwidth,  trim = {1cm 1cm 1cm 1cm}]{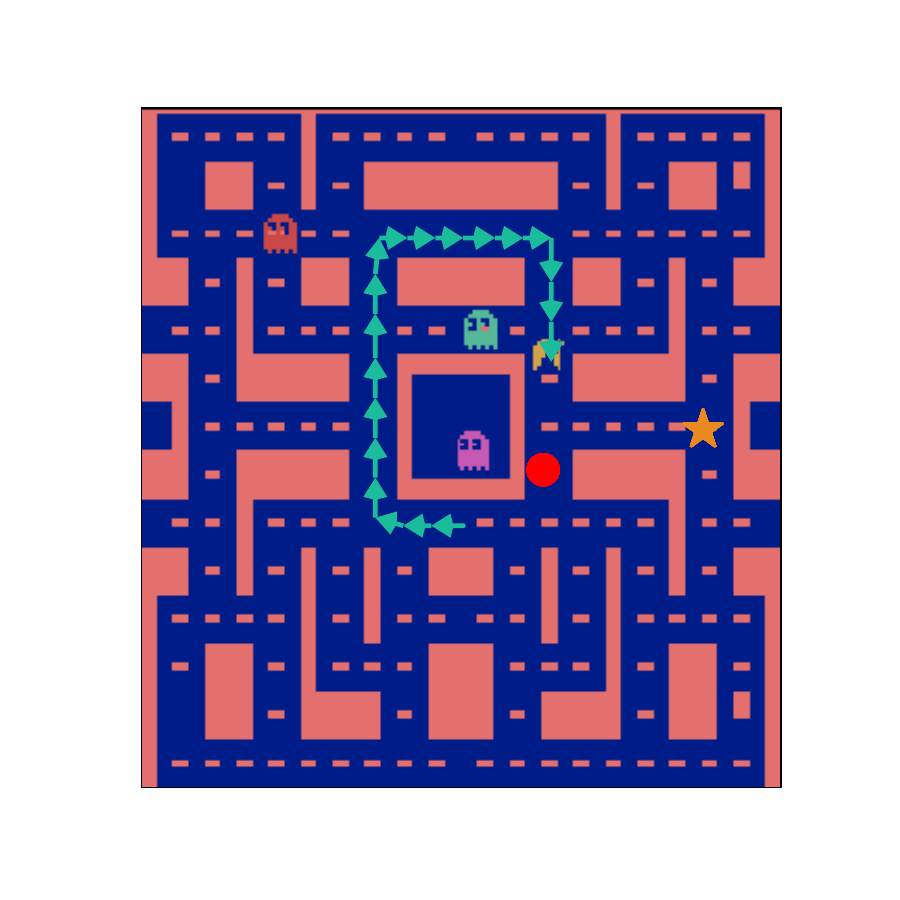}
    \includegraphics[width=0.19\textwidth,  trim = {1cm 1cm 1cm 1cm}]{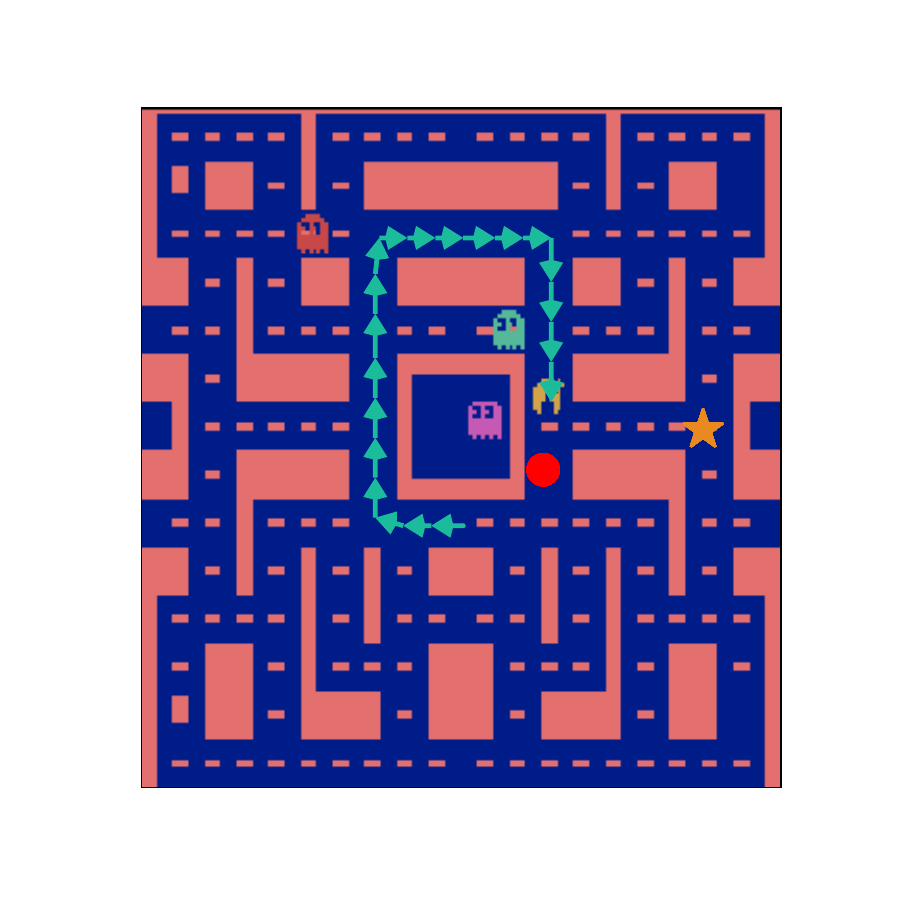} 
     \includegraphics[width=0.19\textwidth,  trim = {1cm 1cm 1cm 1cm}]{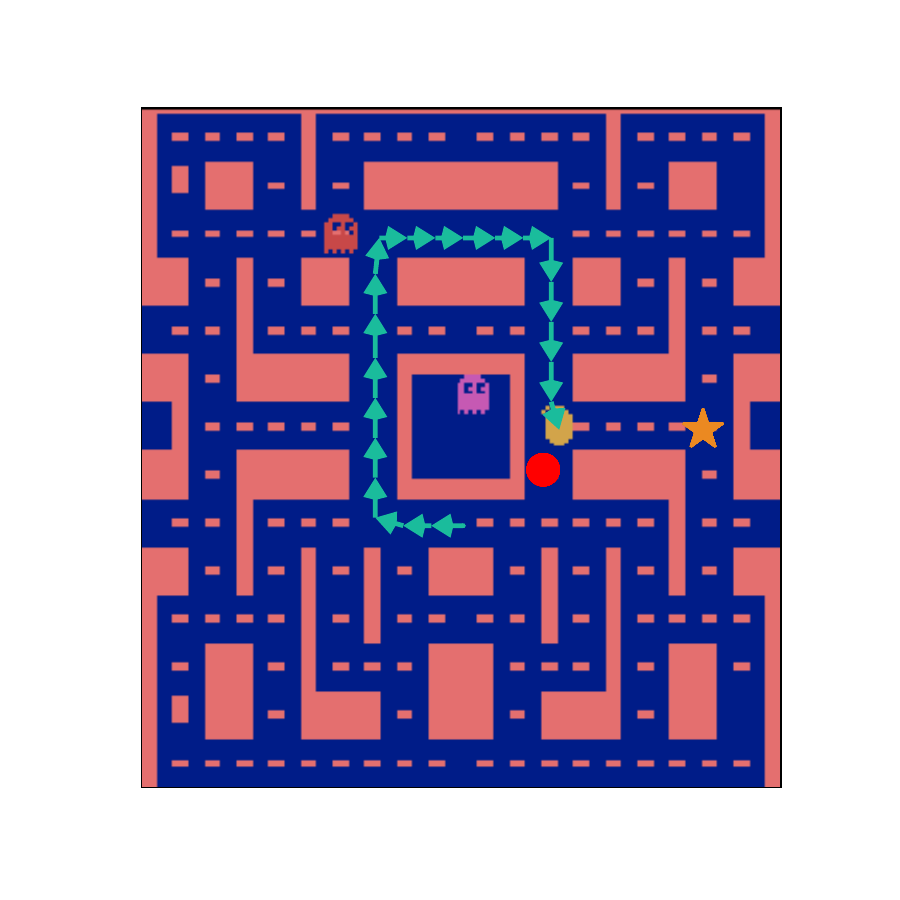}
     \includegraphics[width=0.19\textwidth,  trim = {1cm 1cm 1cm 1cm}]{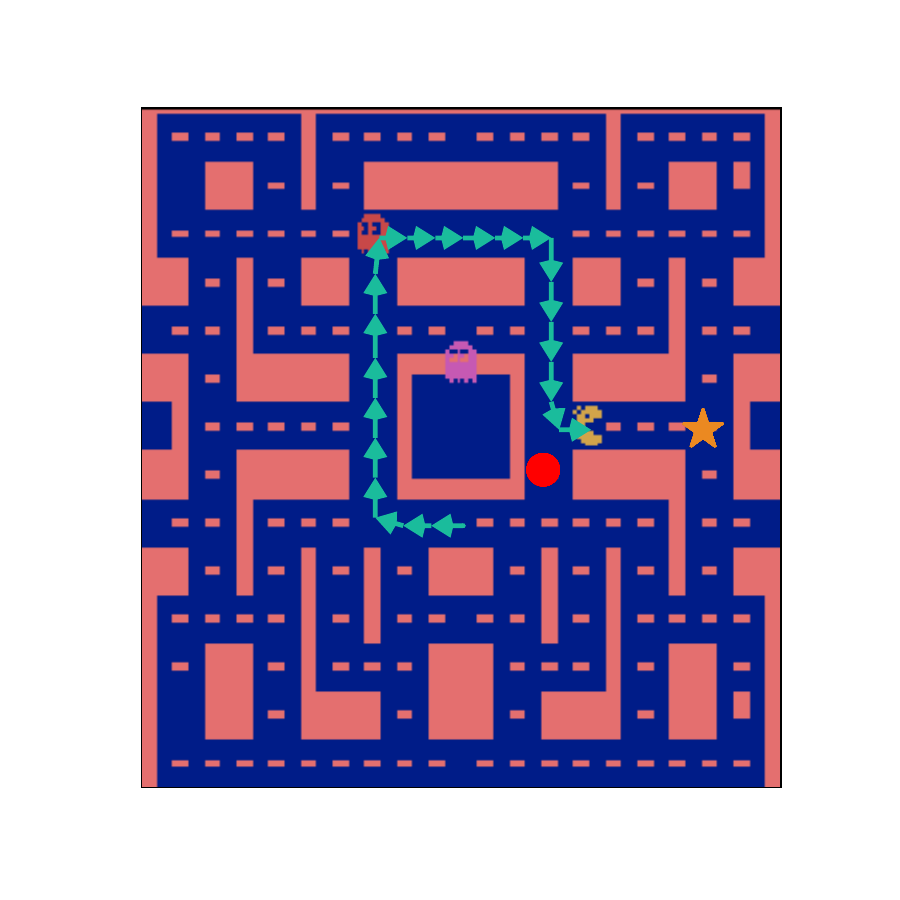} 
     \includegraphics[width=0.19\textwidth,  trim = {1cm 1cm 1cm 1cm}]{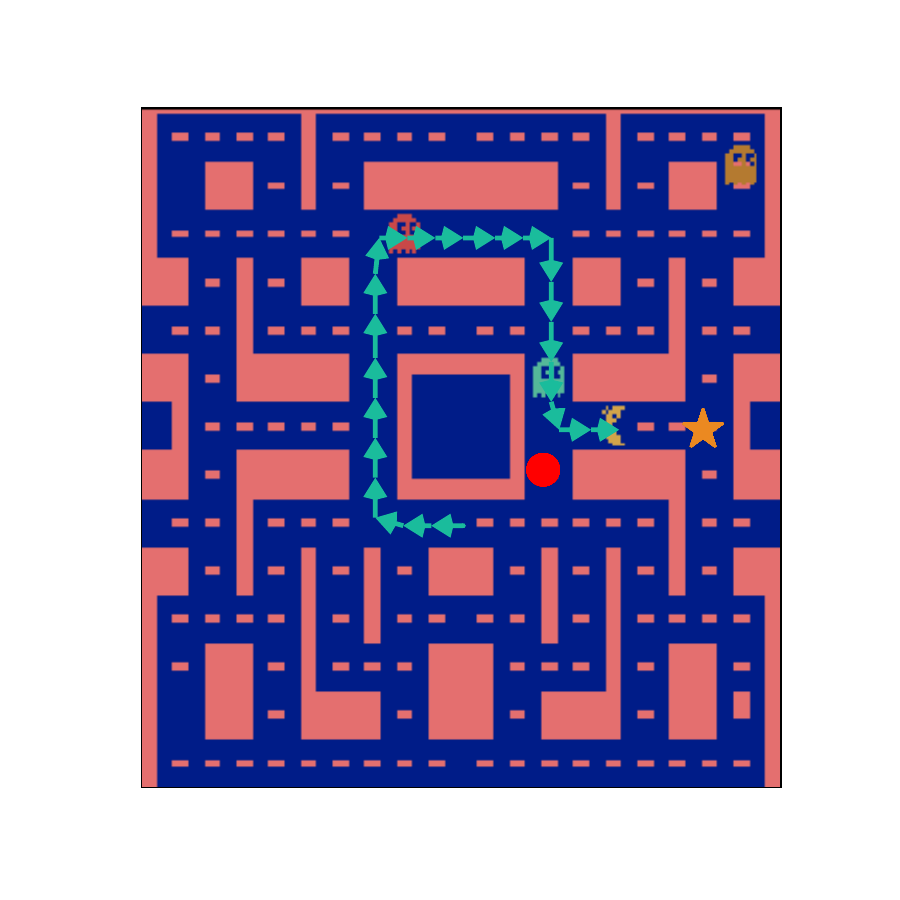} 
     \includegraphics[width=0.19\textwidth,  trim = {1cm 1cm 1cm 1cm}]{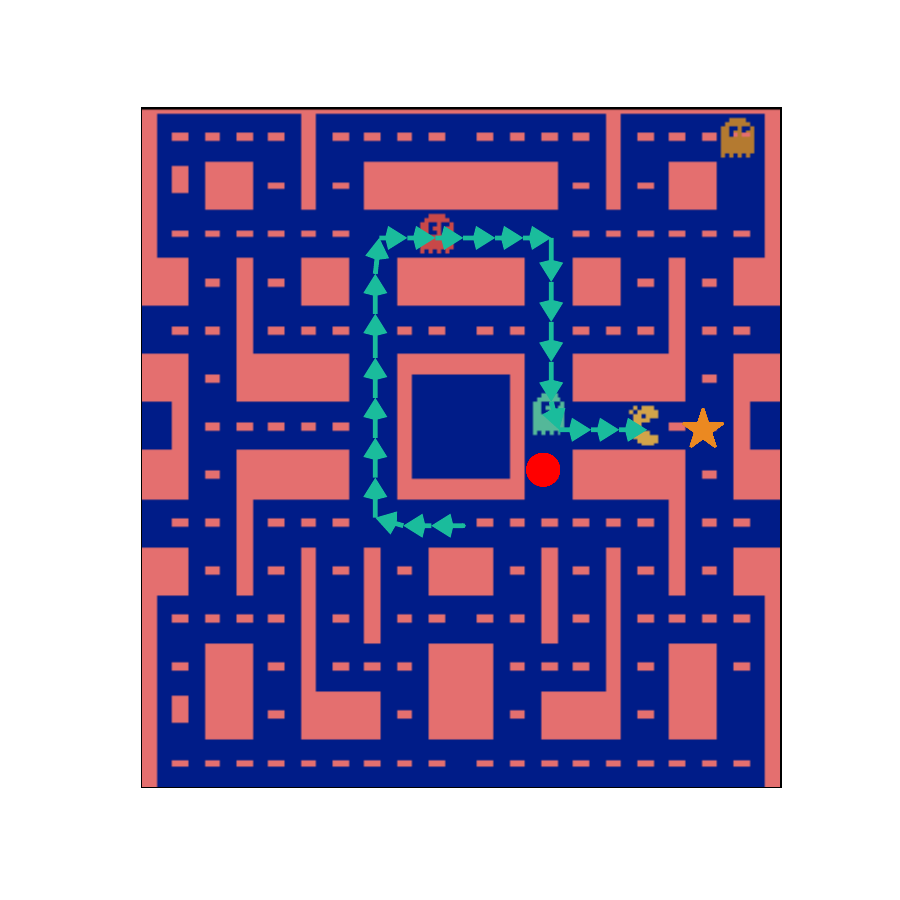}
    \includegraphics[width=0.19\textwidth,  trim = {1cm 1cm 1cm 1cm}]{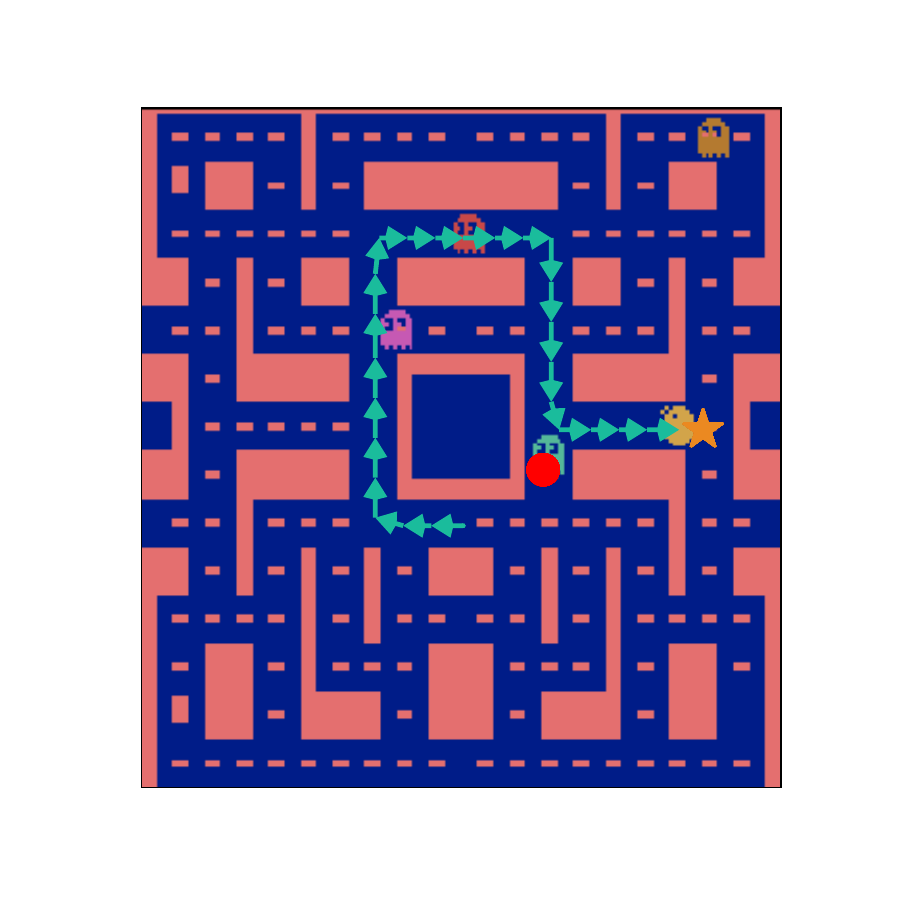} 
    \caption{ Full series of frames in Ms.\ Pacman along the trajectory generated by
    the
    $\transp{F}B$ policy for the task of reaching a target position (star
    shape  \score{1}{1}) while avoiding forbidden positions (red shape
    \protect\tikz\protect\draw[red,fill=red] (0,0) circle (.7ex);).
    }
    \label{fig:fullpacmanframes}
\end{figure}

\subsubsection{Embedding Visualization}

\begin{figure}[h!]
    \centering
    \includegraphics[width=0.3\textwidth, trim = {1cm 1cm 1cm 1cm}]{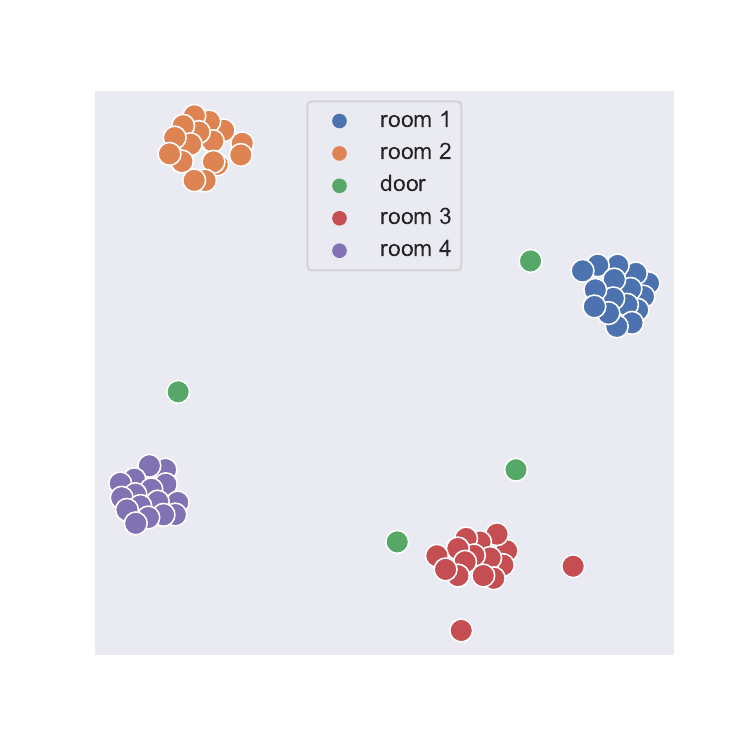} \hspace{10pt}
    \includegraphics[width=0.3\textwidth, trim = {1cm 1cm 1cm 1cm}]{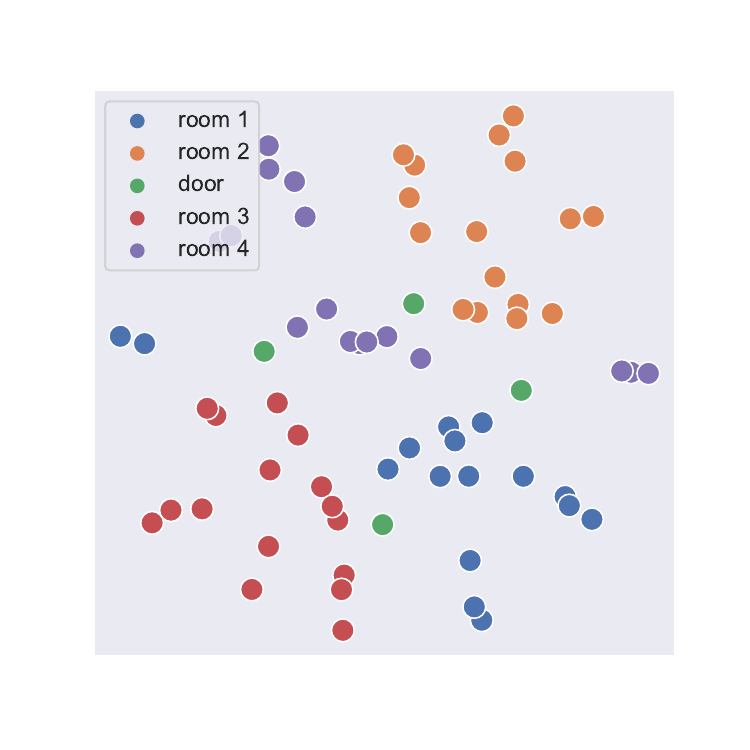}
    \caption{\textbf{Discrete maze}: Visualization of FB embedding vectors after projecting them in two-dimensional space
    with t-SNE. \textbf{Left}: the $F$
    embedding for $z=0$. \textbf{Right}: the $B$ embedding. Note how both embeddings recover the foor-room and door structure of the original environment. The spread of B embedding is due to the regularization that makes B closer to orthonormal.}
\end{figure}

\begin{figure}[h!]
    \centering
    \includegraphics[width=1\textwidth]{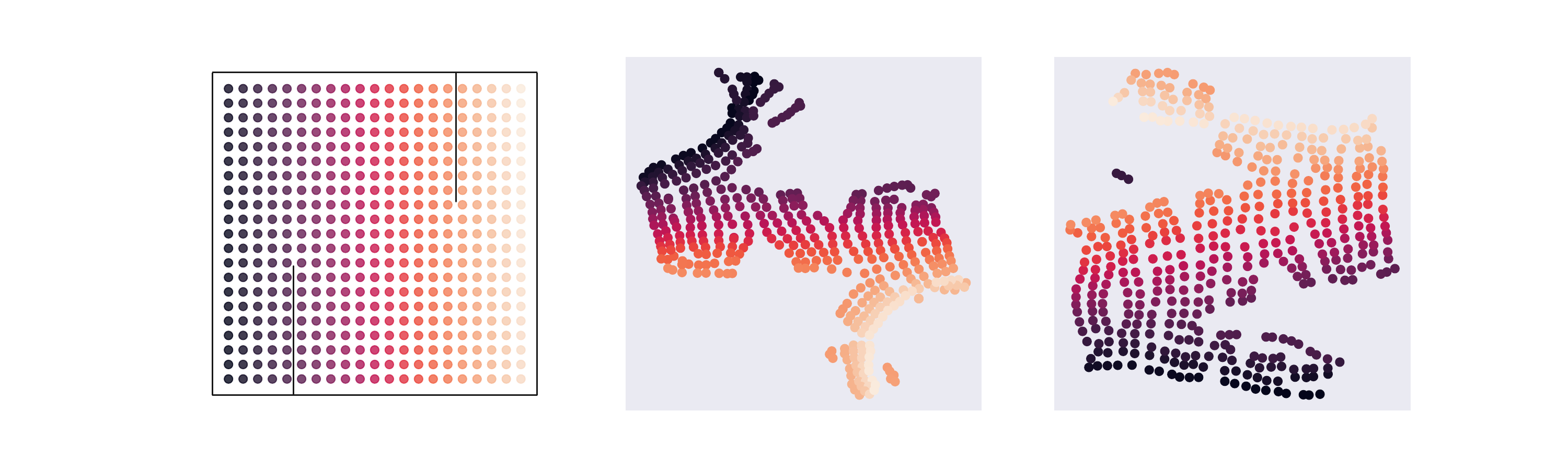} \hspace{10pt}
    \caption{\textbf{Continuous maze}: Visualization of FB embedding vectors after projecting them in two-dimensional space
    with t-SNE. %~\cite{van2008visualizing}. 
    \textbf{Left}: the states to be mapped. \textbf{Middle}: the $F$
    embedding. \textbf{Right}: the $B$ embedding.}
\end{figure}

\begin{figure}[h!]
    \centering
    \includegraphics[width=1\textwidth]{} \hspace{10pt}
    \caption{\textbf{Ms.\ Pacman}: Visualization of FB embedding vectors after projecting them in two-dimensional space
    with t-SNE. %~\cite{van2008visualizing}. 
    \textbf{Left}: the agent's position corresponding to the state to be mapped. \textbf{Middle}: the $F$
    embedding for $z=0$. \textbf{Right}: the $B$ embedding. Note how both embeddings recover the cycle structure of the environment. F acts on visual inputs and B acts on the agent's position.}
\end{figure}

\begin{figure}[h!]
    \centering
    \includegraphics[width=1\textwidth]{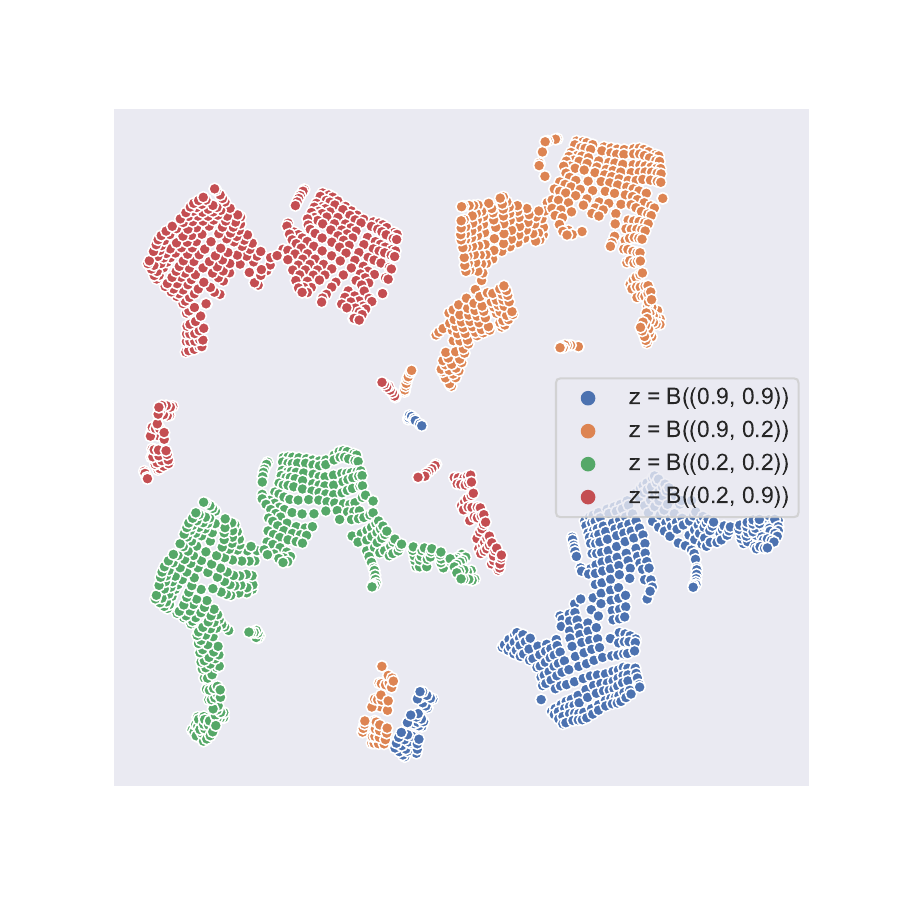} \hspace{10pt}
    \caption{\textbf{Continuous maze}: visualization of $F$ embedding
    vectors for different $z$ vectors, after projecting them in two-dimensional space
    with t-SNE. }
\end{figure}

\end{document}